\documentclass{article}

\usepackage[accepted]{icml2026}

\usepackage{microtype}
\usepackage{amsmath,amssymb,amsfonts,amsthm,bm}
\usepackage{mathtools}
\usepackage{array}
\usepackage{booktabs}
\usepackage{algorithm}
\usepackage{algorithmic}
\usepackage{graphicx}
\usepackage{enumitem}

\usepackage{natbib}
\usepackage{xurl}
\usepackage[hidelinks]{hyperref}
\usepackage{cleveref}

\DeclareMathOperator{\KL}{KL}
\DeclareMathOperator{\Alt}{Alt}
\DeclareMathOperator{\gap}{gap}
\DeclareMathOperator{\Reg}{Reg}
\DeclareMathOperator{\Var}{Var}
\newcommand{\R}{\mathbb{R}}
\newcommand{\E}{\mathbb{E}}
\newcommand{\Prob}{\mathbb{P}}
\newcommand{\argmax}{\mathop{\mathrm{argmax}}}
\newcommand{\argmin}{\mathop{\mathrm{argmin}}}

\newtheorem{theorem}{Theorem}[section]
\newtheorem{lemma}[theorem]{Lemma}
\newtheorem{proposition}[theorem]{Proposition}
\newtheorem{corollary}[theorem]{Corollary}
\newtheorem{Fact}[theorem]{Fact}
\newtheorem{assumption}{Assumption}
\theoremstyle{definition}
\newtheorem{definition}[theorem]{Definition}
\newtheorem{example}[theorem]{Example}
\newtheorem{remark}[theorem]{Remark}

\icmltitlerunning{Optimal Top-$k$ Identification from Pairwise Comparisons}

\begin{document}

\twocolumn[
\icmltitle{Optimal Top-$k$ Identification from Pairwise Comparisons}

\begin{icmlauthorlist}
\icmlauthor{Motti Goldberger}{yale}
\icmlauthor{Nils Rudi}{yale}
\end{icmlauthorlist}

\icmlaffiliation{yale}{Yale University, New Haven, CT, USA}

\icmlcorrespondingauthor{Motti Goldberger}{motti.goldberger@yale.edu}

\vskip 0.3in
]

\printAffiliationsAndNotice{}

\begin{abstract}
We study the active learning problem of fixed-confidence top-$k$ identification from noisy pairwise comparisons. In this problem, an algorithm sequentially chooses pairs of items to compare, observes the outcomes, and stops when it can return the set of top-$k$ items with error probability at most $\delta$. The objective is to design such a \emph{$\delta$-correct} procedure that minimizes the expected number of comparisons (the sample complexity). 
This problem falls within the broader literature on fixed-confidence pure exploration in bandit models, where a common target is asymptotic optimality: the algorithm's expected sample complexity matches the information theoretic lower bound as $\delta \to 0$. Asymptotically optimal procedures have been developed for a range of fixed-confidence pure-exploration problems, however to the best of our knowledge, for top-$1$, or more generally top-$k$ identification from pairwise comparisons under latent utility models an asymptotically optimal algorithm has not been established. In this setting, we develop such an algorithm. We characterize the structure of the lower bound and formulate it as a saddle-point problem. This structure enables a computationally efficient primal--dual procedure that learns the asymptotically optimal comparison allocation online. We then construct an adaptive comparison-allocation algorithm that tracks the allocation learned by the primal--dual procedure and prove it is asymptotically optimal.
\end{abstract}

\section{Introduction}
The objective of identifying the top-$k$ items from a set of candidates using noisy pairwise comparisons arises in many settings. Consider a practitioner who wants to determine which large language model 
is best for their task. They open Arena \citep{chiang2024chatbotarena} (formerly Chatbot Arena),
a public platform for evaluating LLMs through crowdsourced pairwise human preferences, and scan the leaderboard. The leaderboard ranks all relevant models, but the practitioner only wants to consider the top few on the list.

Additional settings that use pairwise comparisons for top-$k$ identification include: crowdsourced pairwise comparisons to identify the best photographs, translations, or annotations 
\citep{Narimanzadeh2023PairwiseCrowdsourcing,Kou2017CrowdsourcedTopk,
Chen2013PairwiseCrowdAgg}; recommender systems that elicit pairwise feedback 
to identify a small set of items to display to a user 
\citep{Kalloori2018ElicitingPairwise}; and sports tournaments. 

Top-$k$ identification serves as the natural objective in two distinct 
settings. In the first setting, the top-$k$ set is the desired output itself. For example, the single best LLM is selected $(k=1)$, a funding program supports the top five proposals, or a streaming service displays a 
top ten list. In the second setting, top-$k$ identification is the first stage of a two-stage process, with the second stage using a different and/or more expensive signal. For example, Arena could be used to narrow the 
field of candidate models to $k$, after which a more careful 
application-specific evaluation is conducted on the finalists. The top-$k$ identification in the first stage can be essential when the time and resources required for a second stage with all items is prohibitively large.

In both settings, comparisons cost money, time, and/or scarce human attention, which makes adaptivity valuable. Rather than fixing a 
static sampling plan up front, an experiment designer can adaptively choose which pair to compare next based on outcomes observed so far, allocating effort on comparisons that are most informative for distinguishing the top-$k$ items from the rest. We study this problem in the fixed-confidence setting: an experiment designer adaptively chooses which pair to compare next, observes the outcome, and stops only once they can return the correct top-$k$ set with probability at least $1-\delta$. The objective is to design such a procedure that minimizes the expected number of comparisons. 

A key modeling decision is what to assume about the relationship between the relative ordering of items and their pairwise win probabilities. Some work makes minimal assumptions, allowing situations where $i \succ j$ while $\Prob(i\ \text{beats}\ j) < \tfrac{1}{2}$; see, e.g., \citet{haddenhorst2021gcw,xu2019qualitative}.
Other work assumes some form of stochastic transitivity, which rules out such reversals; see, e.g., \citet{braverman2019sorted,shah2017stochastic}.
Finally, latent-utility models assume that items are ordered by unobserved utilities $\bm{\theta} = (\theta_1,\ldots,\theta_n)$.
Under this model there is a nondecreasing link function $f$ such that, for every pair $(i,j)$,
$\Prob(i\ \text{beats}\ j)=f(\theta_i-\theta_j)$; see, e.g., \citet{saha2020instanceoptimalpl,saha2019pacpl}.

In this paper, we assume a latent-utility model. Unlike much of the pairwise-comparison and dueling bandits literature which only considers binary outcomes, our algorithm also applies when comparison outcomes are cardinal. In this case, a comparison returns a numerical value indicating how strongly one item is preferred, for example, a score difference or a difference in measured performance.

Prior work on fixed-confidence best-arm (item) identification establishes information-theoretic lower bounds on the asymptotic sample complexity \citep{kaufmann2016complexity,garivier2016optimal}, which is easily extended to our setting. Algorithms that match this lower bound as $\delta\to 0$ are called asymptotically optimal. The lower bound is governed by a max--min oracle allocation problem. If the true parameter vector $\bm{\theta}$ were known, one could allocate pairwise comparisons so that the resulting data would rule out all parameter vectors whose top-$k$ set differs from the true top-$k$ set---so-called ``alternative'' parameters. The optimal oracle allocation maximizes the rate at which the most difficult alternative can be ruled out. 

\textbf{Contributions.} For top-$k$ identification with pairwise comparisons, we characterize the structure of this oracle problem. Any alternative parameter vector $\bm{\theta}'$ must reverse the ordering of at least one pair $(i,j)$ with $i$ in the true top-$k$ and $j$ outside it. We call such a pair a \emph{boundary pair}. We show that the oracle problem can be formulated as a two-player game.
A designer allocates comparisons across all $\binom{n}{2}$ pairs, while an adversary chooses which of the $k(n-k)$ boundary pairs to ``target''.
The equilibrium characterizes both the asymptotically optimal sampling strategy and the sample complexity.
We construct a comparison allocation procedure based on a primal--dual algorithm that learns this equilibrium online, and prove it is asymptotically optimal.

\textbf{Organization.}
Section~\ref{sec:litrev} reviews the literature on top-$k$ identification from pairwise comparisons as well as related work on optimal pure exploration in bandit models. Section~\ref{sec:model} formalizes the model and objective, and states the information-theoretic lower bound. Section~\ref{sec:oracle} analyzes the oracle allocation problem. Section~\ref{sec:algorithm} presents the online comparison allocation algorithm, and Section~\ref{sec:theory} proves its guarantees. Section~\ref{sec:experiments} demonstrates the algorithm's performance on simulated instances. 
\section{Related Work}\label{sec:litrev}
\subsection{Top-$k$ Identification from Pairwise Comparisons}
There is a large body of work on sequential learning from noisy pairwise comparisons; see \citet{bengs2021duelingSurvey} for a survey. This includes the extensive literature on top-$1$ identification, an important special case of fixed confidence top-$k$ identification, which is our focus. We will refer to two standard notions of fixed-confidence guarantees: \emph{$\delta$-correct}, meaning the exact target is returned with probability at least $1-\delta$, and \emph{probably approximately correct} (PAC), meaning that with probability at least $1-\delta$ the algorithm returns a near-optimal target.

Early work on adaptive top-$k$ selection from comparisons includes \citet{busaFekete2013topk}, who propose a preference-based racing algorithm for selecting the top-$k$ items using pairwise comparisons.
They do not make assumptions about the structure of pairwise preferences, and the ``top-$k$'' set is defined through a chosen ranking rule (e.g., Copeland's ranking).

In contrast, and closer to our work, \citet{ren2020bestk_pairwise} studies \emph{$\delta$-correct} top-$k$ selection
assuming strong stochastic transitivity (SST) together with a
stochastic triangle inequality (STI); see \citet{ren2020bestk_pairwise}
for formal definitions. They develop an algorithm,  SEEKS, that uses an elimination scheme for top-$k$ selection. In each round, it first
chooses a pivot item that is ``close'' to the current $k$-th best item, compares items to that pivot, and then assigns them into
three groups: clearly above the pivot, ambiguous, and clearly below
it. Items in the first group are ``accepted'' into the top-$k$, and items in the third group are eliminated. Items in both of these groups are no longer considered for future comparisons. The algorithm then repeats on the unresolved items in the second group and stops once $k$ items have been accepted, or $n-k$ items have been eliminated. 
They prove that SEEKS has a sample complexity of
$O\!\left(\sum_{i=1}^n \Delta_i^{-2}\bigl(\log(n/\delta)
+ \log\log \Delta_i^{-1}\bigr)\right)$ where $\Delta_i$ is a measure of distance of item $i$ to the $k$-th
(or $(k+1)$-th) best item. 
They also derive a lower bound on the expected number of samples required for any algorithm to be $\delta$-correct,
$\Omega\!\left(\sum_{i=1}^n \Delta_i^{-2}\log(1/\delta)
+ \log\log \Delta_{r_k}^{-1}\right)$. However, while their model is formulated under SST and STI, the lower bound is proven under the following assumptions:
(i) comparisons are generated from a Thurstone model i.e., \
$\Prob(i\ \text{beats}\ j) = \Prob(\theta_i + Z_1 > \theta_j + Z_2)$ with $Z_1$ and
$Z_2$ independent Gaussian noise with variance $1$, (ii)
$\delta \in (0,1/100)$, (iii)
$\theta_1,\ldots,\theta_n \in [0,1]$. Thus, under these conditions, they show SEEKS is optimal up to a $\log n$ factor, but under more general conditions, the optimality gap may be larger. In contrast, our optimality guarantee is only in the asymptotic regime $\delta \to 0$, but we match the lower bound exactly (including multiplicative constants).

Top-$k$ identification has also been studied under parametric latent-utility models, specifically the multinomial logit (MNL) model, where queries are listwise. At each round, the learner selects a set of $\ell\ge 2$ items and observes which item is most preferred in that set.
\citet{chen2018nearly_instance_opt_mnl} give an algorithm that is optimal up to polylog factors, however, their algorithm guarantees correctness with ``high probability'', but the risk level $\delta$ is not an input parameter. \citet{ren2018pacranking} study PAC top-$k$ selection with both pairwise and listwise queries.

Beyond top-$k$ selection, there is also work that studies adaptive comparison strategies for learning rankings, with top-$k$ identification as a special case; see, e.g., \citet{heckel2019active,mohajer2017active}.
Additionally, there is a large literature on non-adaptive top-$k$ recovery from pairwise data under Bradley--Terry / Plackett--Luce models (see, e.g., \citet{chen2019spectral_regularized_mle,negahban2017rankcentrality,jang2016topk,khetan2016rankbreaking,chena15_spectralmle,hajek2014minimax}). These works study the accuracy of computationally efficient estimators of the top-$k$ items under prespecified sampling designs. 

\subsection{Pure Exploration and Oracle-Tracking Methods}
In the bandit literature, \emph{pure exploration} problems are those in which samples are collected to inform a terminal decision, rather than to maximize the rewards obtained during sampling. For the most studied pure exploration task, best-arm identification, \citet{garivier2016optimal} propose the Track-and-Stop framework: a sampling rule that tracks the optimal allocation that governs the lower bound on the sample complexity, together with a stopping rule, and prove optimality as $\delta\to 0$.
\citet{degenne2019solving_games} formulate this same max--min lower-bound program as a two-player game between an experiment designer and nature, and develop online tracking schemes that learn the saddle point without solving the full max--min program at every round.

These ideas have been extended to structured bandit models. For linear bandits, \citet{degenne2020gamification} obtain asymptotically optimal
fixed-confidence algorithms by this same ``gamification'' idea, and \citet{jedraproutiere2020optimal_linear} show that a
Track-and-Stop approach can be made computationally scalable via lazy updates of the optimal allocation.

\citet{wang2021fws} propose a general Frank-Wolfe oracle-tracking method that can be applied to several pure-exploration settings.
Their appendix includes an application to top-$k$ identification for dueling bandits under a nonparametric preference-matrix model. However, instantiating the Frank--Wolfe updates requires identifying the (near-)most confusing alternative parameters, which in their case is combinatorial in the number of items. In our latent-utility model, the alternative parameters reduce to parameters that reverse the ranking of boundary pairs, yielding an online learner with computationally efficient updates.

\section{Model and Lower Bound}\label{sec:model}
In this section, we formalize the sequential comparison model and state an instance-dependent lower bound
on the expected sample complexity, which follows from
\citet{kaufmann2016complexity,garivier2016optimal}.

\subsection{Pairwise Comparison Model}\label{subsec:model}
Consider $n$ items indexed $i \in \{1,\ldots,n\}$ each with an unknown utility $\theta_i$. Comparison outcomes depend on utility differences, so $\bm{\theta}$ is identifiable only up to an additive constant. We assume that the set of parameters is bounded by some constant $R$ and define the parameter space
\begin{equation*}
\Theta := \{\bm{\theta} \in \R^n:\  \textstyle\sum_{i=1}^n \theta_i = 0, \|\bm{\theta}\|_\infty\le R\}.
\end{equation*}
Let $\mathcal{P} := \{(i,j) : 1 \le i < j \le n\}$ denote the set of pairs.
For each $(i,j)\in\mathcal P$, define $\bm{x}_{ij}:=\bm{e}_i-\bm{e}_j$ where $\bm{e}_i$ is the $i$-th standard basis vector and the natural parameter
\(
  \eta_{ij}(\bm{\theta}) := \bm{x}_{ij}^\top\bm{\theta} = \theta_i-\theta_j.
\)
Comparison outcomes follow a one-parameter exponential family with parameter $\eta_{ij}(\bm{\theta})$:
there exist functions $T:\mathcal Y\to\R$, $A:\R\to\R$, and $B:\mathcal Y\to\R$ such that, for any comparison outcome $y\in\mathcal Y$,
\[p(y \mid (i,j), \bm{\theta})
  = \exp\bigl\{\eta_{ij}(\bm{\theta})\,T(y) - A(\eta_{ij}(\bm{\theta})) + B(y)\bigr\}.\]
\begin{assumption}\label{ass:expfam-global}
$A$ is twice continuously differentiable on $\R$, and
\[
0 < A''(\eta)\ \le\ \overline\sigma^2 \quad \text{for all }\eta\in\R
\]
where $\overline\sigma^2<\infty$. Define $\underline a \ :=\ \inf_{|\eta|\le 2R} A''(\eta)\ >\ 0$.
\end{assumption}

\begin{assumption}\label{ass:subg}
There exists $\sigma^2<\infty$ such that for all $\eta \in [-2R, 2R]$ and all $\lambda\in\R$,
\[
  \E_\eta\!\left[\exp\{\lambda(T(Y)-A'(\eta))\}\right]
  \le \exp\!\left(\frac{\sigma^2\lambda^2}{2}\right).
\]
\end{assumption}
We highlight two standard models that satisfy our assumptions on $\Theta$ and illustrate the binary and cardinal observation regimes.

\begin{example}[Bradley--Terry]\label{ex:bt}
Let $Y\in\{0,1\}$ be the random variable indicating whether item $i$ beats item $j$, then given $\bm{\theta}$,
\[
  Y \sim \text{Bernoulli}\left(\frac{1}{1+e^{-(\theta_i-\theta_j)}}\right).
\]
\end{example}

\begin{example}[Gaussian differences]
Let $Y\in \R$ be the random variable denoting the observed difference between $i$ and $j$, then given $\bm{\theta}$, \[Y \sim N(\theta_i-\theta_j,\sigma_0^2)\] where $\sigma_0^2$ is known.
\end{example}

\subsection{Fixed-Confidence top-$k$ Identification}\label{subsec:objective}
Fix $k \in \{1, \ldots, n-1\}$ and let $\theta_{(1)} \ge \cdots \ge \theta_{(n)}$ denote
the order statistics of $\bm{\theta}$. Assume a positive top-$k$ gap, $\theta_{(k)} > \theta_{(k+1)}$
and define
\(  \Theta^{\mathrm{gap}} := \{\bm{\theta}\in\Theta:\ \theta_{(k)}>\theta_{(k+1)}\}.
\)
On $\Theta^{\mathrm{gap}}$, the top-$k$ set $S^*(\bm{\theta}) := \{i : \theta_i \ge \theta_{(k)}\}$
is unique. 
\begin{assumption}\label{ass:theta-bounded}
The true utility parameter satisfies $\bm{\theta}\in\mathrm{int}(\Theta)\cap \Theta^{\mathrm{gap}}$.
\end{assumption}

At each round $t=1,2,\ldots$ the algorithm selects a pair $(i_t,j_t)\in\mathcal P$ and observes
$Y_t\sim p(\cdot\mid (i_t,j_t),\bm{\theta})$.
Conditioned on the selected pairs, outcomes are independent. Formally, a sequential strategy consists of:
\begin{itemize}[noitemsep,topsep=0pt]
  \item a \emph{sampling rule}: a sequence $\{(i_t, j_t)\}_{t \ge 1}$ where $(i_t, j_t)$ is $\mathcal{F}_{t-1}$-measurable and
        $\mathcal{F}_t := \sigma((i_s, j_s, Y_s) : s \le t)$;
  \item a \emph{stopping rule}: a stopping time $\tau$ with respect to $(\mathcal{F}_t)$;
  \item a \emph{decision rule}: an $\mathcal{F}_\tau$-measurable subset $\hat{S}_\tau \subset [n]$ with $|\hat{S}_\tau| = k$.
\end{itemize}
\begin{definition}[$\delta$-correct]
A strategy is $\delta$-correct if for every $\bm{\theta} \in \Theta^{\mathrm{gap}}$,
\[
  \Prob_{\bm{\theta}}(\hat{S}_\tau \neq S^*(\bm{\theta})) \le \delta
  \quad \text{and} \quad
  \Prob_{\bm{\theta}}(\tau < \infty) = 1.
\]
\end{definition}

\subsection{Information-Theoretic Lower Bound}\label{subsec:lower-bound}
For a single comparison on pair $(i,j)$, let \(
  d_{ij}(\bm{\theta}, \bm{\theta}') := \KL(P_{\eta_{ij}(\bm{\theta})} \| P_{\eta_{ij}(\bm{\theta}')})
\) denote the KL divergence between outcome
distributions under $\bm{\theta}$ and $\bm{\theta}'$.
A \emph{sampling design} is a distribution $\bm{w} \in \Delta_{\mathcal{P}}$ over pairs, where
$\Delta_{\mathcal{P}} := \{\bm{w} \in \R_+^{|\mathcal{P}|} : \sum_{(i,j)} w_{ij} = 1\}$ ($w_{ij}$ is the proportion of comparisons allocated to $(i,j)$).
Under design $\bm{w}$, define
\[
  D_{\bm{w}}(\bm{\theta} \| \bm{\theta}') := \sum_{(i,j) \in \mathcal{P}} w_{ij}\, d_{ij}(\bm{\theta}, \bm{\theta}').
\]
Ultimately, our goal is to distinguish $\bm{\theta}$ from alternative parameters $\bm{\theta}'$ whose top-$k$ set is different.
Accordingly, for $\bm{\theta}\in\Theta^{\mathrm{gap}}$ define the alternative region
\[\Alt(\bm{\theta})
  :=
  \bigl\{\bm{\theta}'\in\Theta:\ \exists\,u\in S^*(\bm{\theta}),\, v\notin S^*(\bm{\theta})\ \text{s.t.}\ \theta'_v \ge \theta'_u\bigr\}
\]
and the information rate
\[
  \Gamma^*(\bm{\theta}):= \sup_{\bm{w}\in\Delta_{\mathcal P}} \inf_{\bm{\theta}'\in\Alt(\bm{\theta})} D_{\bm{w}}(\bm{\theta}\|\bm{\theta}').
\]
Intuitively, $\Gamma^*(\bm{\theta})$ is the most per-sample information we can learn for distinguishing $\bm{\theta}$  from the hardest alternatives. We now present the lower bound on the expected stopping time given by \citet{garivier2016optimal}.
\begin{theorem}[\citet{garivier2016optimal}]\label{thm:lower-bound}
Let $\delta \in (0,1)$ and let $(\{(i_t,j_t)\}, \tau_\delta, \hat{S}_\tau)$ be a $\delta$-correct strategy.
For any $\bm{\theta} \in \Theta^{\mathrm{gap}}$,
\[
  \liminf_{\delta \to 0} \frac{\E_{\bm{\theta}}[\tau_\delta]}{\log(1/\delta)} \ge \frac{1}{\Gamma^*(\bm{\theta})}.
\]
\end{theorem}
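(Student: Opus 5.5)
The plan is the standard change-of-measure argument of \citet{kaufmann2016complexity,garivier2016optimal}, instantiated for pairwise comparisons. For a pair $(i,j)\in\mathcal P$ let $N_{ij}(\tau)$ be the number of times it is compared before stopping. The first step is the transportation inequality. Take any alternative $\bm{\theta}'\in\Alt(\bm{\theta})$ and, for the moment, assume $\bm{\theta}'\in\Theta^{\mathrm{gap}}$. By Wald's identity for the log-likelihood ratio, together with the data-processing inequality applied to the event $E=\{\hat S_\tau = S^*(\bm{\theta})\}$,
\[
\sum_{(i,j)\in\mathcal P}\E_{\bm{\theta}}[N_{ij}(\tau)]\,d_{ij}(\bm{\theta},\bm{\theta}') \;\ge\; \mathrm{kl}\bigl(\Prob_{\bm{\theta}}(E),\Prob_{\bm{\theta}'}(E)\bigr).
\]
This works because outcomes are conditionally independent given the $\mathcal F_{t-1}$-measurable choices, so the expected log-likelihood ratio up to $\tau$ equals $\sum_{ij}\E[N_{ij}]d_{ij}$. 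We may assume $\E_{\bm{\theta}}[\tau]<\infty$, since otherwise there is nothing to prove.

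The second step handles $\bm{\theta}'$ that lie in $\Alt(\bm{\theta})\setminus\Theta^{\mathrm{gap}}$, i.e.\ those with a tie at the $k$-th position. The idea is to pass to the closure. The set $\Alt(\bm{\theta})\cap\Theta^{\mathrm{gap}}$ is dense in $\Alt(\bm{\theta})$: a small perturbation of $\bm{\theta}'$ can make $\theta'_v>\theta'_u$ strictly while breaking all ties, staying inside $\Theta$ up to a slight rescaling (which is fine because $\bm{\theta}$ is interior). Also, $d_{ij}$ is continuous in $\bm{\theta}'$ by Assumption~\ref{ass:expfam-global}. So the infimum over $\Alt(\bm{\theta})$ equals the infimum over $\Alt(\bm{\theta})\cap\Theta^{\mathrm{gap}}$, and it suffices to work with the latter. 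For such $\bm{\theta}'$ we have $S^*(\bm{\theta}')\neq S^*(\bm{\theta})$, and $\delta$-correctness then gives $\Prob_{\bm{\theta}}(E)\ge1-\delta$ and $\Prob_{\bm{\theta}'}(E)\le\delta$. Monotonicity of the binary KL then yields $\mathrm{kl}(1-\delta,\delta)\ge\log\frac{1}{2.4\delta}$.

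The third step normalizes. Set $w_{ij}=\E_{\bm{\theta}}[N_{ij}(\tau)]/\E_{\bm{\theta}}[\tau]\in\Delta_{\mathcal P}$. The inequality becomes $\E_{\bm{\theta}}[\tau]\,D_{\bm{w}}(\bm{\theta}\|\bm{\theta}')\ge\log\frac1{2.4\delta}$ for every such $\bm{\theta}'$. Taking the infimum over $\bm{\theta}'$ gives $\E_{\bm{\theta}}[\tau]\inf_{\bm{\theta}'}D_{\bm{w}}\ge\log\frac1{2.4\delta}$. Then bounding $\inf_{\bm{\theta}'}D_{\bm{w}}\le\Gamma^*(\bm{\theta})$ by the supremum over $\bm{w}$ gives
\[
\E_{\bm{\theta}}[\tau_\delta]\ge \frac{\log(1/(2.4\delta))}{\Gamma^*(\bm{\theta})}.
\]
Dividing by $\log(1/\delta)$ and letting $\delta\to0$ gives the claim. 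We also need $\Gamma^*(\bm{\theta})<\infty$, which holds because $\bm{\theta}'$ can be chosen as $\bm{\theta}$ with coordinates $u,v$ averaged, giving a finite divergence.

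The only non-routine point is the density and closure step. One must check that the perturbed $\bm{\theta}'$ remains in $\Theta$; this holds since $\bm{\theta}$ is interior, so convex combinations toward an appropriate point stay inside. One must also check that ties in $\Alt(\bm{\theta})$ do not make the infimum strictly smaller; continuity of $d_{ij}$ handles this. Everything else is the verbatim argument of \citet{garivier2016optimal}.
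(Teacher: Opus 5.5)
Your proposal is correct. It is the same change-of-measure argument of \citet{garivier2016optimal} that the paper relies on; the paper only cites that result and gives no proof of its own.

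Your closure step for tied alternatives in $\Alt(\bm{\theta})\setminus\Theta^{\mathrm{gap}}$ is a genuine detail that the citation glosses over. What makes it work is convexity of $\Theta$, not interiority of $\bm{\theta}$: take a convex combination of $\bm{\theta}'$ with a point $\bm{\xi}\in\Theta$ that has distinct coordinates and $\xi_v>\xi_u$.
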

The Track-and-Stop algorithm of \citet{garivier2016optimal} gives a framework for sequential algorithms to track the optimal oracle allocation $\bm{w}^*(\bm{\theta})$.
In the next section, we will characterize this oracle allocation for the pairwise top-$k$ setting. Then, building on the Track-and-Stop framework, we will develop a sequential algorithm that tracks the oracle allocation and prove that it is asymptotically optimal.

\section{Structure of the Oracle Problem}\label{sec:oracle}

The oracle's allocation problem is, given $\bm{\theta}$, to find a design
$\bm{w}^* \in \argmax_{\bm{w}\in\Delta_{\mathcal P}} \inf_{\bm{\theta}'\in\Alt(\bm{\theta})} D_{\bm{w}}(\bm{\theta}\|\bm{\theta}')$. In this section, we (i) formulate this problem as a two-player game, (ii) show that a saddle-point exists,
and (iii) calculate the gradients that will be used in the online algorithm, when, in contrast to the oracle problem, $\bm{\theta}$ is not known.

\subsection{Boundary Reduction}\label{subsec:boundary}
We start by characterizing the alternative region $\Alt(\bm{\theta})$ for top-$k$ identification. Define the set of boundary pairs
\[
  B(\bm{\theta}) := \{(i,j) \in \mathcal{P} : \mathbf{1}\{i \in S^*(\bm{\theta})\} \neq \mathbf{1}\{j \in S^*(\bm{\theta})\}\},
\]
so $|B(\bm{\theta})|=k(n-k)$. For each $(i,j)\in B(\bm{\theta})$, let $u_{ij}\in S^*(\bm{\theta})$ and $v_{ij}\notin S^*(\bm{\theta})$
denote the endpoints lying inside and outside the true top-$k$ set, respectively. Define the set of parameters that invert the true ordering,
$\Theta_{ij} := \{\bm{\theta}' \in \Theta : \theta'_{v_{ij}} \ge \theta'_{u_{ij}}\}$. It follows from the definition of $\Alt(\bm{\theta})$ that if $\bm{\theta}' \in \Alt(\bm{\theta})$, there is some boundary pair $(i,j) \in B(\bm{\theta})$ such that $\bm{\theta}' \in \Theta_{ij}$. Thus, $\Alt(\bm{\theta}) = \bigcup_{(i,j)\in B(\bm{\theta})} \Theta_{ij}$, and the oracle objective can be written as
\[
  \Gamma^*(\bm{\theta})
  = \sup_{\bm{w} \in \Delta_{\mathcal{P}}}
    \min_{(i,j) \in B(\bm{\theta})} \inf_{\bm{\theta}' \in \Theta_{ij}} D_{\bm{w}}(\bm{\theta} \| \bm{\theta}').
\]
\subsection{Saddle Point Formulation}\label{subsec:saddle}
Let $\gamma_{ij}(\bm{w};\bm{\theta}) := \inf_{\bm{\theta}'\in\Theta_{ij}} D_{\bm{w}}(\bm{\theta}\|\bm{\theta}')$ and define the information rate under $\bm{w}$,
$\Gamma(\bm{w};\bm{\theta})\ :=\ \min_{(i,j)\in B(\bm{\theta})} \gamma_{ij}(\bm{w};\bm{\theta})$. The minimum over boundary pairs is nonsmooth in general, so we write
$\Gamma(\bm{w};\bm{\theta}) = \min_{\bm{q}\in\Delta_{B(\bm{\theta})}} F(\bm{w},\bm{q};\bm{\theta})$, where $F(\bm{w},\bm{q};\bm{\theta}):=\sum_{(i,j)\in B(\bm{\theta})} q_{ij}\,\gamma_{ij}(\bm{w};\bm{\theta})$. Consequently,
\begin{equation*}
\Gamma^*(\bm{\theta})
=
\max_{\bm{w}\in\Delta_{\mathcal P}}\ \min_{\bm{q}\in\Delta_{B(\bm{\theta})}} F(\bm{w},\bm{q};\bm{\theta}).
\end{equation*}
We view this as a two-player game. The designer chooses allocation proportions $\bm{w}$, and an adversary chooses $\bm{q}\in\Delta_{B(\bm{\theta})}$, a distribution over boundary pairs. For any fixed $\bm{w}$, the inner minimization over $\bm{q}$ is equivalent to taking the minimum over $(i,j)\in B(\bm{\theta})$: an optimal $\bm{q}$ allocates its mass on the boundary pair(s) attaining $\min_{(i,j)\in B(\bm{\theta})}\gamma_{ij}(\bm{w};\bm{\theta})$. Informally, $\bm{q}$ identifies the boundary pairs under design $\bm{w}$ that are most ``vulnerable'' to being misordered.
\begin{lemma}[Basic properties of $\gamma_{ij}$]\label{lem:gamma-basic}
Fix $\bm{\theta}\in\Theta$ and $(i,j)\in B(\bm{\theta})$. The map $\bm{w}\mapsto \gamma_{ij}(\bm{w};\bm{\theta})$ is concave and continuous on $\Delta_{\mathcal P}$.
\end{lemma}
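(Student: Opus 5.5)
Concavity comes from the structure of $\gamma_{ij}$ as a pointwise infimum. For each fixed $\bm{\theta}'\in\Theta_{ij}$, the map $\bm{w}\mapsto D_{\bm{w}}(\bm{\theta}\|\bm{\theta}')=\sum_{(a,b)\in\mathcal P} w_{ab}\,d_{ab}(\bm{\theta},\bm{\theta}')$ is linear in $\bm{w}$. The function $\gamma_{ij}(\cdot;\bm{\theta})$ is the infimum of this family of linear functions over $\bm{\theta}'\in\Theta_{ij}$, so it is concave on $\Delta_{\mathcal P}$. It is also finite. It is nonnegative because KL divergences are nonnegative. It is bounded above because $\Theta_{ij}$ is nonempty: for example, take $\bm{\theta}$ with coordinates $u_{ij},v_{ij}$ replaced by their common average, which stays in $\Theta$. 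Hence $\gamma_{ij}\le D_{\bm{w}}(\bm{\theta}\|\bm{\theta}')\le \max_{(a,b)} d_{ab}(\bm{\theta},\bm{\theta}')<\infty$.

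For continuity, I would first get a uniform bound on the KL terms, which yields Lipschitz continuity directly. For the exponential family, $d_{ab}(\bm{\theta},\bm{\theta}')=A(\eta')-A(\eta)-A'(\eta)(\eta'-\eta)$, with $\eta=\eta_{ab}(\bm{\theta})$ and $\eta'=\eta_{ab}(\bm{\theta}')$. Since $\bm{\theta},\bm{\theta}'\in\Theta$, both $\eta$ and $\eta'$ lie in $[-2R,2R]$. By Assumption~\ref{ass:expfam-global} and Taylor's theorem, $d_{ab}(\bm{\theta},\bm{\theta}')\le \tfrac{\overline\sigma^2}{2}(\eta-\eta')^2\le 8\overline\sigma^2R^2=:M$, uniformly over all pairs and all $\bm{\theta}'\in\Theta$. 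Then, for any $\bm{w},\bm{w}'\in\Delta_{\mathcal P}$ and any $\bm{\theta}'$,
\[
|D_{\bm{w}}(\bm{\theta}\|\bm{\theta}')-D_{\bm{w}'}(\bm{\theta}\|\bm{\theta}')|\le M\|\bm{w}-\bm{w}'\|_1 .
\]
An infimum of functions that are all $M$-Lipschitz (with respect to $\|\cdot\|_1$) is again $M$-Lipschitz, provided it is finite, which we have shown. So $\gamma_{ij}(\cdot;\bm{\theta})$ is Lipschitz, and in particular continuous on all of $\Delta_{\mathcal P}$.

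The one delicate point is continuity at the relative boundary of the simplex. A general concave function is only guaranteed continuous on the relative interior of its domain, so concavity alone would not suffice there. The uniform KL bound is exactly what closes this gap, and it relies on the boundedness of $\Theta$ ($\|\bm{\theta}'\|_\infty\le R$). Without that constraint, the infimum could involve unbounded $\eta'$, the bound $M$ would fail, and the Lipschitz argument would break down at the boundary.
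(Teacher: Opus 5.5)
Your proof is correct. The concavity part matches the paper's argument (a pointwise infimum of functions linear in $\bm{w}$), and you also check that $\Theta_{ij}\neq\emptyset$ so that $\gamma_{ij}$ is finite, which the paper leaves implicit.

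For continuity you take a genuinely different route. The paper argues qualitatively: $(\bm{w},\bm{\theta}')\mapsto D_{\bm{w}}(\bm{\theta}\|\bm{\theta}')$ is jointly continuous and $\Theta_{ij}$ is compact, so the minimum over $\bm{\theta}'$ depends continuously on $\bm{w}$. You instead bound every $d_{ab}(\bm{\theta},\bm{\theta}')$ uniformly, by $8\overline{\sigma}^2R^2$ or more sharply by $D_{\max}$. Then $\gamma_{ij}(\cdot;\bm{\theta})$ is an infimum of uniformly $\|\cdot\|_1$-Lipschitz functions, hence itself Lipschitz.

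What each route buys:
\begin{itemize}
\item The paper's version is shorter and needs only continuity of $A$ plus compactness. It also gives attainment of the infimum, which the paper relies on elsewhere, e.g.\ for the KL projections in Lemma~\ref{lem:inner-geometry}.
\item Your version needs neither compactness nor attainment, only boundedness of $\Theta$, and it gives a quantitative modulus. Taking the minimum over the finitely many boundary pairs preserves the constant. This yields exactly the estimate $|\Gamma(\bm{w};\bm{\theta})-\Gamma(\bm{w}';\bm{\theta})|\le D_{\max}\|\bm{w}-\bm{w}'\|_1$, which the paper later uses without proof in the proof of Proposition~\ref{prop:Gamma-hp-new}.
\end{itemize}

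As an aside, boundedness is essential to your Lipschitz constant but not to continuity itself. On a polytope, a finite concave function is automatically lower semicontinuous. An infimum of continuous functions is upper semicontinuous. Together these give continuity on $\Delta_{\mathcal P}$ without any uniform bound.
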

By Lemma~\ref{lem:gamma-basic}, $F(\cdot,\bm{q};\bm{\theta})=\sum_{(i,j)\in B(\bm{\theta})}q_{ij}\gamma_{ij}(\cdot;\bm{\theta})$ is concave and continuous in $\bm{w}$ for each $\bm{q}$,
and $F(\bm{w},\cdot;\bm{\theta})$ is linear (hence convex and continuous) in $\bm{q}$ for each $\bm{w}$.
Since $\Delta_{\mathcal P}$ and $\Delta_{B(\bm{\theta})}$ are nonempty compact convex sets, Sion's minimax theorem gives
\begin{equation*}
  \max_{\bm{w}\in\Delta_{\mathcal P}}\ \min_{\bm{q}\in\Delta_{B(\bm{\theta})}} F(\bm{w},\bm{q};\bm{\theta})
  \;=\;
  \min_{\bm{q}\in\Delta_{B(\bm{\theta})}}\ \max_{\bm{w}\in\Delta_{\mathcal P}} F(\bm{w},\bm{q};\bm{\theta}).
\end{equation*}
Moreover, by compactness and continuity the max and min are attained, so there exists $(\bm{w}^*,\bm{q}^*)\in\Delta_{\mathcal P}\times\Delta_{B(\bm{\theta})}$
such that $\Gamma^*(\bm{\theta})=F(\bm{w}^*,\bm{q}^*;\bm{\theta})$ and
\begin{align*}
  F(\bm{w},\bm{q}^*;\bm{\theta}) &\le F(\bm{w}^*,\bm{q}^*;\bm{\theta}) \le F(\bm{w}^*,\bm{q};\bm{\theta}) \\
  &\quad \forall (\bm{w},\bm{q})\in\Delta_{\mathcal P}\times\Delta_{B(\bm{\theta})}.
\end{align*}
To compute gradients of $F$ we need the KL projection $\inf_{\bm{\theta}'\in\Theta_{ij}} D_{\bm{w}}(\bm{\theta}\|\bm{\theta}')$ to exist and be unique.
This is ensured by strict convexity of $D_{\bm{w}}(\bm{\theta}\|\cdot)$ over $\Theta$, which is guaranteed when the sampling design has connected support. Define the support graph
\[
  G(\bm{w}) := ([n], \{\{i,j\} : w_{ij} > 0\}).
\]
$G(\bm{w})$ is connected if for any $u,v\in[n]$ there exists a path $u=v_0,\ldots,v_m=v$ in $G(\bm{w})$.
\begin{lemma}[Well-posedness of KL projection on $\Theta$]\label{lem:inner-geometry}
Fix $\bm{\theta} \in \Theta$ and $\bm{w} \in \Delta_{\mathcal{P}}$ with $G(\bm{w})$ connected.
For each $(i,j) \in B(\bm{\theta})$:
\begin{enumerate}[noitemsep,topsep=2pt]
  \item[(a)] The map $\bm{\theta}' \mapsto D_{\bm{w}}(\bm{\theta} \| \bm{\theta}')$ is strictly convex on  $\Theta$
  \item[(b)] $\bm{\theta}^*_{ij}(\bm{w}; \bm{\theta}) := \argmin_{\bm{\theta}'\in\Theta_{ij}} D_{\bm{w}}(\bm{\theta}\|\bm{\theta}')$ is unique
  \item[(c)] Either $(\bm{\theta}^*_{ij})_{u_{ij}} = (\bm{\theta}^*_{ij})_{v_{ij}}$, or $\bm{\theta}^*_{ij}(\bm{w};\bm{\theta})\in\partial\Theta$.
\end{enumerate}
\end{lemma}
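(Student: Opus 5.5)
For part (a), I would write the per-pair KL divergence explicitly in exponential-family form. For natural parameters $\eta,\eta'$ it is the Bregman divergence of the log-partition function,
\[
\KL(P_\eta\|P_{\eta'}) = A(\eta') - A(\eta) - A'(\eta)(\eta'-\eta).
\]
Hence
\[
D_{\bm{w}}(\bm{\theta}\|\bm{\theta}') = \sum_{(i,j)} w_{ij}\bigl[A(\bm{x}_{ij}^\top\bm{\theta}') - A(\bm{x}_{ij}^\top\bm{\theta}) - A'(\bm{x}_{ij}^\top\bm{\theta})\,\bm{x}_{ij}^\top(\bm{\theta}'-\bm{\theta})\bigr].
\]
As a function of $\bm{\theta}'$, this is a sum of convex compositions $A(\bm{x}_{ij}^\top\bm{\theta}')$ plus an affine term. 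Its Hessian is $\sum w_{ij}A''(\eta_{ij}(\bm{\theta}'))\bm{x}_{ij}\bm{x}_{ij}^\top$, a weighted graph Laplacian of $G(\bm{w})$ with strictly positive weights by Assumption~\ref{ass:expfam-global}.

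For a connected graph, the Laplacian's kernel is exactly $\mathrm{span}(\mathbf{1})$. Since $\Theta$ lies in the hyperplane $\sum_i\theta_i=0$, any two distinct $\bm{\theta}',\bm{\theta}''\in\Theta$ have difference $\bm{h}\perp\mathbf{1}$, $\bm{h}\neq 0$. Then $\bm{h}^\top\nabla^2\bm{h} = \sum w_{ij}A''(\cdot)(h_i-h_j)^2>0$, because connectivity forces some edge with $h_i\neq h_j$.

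To get strict convexity along the segment without worrying about second-derivative subtleties, I would argue directly. Each $A$ is strictly convex, so $t\mapsto A(\bm{x}_{ij}^\top(\bm{\theta}'+t\bm{h}))$ is strictly convex whenever $h_i\ne h_j$ and affine otherwise. A sum containing at least one strictly convex term is strictly convex on the segment.

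For part (b), existence follows because $\Theta_{ij}$ is a closed subset of the compact set $\Theta$ (it is defined by a closed inequality and the box and hyperplane constraints). It is nonempty: for example, the zero vector lies in it. The function $D_{\bm{w}}(\bm{\theta}\|\cdot)$ is continuous, so the infimum is attained. $\Theta_{ij}$ is convex, being the intersection of the convex set $\Theta$ with a half-space. A strictly convex function on a convex set has at most one minimizer, which gives uniqueness.

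For part (c), which I expect to be the main subtlety, I would argue by contradiction. Suppose $\bm{\theta}^*:=\bm{\theta}^*_{ij}$ satisfies $\theta^*_{v}>\theta^*_{u}$ strictly and lies in the relative interior of $\Theta$ (i.e.\ $\|\bm{\theta}^*\|_\infty<R$). Then $\bm{\theta}^*$ is a local minimizer of $D_{\bm{w}}(\bm{\theta}\|\cdot)$ over the affine hyperplane $\{\sum\theta'_i=0\}$, since a neighborhood within the hyperplane stays in $\Theta_{ij}$. By convexity, it is then a global minimizer over the hyperplane. But $\bm{\theta}$ itself lies on this hyperplane with $D_{\bm{w}}(\bm{\theta}\|\bm{\theta})=0$, and by (a) $\bm{\theta}$ is the unique zero and unique minimizer. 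Hence $\bm{\theta}^*=\bm{\theta}$. This contradicts $\theta_u>\theta_v$ (from $(i,j)\in B(\bm{\theta})$ with $u\in S^*(\bm{\theta})$ on $\Theta^{\mathrm{gap}}$, implicitly $\theta_u\ge\theta_{(k)}>\theta_{(k+1)}\ge\theta_v$), since $\bm{\theta}^*\in\Theta_{ij}$ requires $\theta^*_v\ge\theta^*_u$.

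The care needed here is in interpreting $\partial\Theta$ as the boundary relative to the hyperplane, i.e.\ the box constraint $\|\cdot\|_\infty\le R$ being active. Otherwise every point is on the boundary in $\R^n$ and (c) is vacuous.
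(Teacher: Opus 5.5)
Your proof is correct. Parts (a) and (b) match the paper: the Hessian of $D_{\bm{w}}(\bm{\theta}\|\cdot)$ is a weighted Laplacian of $G(\bm{w})$, positive definite on the centered subspace by connectivity, and uniqueness then follows from compactness plus strict convexity. Your checks that $\Theta_{ij}$ is nonempty and convex fill in details the paper leaves implicit.

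For (c) your route differs. You argue that a minimizer in the relative interior, with slack in the halfspace, is a local and hence global minimizer of the convex function $D_{\bm{w}}(\bm{\theta}\|\cdot)$ on the hyperplane $\{\sum_i\theta'_i=0\}$. That forces $\bm{\theta}^*=\bm{\theta}\notin\Theta_{ij}$.

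The paper instead moves along the segment $\bar{\bm{\theta}}=(1-\lambda)\bm{\theta}^*+\lambda\bm{\theta}$ to the point where $\bar\theta_{v_{ij}}=\bar\theta_{u_{ij}}$. That point lies in $\Theta_{ij}$ simply because $\Theta$ is convex. Convexity together with $D_{\bm{w}}(\bm{\theta}\|\bm{\theta})=0$ then gives $D_{\bm{w}}(\bm{\theta}\|\bar{\bm{\theta}})\le(1-\lambda)D_{\bm{w}}(\bm{\theta}\|\bm{\theta}^*)<D_{\bm{w}}(\bm{\theta}\|\bm{\theta}^*)$.

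The segment argument never uses interiority of $\bm{\theta}^*$. So it actually shows the halfspace constraint always binds, and the $\partial\Theta$ alternative is unnecessary. Your local-optimality argument genuinely needs the relative-interior hypothesis and yields only the disjunction as stated.

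Your route is more explicit about three points the paper glosses:
\begin{itemize}
\item $\bm{\theta}$ is the unique zero of $D_{\bm{w}}(\bm{\theta}\|\cdot)$ on $\Theta$. The paper's strict inequality silently requires this, since it needs $D_{\bm{w}}(\bm{\theta}\|\bm{\theta}^*)>0$.
\item $\mathrm{int}$ and $\partial$ must be read relative to the hyperplane; otherwise (c) is vacuous.
\item $\theta_{u_{ij}}>\theta_{v_{ij}}$ uses $\bm{\theta}\in\Theta^{\mathrm{gap}}$.
\end{itemize}
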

It follows by Danskin's theorem that $\gamma_{ij}(\cdot;\bm{\theta})$ is differentiable at $\bm{w}$ such that $G(\bm{w})$ is connected and
\[
  \frac{\partial \gamma_{ij}}{\partial w_{ab}}(\bm{w};\bm{\theta})
  = d_{ab}\!\big(\bm{\theta},\bm{\theta}^*_{ij}(\bm{w};\bm{\theta})\big),
  \qquad (a,b)\in\mathcal P.
\]
hence, the partial derivatives of $F$ are
\begin{equation}\label{eq:F-gradients}
\begin{split}
  \frac{\partial F}{\partial w_{ab}}(\bm{w},\bm{q};\bm{\theta})
  &= \sum_{(i,j)\in B(\bm{\theta})} q_{ij}\, d_{ab}\!\big(\bm{\theta},\bm{\theta}^*_{ij}(\bm{w};\bm{\theta})\big),
  \\
  \frac{\partial F}{\partial q_{ij}}(\bm{w},\bm{q};\bm{\theta})
  &= \gamma_{ij}(\bm{w};\bm{\theta}).
\end{split}
\end{equation}
The KL projection $\bm{\theta}^*_{ij}(\bm{w};\bm{\theta})$ minimizes $D_{\bm{w}}(\bm{\theta}\|\bm{\theta}')$ over $\Theta_{ij}$. Lemma~\ref{lem:inner-geometry} (c) says that, unless this minimizer is forced onto $\partial\Theta$ by the constraint $\|\bm{\theta}'\|_\infty\le R$, the halfspace constraint $(\bm{\theta}^*_{ij})_{v_{ij}}=(\bm{\theta}^*_{ij})_{u_{ij}}$ is active; in other words, the most confusing alternative lies on the hyperplane $\theta'_{v_{ij}}=\theta'_{u_{ij}}$.
\begin{remark}[Disconnected sampling designs]
If, contrary to our assumption, the parameter class were unbounded and $G(\bm{w})$ were disconnected, then $\Gamma(\bm{w};\bm{\theta})=0$. Hence, if $\bm{w}^*$ is an optimal design, $G(\bm{w}^*)$ is connected.
To see this, pick a boundary pair $(u,v)\in B(\bm{\theta})$ in different connected components of $G(\bm{w})$.
Shift the utilities of all items in the component of $v$ upward and the component of $u$ downward (to preserve $\sum_i\theta'_i=0$). This flips $\theta'_v\ge \theta'_u$ while leaving all  $\eta_{ab}(\bm{\theta}')$ unchanged for any $(a,b)$ with positive weight, so
$D_{\bm{w}}(\bm{\theta}\|\bm{\theta}')=0$ and thus $\gamma_{uv}(\bm{w};\bm{\theta})=0$.
On the bounded class $\Theta$, this construction may be infeasible due to $\|\bm{\theta}'\|_\infty\le R$, so the conclusion may not hold. However, in Section~\ref{sec:algorithm} we show that we can still apply Lemma~\ref{lem:inner-geometry}.
\end{remark}

\section{Algorithm}\label{sec:algorithm}
We now construct an adaptive fixed-confidence procedure when $\bm{\theta}$ is not known.
The sampling rule is driven by the solution to the oracle saddle-point problem.
At round $t$, we compute the current MLE $\hat{\bm{\theta}}_t$ and its induced boundary set
$\hat B_t:=B(\hat{\bm{\theta}}_t)$, and evaluate the oracle objective at this estimate, that is we work with
$F(\cdot,\cdot;\hat{\bm{\theta}}_t)$ on $\Delta_{\mathcal P}\times\Delta_{\hat B_t}$.
Recomputing a saddle point at every $t$ is expensive and early on potentially unnecessary when the MLE has high variance.
Instead, we learn the saddle point online, maintaining iterates $(\bm{w}_t, \bm{q}_t)$ and performing one primal--dual update step per round.

Given $(\bm{w}_t,\bm{q}_t,\hat{\bm{\theta}}_{t-1})$, a round proceeds as follows:
(i) select the next pair to sample $A_t=(i_t,j_t)$ based on $\{\bm{w}_s\}_{s=1}^t$;
(ii) observe outcome $Y_t$ and update $\hat{\bm{\theta}}_t$, $\hat S_t=S^*(\hat{\bm{\theta}}_t)$, and
$\hat B_t=B(\hat{\bm{\theta}}_t)$;
(iii) check a stopping test to see if we are confident enough to terminate.
(iv) If we did not stop, perform one entropic FTRL primal--dual update using the estimated gradients of $F(\cdot,\cdot;\hat{\bm{\theta}}_t)$ at the current iterates $(\bm{w}_t,\bm{q}_t)$, producing the next iterates $(\bm{w}_{t+1},\bm{q}_{t+1})$.
A high-level summary of the procedure is given below; the full algorithm pseudocode with notation is provided at the end of this section in Algorithm~\ref{alg:main}.
\begin{algorithm}
\caption*{High-level Outline of Algorithm~\ref{alg:main}}\label{alg:high-level}
\begin{algorithmic}[1]
\FOR{$t=1,2,\ldots$}
    \setlength{\itemsep}{1pt}
    \STATE Select a pair by tracking $\{\bm{w}_s\}_{s=1}^t$.
    \hfill(Section~\ref{subsec:sampling})

    \STATE Observe outcome $Y_t$; update $\hat{\bm{\theta}}_t$.
    \hfill(Section~\ref{subsec:estimation})

    \STATE Check the stopping condition; if it is satisfied output $\hat{S}_t$ and stop.
    \hfill(Section~\ref{subsec:stopping})
    
    \STATE Estimate the gradient of the current oracle game $F(\bm{w}_t,\bm{q}_t;\hat{\bm{\theta}}_t)$.
    \hfill(Section~\ref{subsec:gradients})

    \STATE Update $\bm{w}_t$ and $\bm{q}_t$ to $\bm{w}_{t+1}$ and $\bm{q}_{t+1}$ via entropic FTRL.
    \hfill(Section~\ref{subsec:ftrl})
\ENDFOR
\end{algorithmic}
\end{algorithm}

\subsection{Maximum Likelihood Estimate}\label{subsec:estimation}
Given observations $(A_1, Y_1), \ldots, (A_{t}, Y_{t})$, define the log-likelihood (up to additive constants)
\[
  \ell_t(\bm{\theta}) := \sum_{s=1}^{t} \bigl[\eta_{A_s}(\bm{\theta})\, T(Y_s) - A(\eta_{A_s}(\bm{\theta}))\bigr].
\]
Let $\hat{\bm{\theta}}_t \in \argmax_{\bm{\theta} \in \Theta} \ell_{t}(\bm{\theta})$ denote the MLE.

\subsection{Entropic FTRL Updates}\label{subsec:ftrl}
We update the primal and dual iterates from $(\bm{w}_t,\bm{q}_t)$ to $(\bm{w}_{t+1},\bm{q}_{t+1})$ using entropic FTRL with gradient estimates of $F(\cdot,\cdot;\hat{\bm{\theta}}_t)$.
The primal player maximizes over $\bm{w}\in\Delta_{\mathcal P}$; the dual player minimizes over $\bm{q}\in\Delta_{\hat B_t}$.
At round $t$, we construct unbiased gradient estimates $\hat{\bm{g}}_t^{(w)}$ and $\hat{\bm{g}}_t^{(q)}$ for the partial derivatives in equation~\eqref{eq:F-gradients} evaluated at $(\bm{w}_t,\bm{q}_t;\hat{\bm{\theta}}_t)$.
Given these estimates, define cumulative scores
\[
\bm{\Psi}^{(w)}_{t} := \bm{\Psi}^{(w)}_{t-1} + \hat{\bm{g}}^{(w)}_t,
\qquad
\bm{\Psi}^{(q)}_{t} := \bm{\Psi}^{(q)}_{t-1} + \hat{\bm{g}}^{(q)}_t
\]
with $\bm{\Psi}^{(w)}_{0} = \bm{\Psi}^{(q)}_{0} = \mathbf{0}$.
The entropic FTRL updates are defined by
\begin{align*}
\bm{w}_{t+1} &= \argmax_{\bm{w} \in \Delta_{\mathcal{P}}}
\Big\{\langle \bm{w}, \bm{\Psi}^{(w)}_t \rangle - \tfrac{1}{\mu_{t+1}} \KL(\bm{w} \| \bm{w}_1)\Big\}, \\
\bm{q}_{t+1} &= \argmin_{\bm{q} \in \Delta_{\hat B_{t}}}
\Big\{\langle \bm{q}, \bm{\Psi}^{(q)}_t \rangle + \tfrac{1}{\mu_{t+1}} \KL(\bm{q} \| \bm{q}_1)\Big\}.
\end{align*}
For each $(a,b) \in \mathcal{P}$ it follows that the primal updates are 
\begin{equation}\label{eq:w-update}
w_{t+1,ab} = \frac{w_{1,ab} \exp\!\big(\mu_{t+1} \Psi^{(w)}_{t,ab}\big)}{\sum_{(c,d) \in \mathcal{P}} w_{1,cd} \exp\!\big(\mu_{t+1} \Psi^{(w)}_{t,cd}\big)}
\end{equation}
For the dual variable, the feasible set $\Delta_{\hat B_t}$ depends on $\hat{\bm{\theta}}_t$ through $\hat B_t$, so it is convenient to implement the update by maintaining an iterate over all pairs and then restricting to the current boundary set.
Define the unnormalized weights
\[
r_{t+1,ij} := q_{1,ij}\exp\!\big(-\mu_{t+1} \Psi^{(q)}_{t,ij}\big),
\qquad (i,j)\in\mathcal P.
\]
Then the solution of the dual FTRL problem above is obtained by restricting and renormalizing on the current boundary set:
\[
q_{t+1,ij}
:= \frac{r_{t+1,ij}}{\sum_{(u,v)\in \hat B_t} r_{t+1,uv}}\cdot \mathbf 1\{(i,j)\in \hat B_t\}.
\]
Initializing $\bm{w}_1$ to be uniform over $\mathcal{P}$ and updating via FTRL ensures $w_{t,ab}>0$ for every pair and every $t$, so $G(\bm{w}_t)$ is complete (hence connected) for all $t$. By Lemma~\ref{lem:inner-geometry}, each KL projection $\bm{\theta}^*_{ij}(\bm{w}_t;\hat{\bm{\theta}}_t)$ exists and is unique, so the derivatives in \eqref{eq:F-gradients} are well-defined at $(\bm{w}_t,\bm{q}_t;\hat{\bm{\theta}}_t)$.

The learning rate $\mu_t$ controls how strongly each new gradient increment changes the weights.
To obtain standard no-regret guarantees we take $\mu_t=t^{-\alpha}$, where $\alpha\in(0,1)$, so the updates become progressively less aggressive as $t$ grows.

\subsection{Stochastic Gradients}\label{subsec:gradients}
Implementing the updates above requires evaluating the partial derivatives in \eqref{eq:F-gradients}. This requires finding $\bm{\theta}^*_{ij}(\bm{w}_t;\hat{\bm{\theta}}_t)\in\argmin_{\bm{\theta}'\in\Theta_{ij}(t)} D_{\bm{w}_t}(\hat{\bm{\theta}}_t\|\bm{\theta}')$
for each boundary pair $(i,j)\in\hat B_t$. Each projection can be solved in $O(n^2)$ time, so 
solving all $k(n-k)$ projections can make the algorithm unusable in many cases. Instead, we use an unbiased gradient estimate obtained by sampling a single boundary pair.
Sample $I_t\sim\mathrm{Unif}(\hat B_t)$ and compute
\[
\bm{\theta}_t^*\in\argmin_{\bm{\theta}'\in\Theta_{I_t}(t)} D_{\bm{w}_t}(\hat{\bm{\theta}}_t\|\bm{\theta}'), \quad
\gamma_t:=D_{\bm{w}_t}(\hat{\bm{\theta}}_t\|\bm{\theta}_t^*).
\]
Define the gradient estimates
\begin{equation}\label{eq:stoch-grads-comp}
\begin{split}
\hat g^{(w)}_{t,ab}&:=m\,q_{t,I_t}\,d_{ab}(\hat{\bm{\theta}}_t,\bm{\theta}_t^*),\quad (a,b)\in\mathcal P, \\
\hat g^{(q)}_{t,ij}&:=m\,\gamma_t\,\mathbf 1\{(i,j)=I_t\},\quad (i,j)\in\hat B_t,
\end{split}
\end{equation}
where $m = k(n-k)$. Conditioned on the past iterates, $\E[\hat{\bm{g}}^{(w)}_t]=\nabla_{\bm{w}} F(\bm{w}_t,\bm{q}_t;\hat{\bm{\theta}}_t)$ and $\E[\hat{\bm{g}}^{(q)}_t]=\nabla_{\bm{q}} F(\bm{w}_t,\bm{q}_t;\hat{\bm{\theta}}_t)$.

\begin{remark}
With full gradients, we would not need to maintain iterates $\bm{q}_t$: we could update $\bm{w}_t$
directly by a supergradient-ascent step on
\(
\Gamma(\bm{w};\hat{\bm{\theta}}_t)=\min_{(i,j)\in\hat B_t}\gamma_{ij}(\bm{w};\hat{\bm{\theta}}_t).
\)
In particular, if $(i^*,j^*)\in\argmin_{(i,j)\in\hat B_t}\gamma_{ij}(\bm{w}_t;\hat{\bm{\theta}}_t)$, then
by Danskin's theorem, \(\nabla_{\bm{w}}\gamma_{i^*j^*}(\bm{w}_t;\hat{\bm{\theta}}_t)\)
is a supergradient of $\Gamma(\cdot;\hat{\bm{\theta}}_t)$ at $\bm{w}_t$.
Equivalently, we can always choose a best response $\bm{q}^*(\bm{w}_t)$ that is a point mass on an active minimizer.
We maintain $\bm{q}_t$ only because we use stochastic gradients: we do not know which boundary pair(s) attain the smallest $\gamma_{ij}(\bm{w}_t;\hat{\bm{\theta}}_t)$, so $\bm{q}_t$ provides an online estimate of which boundary pairs are acting as bottlenecks, which then determines the weight $q_{t,I_t}$ in \eqref{eq:stoch-grads-comp}.
\end{remark}

\subsection{Sampling via C-Tracking}\label{subsec:sampling}
To select the pair $A_t$, we use the C-tracking procedure from \citet{garivier2016optimal}.
Fix a mixing schedule $\rho_t$ and form the mixed target \(\tilde{\bm{w}}_t := (1-\rho_t)\bm{w}_t + \rho_t \bm{u},
\)
where $\bm{u}$ is uniform weights over all pairs. Define $P_{ij}(t) := \sum_{s=1}^t \tilde w_{s,ij}$ and select
\[
  A_t \in \argmax_{(i,j)\in\mathcal P} \bigl(P_{ij}(t) - N_{ij}(t-1)\bigr),
\]
where $N_{ij}(t):= \sum_{s=1}^{t} \mathbf 1\{A_s=(i,j)\}$ are the empirical counts.

C-tracking ensures the empirical proportions $\hat w^{\mathrm{emp}}_{t,ij}:=N_{ij}(t)/t$ track the running average $(1/t)\sum_{s=1}^t \tilde w_{s,ij}$, which converges to an optimal design. The uniform mixture $\rho_t \bm{u}$ forces exploration of all pairs and taking $\rho_t=t^{-\gamma}$, where $\gamma\in(0,1)$, ensures that this exploration does not bias the allocations asymptotically.

\subsection{Stopping Rule}\label{subsec:stopping}
The threshold \eqref{eq:beta} and stopping time \eqref{eq:tau-delta} are derived using a standard likelihood-ratio martingale argument \citep{kaufmann2021mixture} combined with a self-normalized concentration bound used commonly in linear bandits \citep{abbasi2011improved,lattimore2020bandit}. For each estimated boundary pair $(i,j)\in\hat B_t$ (w.l.o.g. $\hat\theta_{t,i}\ge \hat\theta_{t,(k)}>\hat\theta_{t,j}$), the statistic $Z_{ij}(t)$ compares the likelihood at the current MLE $\hat{\bm{\theta}}_t$ to the largest likelihood under alternative parameters satisfying $\theta'_i\le \theta'_j$. Larger values indicate that all parameter vectors that swap this boundary pair explain the data significantly worse than the MLE. The stopping condition given in Equation~\eqref{eq:tau-delta} is satisfied when the minimum of these boundary-pair statistics exceeds the threshold in \eqref{eq:beta}.

For each boundary pair $(i,j)\in\hat B_t$, compute the
$ Z_{ij}(t)
  := \ell_t(\hat{\bm{\theta}}_t) - \sup_{\bm{\theta}' \in \Theta_{ij}(t)} \ell_t(\bm{\theta}').$
For any $\lambda > 0$ define the threshold
\begin{equation}\label{eq:beta}
\beta(t,\delta)
:=
\log\frac{1}{\delta}
+\frac{\lambda}{2}\|\hat{\bm{\theta}}_t\|_2^2
+\frac12\log\det\!\Big(I_n+\frac{\overline\sigma^2}{\lambda}\mathcal{L}_t\Big),
\end{equation}
where $\mathcal{L}_t := \sum_{s=1}^t \bm{x}_{A_s}\bm{x}_{A_s}^\top$ and $\overline\sigma^2$ is from Assumption~\ref{ass:expfam-global}. We define the stopping time
\begin{equation}\label{eq:tau-delta}
\tau_\delta
:=
\inf\Bigl\{t\ge 1:\ \min_{(i,j)\in\hat B_t} Z_{ij}(t)\ \ge\ \beta(t,\delta)\Bigr\}.
\end{equation}
Upon stopping, output $\hat S_{\tau_\delta}=S^*(\hat{\bm{\theta}}_{\tau_\delta})$.

Computing the stopping condition can be costly. It requires computing $Z_{ij}(t)$ for every boundary pair, which scales with $O\!\big(k(n-k)n^2\big)$. In practice, for large problems, the stopping check can be performed on a sparse schedule to reduce computational cost, with only a small impact on the expected stopping time, and no impact on the asymptotic optimality guarantee.

\begin{algorithm}[t]
\caption{Oracle-Tracking Top-$k$ Identification}\label{alg:main}
\begin{algorithmic}[1]
\setlength{\itemsep}{1pt}
\REQUIRE Risk $\delta$, exponents $\alpha,\gamma\in(0,1)$
\STATE Initialize $\bm{w}_1=\bm{q}_1=\bm{r}_1=\bm{u}$; $\bm{\Psi}^{(w)}_0=\bm{\Psi}^{(q)}_0=\mathbf 0$; $N_{ij}(0)=P_{ij}(0)=0$
\FOR{$t=1,2,\ldots$}
\setlength{\itemsep}{2.5pt}
  \STATE Set $\mu_{t+1}=(t+1)^{-\alpha}$, $\rho_t=t^{-\gamma}$, $\tilde{\bm{w}}_t = (1-\rho_t)\bm{w}_t + \rho_t \bm{u}$
  \STATE Select $A_t\in\argmax_{(i,j)\in \mathcal{P}}\{P_{ij}(t-1)+\tilde w_{t,ij}-N_{ij}(t-1)\}$
  \STATE  Observe $Y_t$; \; update MLE $\hat{\bm{\theta}}_t$; \; set $\hat B_t=B(\hat{\bm{\theta}}_t)$
  \STATE If $\min_{(i,j)\in\hat B_t} Z_{ij}(t)\ge \beta(t,\delta)$: output $S^*(\hat{\bm{\theta}}_t)$ and stop
  \STATE $q_{t,ij}\leftarrow \frac{r_{t,ij}}{\sum_{(u,v)\in\hat B_t} r_{t,uv}}\mathbf 1\{(i,j)\in\hat B_t\}$
  \STATE Sample $I_t\sim\mathrm{Unif}(\hat B_t)$; compute $\bm{\theta}_t^*=\argmin_{\bm{\theta}'\in\Theta_{I_t}} D_{\bm{w}_t}(\hat{\bm{\theta}}_t\|\bm{\theta}')$
  \STATE $\hat g^{(w)}_{t,ab}\leftarrow m\,q_{t,I_t}\,d_{ab}(\hat{\bm{\theta}}_t,\bm{\theta}_t^*)$; \  $\hat g^{(q)}_{t,ij}\leftarrow m\,\gamma_{I_t}(\bm{w}_t;\hat{\bm{\theta}}_t)\,\mathbf 1\{(i,j)=I_t\}$
  \STATE $\bm{\Psi}^{(w)}_t \leftarrow \bm{\Psi}^{(w)}_{t-1}+\hat{\bm{g}}^{(w)}_t$; \ $\bm{\Psi}^{(q)}_t \leftarrow \bm{\Psi}^{(q)}_{t-1}+\hat{\bm{g}}^{(q)}_t$
  \STATE Update $(\bm{w}_{t+1},\bm{r}_{t+1})$ by FTRL on $F(\cdot,\cdot;\hat{\bm{\theta}}_t)$
\ENDFOR
\end{algorithmic}
\end{algorithm}

\section{Theoretical Guarantees of Algorithm~\ref{alg:main}}\label{sec:theory}

In this section, we establish theoretical guarantees for Algorithm~\ref{alg:main}.
We show: (i) the procedure is $\delta$-correct 
(Proposition~\ref{prop:delta-pac}), and (ii) its expected stopping time asymptotically
matches the information-theoretic lower bound of Theorem~\ref{thm:lower-bound}
(Theorem~\ref{thm:asymp-opt-exp}).
The main technical step is Proposition~\ref{prop:Gamma-hp-new} showing that for sufficiently large $t$ with high probability the empirical information rate
$\Gamma(\hat{\bm{w}}_t^{\mathrm{emp}};\bm{\theta})$ is close to the oracle value $\Gamma^*(\bm{\theta})$. We start by showing the MLE is consistent.
\begin{lemma}\label{lem:mle-hp-simple} Define
$N_{\min}(t):=\min_{(i,j)\in\mathcal P} N_{ij}(t)$. There is a finite time $t_{\mathrm{warm}}$ such that for every $t\ge t_{\mathrm{warm}}$ and every $\delta\in(0,1)$, with probability at least $1-\delta$,
\[
  \|\hat{\bm{\theta}}_t-\bm{\theta}\|_2 \ \le\
  \frac{2\sigma}{\underline a}\sqrt{\frac{n\log\!\big(\frac{2t|\mathcal P|}{\delta}\big)}{N_{\min}(t)}}.
\]
\end{lemma}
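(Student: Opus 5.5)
The proof will follow the standard strong-convexity argument for constrained M-estimators. Write $\bm{\Delta}:=\hat{\bm{\theta}}_t-\bm{\theta}$, which lies in the subspace $\{\bm{x}:\sum_i x_i=0\}$ because both vectors belong to $\Theta$. Since $\hat{\bm{\theta}}_t$ maximizes the concave function $\ell_t$ over the convex set $\Theta$ and $\bm{\theta}\in\Theta$, we have $\ell_t(\hat{\bm{\theta}}_t)\ge\ell_t(\bm{\theta})$. A second-order Taylor expansion along the segment $[\bm{\theta},\hat{\bm{\theta}}_t]$, which stays in $\Theta$, gives
\[
0\le \ell_t(\hat{\bm{\theta}}_t)-\ell_t(\bm{\theta}) = \langle \nabla\ell_t(\bm{\theta}),\bm{\Delta}\rangle - \tfrac12\,\bm{\Delta}^\top\Big(\textstyle\sum_{s\le t}A''(\tilde\eta_s)\bm{x}_{A_s}\bm{x}_{A_s}^\top\Big)\bm{\Delta}.
\]
Every intermediate natural parameter satisfies $|\tilde\eta_s|\le 2R$, so by Assumption~\ref{ass:expfam-global} the curvature term is at least $\tfrac{\underline a}{2}\bm{\Delta}^\top\mathcal L_t\bm{\Delta}$. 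This yields $\underline a\,\bm{\Delta}^\top\mathcal L_t\bm{\Delta}\le 2\langle \bm{S}_t,\bm{\Delta}\rangle$, where $\bm{S}_t:=\nabla\ell_t(\bm{\theta})=\sum_{s\le t}\bm{x}_{A_s}(T(Y_s)-A'(\eta_{A_s}(\bm{\theta})))$ is the score.

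Next we lower bound the curvature. Decompose $\mathcal L_t=\sum_{(i,j)}N_{ij}(t)\bm{x}_{ij}\bm{x}_{ij}^\top\succeq N_{\min}(t)\,L_{K_n}$, where $L_{K_n}=nI-\mathbf 1\mathbf 1^\top$ is the Laplacian of the complete graph. On the sum-zero subspace this Laplacian equals $nI$, so $\bm{\Delta}^\top\mathcal L_t\bm{\Delta}\ge nN_{\min}(t)\|\bm{\Delta}\|_2^2$.

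For the score, write $\langle\bm{S}_t,\bm{\Delta}\rangle=\sum_{(i,j)}\Delta_{ij}\,M_{ij}(t)$, with $\Delta_{ij}=\Delta_i-\Delta_j$ and $M_{ij}(t)=\sum_{s\le t:A_s=(i,j)}(T(Y_s)-A'(\eta_{ij}))$. Each $M_{ij}$ is a sum of $N_{ij}(t)$ conditionally $\sigma$-subgaussian increments (Assumption~\ref{ass:subg}). Because $N_{ij}(t)$ is random, we handle it by a union bound over the possible values $N\in\{1,\dots,t\}$ and over the pairs, using the i.i.d. reordering (stack-of-rewards) argument. This gives, with probability at least $1-\delta$, simultaneously for all pairs,
\[
|M_{ij}(t)|\le \sigma\sqrt{2N_{ij}(t)\log(2t|\mathcal P|/\delta)}.
\]
Applying Cauchy--Schwarz,
\[
\langle\bm{S}_t,\bm{\Delta}\rangle\le \sigma\sqrt{2\log(2t|\mathcal P|/\delta)}\,\Big(\textstyle\sum N_{ij}\Delta_{ij}^2\Big)^{1/2}=\sigma\sqrt{2\log(\cdot)}\,\big(\bm{\Delta}^\top\mathcal L_t\bm{\Delta}\big)^{1/2}.
\]

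Combining the two bounds gives $(\bm{\Delta}^\top\mathcal L_t\bm{\Delta})^{1/2}\le \tfrac{2\sigma}{\underline a}\sqrt{2\log(\cdot)}$. Dividing by $\sqrt{nN_{\min}}$ then yields a bound of order $\tfrac{\sigma}{\underline a}\sqrt{\log(\cdot)/(nN_{\min})}$, which is stronger than the stated one. The factor $\sqrt n$ in the statement leaves slack, so I expect the constants to close, possibly after replacing the Cauchy--Schwarz step with a cruder $\|\cdot\|_2$ bound on $\bm{S}_t$. The role of $t_{\mathrm{warm}}$ is to guarantee $N_{\min}(t)\ge1$ deterministically, which C-tracking with forced exploration $\rho_t$ ensures after finitely many rounds. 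Without that, $\mathcal L_t$ could be singular on the sum-zero subspace and the bound would be vacuous.

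The main obstacle is the concentration step. The counts $N_{ij}(t)$ are adaptively chosen stopping-type quantities, so the subgaussian bound cannot be applied at a fixed sample size. I plan the union bound over $N$ described above, which is exactly where the factor $t$ inside the logarithm comes from, and will verify carefully that the increments, conditional on $\mathcal F_{s-1}$ and $A_s$, satisfy Assumption~\ref{ass:subg}. If that proves delicate, the fallback is a self-normalized martingale bound on $\bm{S}_t$ in the $\mathcal L_t^{-1}$ norm, as in the stopping-rule threshold.
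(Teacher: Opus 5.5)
Your skeleton matches the paper's proof in four places:
\begin{itemize}
\item You use the basic inequality $\ell_t(\hat{\bm{\theta}}_t)\ge\ell_t(\bm{\theta})$ with curvature $\underline a$ on $[-2R,2R]$. Your Taylor form is equivalent to the paper's per-pair bound $d_{ij}(\bm{\theta},\bm{\theta}+\bm{\Delta})\ge\frac{\underline a}{2}(\Delta_i-\Delta_j)^2$.
\item You use the complete-graph Laplacian bound $\bm{\Delta}^\top\mathcal L_t\bm{\Delta}\ge nN_{\min}(t)\|\bm{\Delta}\|_2^2$.
\item You use per-pair sub-Gaussian bounds with a union bound over the $t$ possible values of $N_{ij}(t)$ and the $|\mathcal P|$ pairs. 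The paper uses a fixed-$\lambda$ supermartingale for each count value instead of a stack-of-rewards coupling, with the same $2te^{-u}$ outcome.
\item You read $t_{\mathrm{warm}}$ as guaranteeing $N_{\min}(t)\ge 1$ deterministically via forced exploration, which is what the argument needs.
\end{itemize}

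However, your Cauchy--Schwarz step is false as written. The per-pair bounds only give
\[
\langle\bm{S}_t,\bm{\Delta}\rangle\le\sigma\sqrt{2\log(\cdot)}\,\textstyle\sum_{(i,j)}\sqrt{N_{ij}(t)}\,|\Delta_{ij}|.
\]
Passing from this $\ell_1$-type sum to $\bigl(\sum_{(i,j)}N_{ij}(t)\Delta_{ij}^2\bigr)^{1/2}$ costs a factor $\sqrt{|\mathcal P|}$. Written as in the paper,
\[
\langle\bm{S}_t,\bm{\Delta}\rangle\le(\bm{\Delta}^\top\mathcal L_t\bm{\Delta})^{1/2}\Bigl(\textstyle\sum_{(i,j)}M_{ij}(t)^2/N_{ij}(t)\Bigr)^{1/2},
\]
and the second factor is at most $\sigma\sqrt{2|\mathcal P|\log(2t|\mathcal P|/\delta)}$, not $\sigma\sqrt{2\log(2t|\mathcal P|/\delta)}$.

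The ``stronger'' bound of order $\sqrt{\log(\cdot)/(nN_{\min}(t))}$ you derived is therefore not merely unproven but false in general. Take the Gaussian-differences model with a balanced design of $N$ comparisons per pair and the box constraint inactive. Then $\|\hat{\bm{\theta}}_t-\bm{\theta}\|_2^2$ is $\sigma_0^2/(nN)$ times a $\chi^2_{n-1}$ variable, so it concentrates near $\sigma_0^2(n-1)/(nN)$. This exceeds your claimed $8\sigma_0^2\log(2t|\mathcal P|/\delta)/(nN)$ once $n$ is large relative to the logarithm.

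There is also no slack in the stated $\sqrt n$. With the missing factor restored,
\[
\|\bm{\Delta}\|_2^2\le\frac{8\sigma^2|\mathcal P|\log(\cdot)}{\underline a^2nN_{\min}(t)}=\frac{4\sigma^2(n-1)\log(\cdot)}{\underline a^2N_{\min}(t)},
\]
which is exactly the statement after using $n-1\le n$. The $\sqrt n$ is precisely the price of the per-pair union bound.

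The repair is simply the correct Cauchy--Schwarz above, which is the paper's proof. The cruder bound on $\|\bm{S}_t\|_2$ you mention would not work. It brings in the total count $t$ (e.g.\ through $\mathrm{tr}\,\mathcal L_t=2t$) rather than $N_{\min}(t)$, and does not give the stated form under the unbalanced allocations C-tracking produces.
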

\begin{corollary}\label{cor:mle-stabilize}
\( \hat{\bm{\theta}}_t \to \bm{\theta}  \)
almost surely. Consequently, there exists almost surely a finite time $t_{\mathrm{stab}}$ such that for all $t\ge t_{\mathrm{stab}}$,
\(
  \hat S_t=S^*(\bm{\theta}), \text{ and } \hat B_t=B(\bm{\theta}).
\)
\end{corollary}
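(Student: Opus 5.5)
The plan is to combine Lemma~\ref{lem:mle-hp-simple} with a Borel--Cantelli argument. The key input is that C-tracking with forced exploration makes $N_{\min}(t)$ grow polynomially. Since $\tilde w_{s,ij}\ge \rho_s/|\mathcal P| = s^{-\gamma}/|\mathcal P|$, we get
\[
P_{ij}(t)\ge \frac{1}{|\mathcal P|}\sum_{s=1}^t s^{-\gamma}\ \ge\ \frac{t^{1-\gamma}-1}{(1-\gamma)|\mathcal P|}.
\]
The standard C-tracking property of \citet{garivier2016optimal} gives $|N_{ij}(t)-P_{ij}(t)|\le |\mathcal P|-1$ (more crudely, $N_{ij}(t)\ge P_{ij}(t)-(|\mathcal P|-1)$). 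Hence, deterministically, $N_{\min}(t)\ge c\,t^{1-\gamma}-C$ for constants $c,C>0$ depending only on $|\mathcal P|$ and $\gamma$. I would state this as a short auxiliary claim, citing the tracking lemma rather than reproving it.

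Next, apply Lemma~\ref{lem:mle-hp-simple} at each $t\ge t_{\mathrm{warm}}$ with the summable confidence level $\delta_t:=t^{-2}$. Let $E_t$ be the event that the bound fails at $t$. Then $\Prob(E_t)\le t^{-2}$, so $\sum_t\Prob(E_t)<\infty$, and Borel--Cantelli shows that almost surely only finitely many $E_t$ occur. On the complement, for all large $t$,
\[
\|\hat{\bm\theta}_t-\bm\theta\|_2\le \frac{2\sigma}{\underline a}\sqrt{\frac{n\log(2t^3|\mathcal P|)}{c\,t^{1-\gamma}-C}}\longrightarrow 0,
\]
because $\log t/t^{1-\gamma}\to0$. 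This proves $\hat{\bm\theta}_t\to\bm\theta$ almost surely. One subtlety needs checking: Lemma~\ref{lem:mle-hp-simple} treats $N_{\min}(t)$ as random. Because our lower bound on $N_{\min}(t)$ is deterministic, substituting it is harmless.

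For the stabilization claim, let $\Delta:=\theta_{(k)}-\theta_{(k+1)}>0$, which is positive by Assumption~\ref{ass:theta-bounded}. On the almost-sure event above, there is a random finite $t_{\mathrm{stab}}$ with $\|\hat{\bm\theta}_t-\bm\theta\|_\infty\le\|\hat{\bm\theta}_t-\bm\theta\|_2<\Delta/2$ for all $t\ge t_{\mathrm{stab}}$. Then for every $u\in S^*(\bm\theta)$ and $v\notin S^*(\bm\theta)$,
\[
\hat\theta_{t,u}>\theta_u-\Delta/2\ge\theta_{(k)}-\Delta/2=\theta_{(k+1)}+\Delta/2\ge\theta_v+\Delta/2>\hat\theta_{t,v}.
\]
So the $k$ largest entries of $\hat{\bm\theta}_t$ are exactly $S^*(\bm\theta)$, and they are strictly separated from the rest. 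Hence $\hat{\bm\theta}_t\in\Theta^{\mathrm{gap}}$, $\hat S_t=S^*(\bm\theta)$, and $\hat B_t=B(\bm\theta)$, since $B(\cdot)$ depends on $\bm\theta$ only through $S^*$.

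The main obstacle is the first step: obtaining a deterministic polynomial lower bound on $N_{\min}(t)$ from the C-tracking rule. Everything else is routine. If the cited tracking bound requires adaptation, because the targets here are the $\tilde{\bm w}_s$ rather than an argmax of the oracle, I would prove it directly. The argument is an induction showing $\max_{ij}(P_{ij}(t)-N_{ij}(t))\le |\mathcal P|-1$. It uses only that each $\tilde{\bm w}_s$ is a probability vector and that the rule always samples the most under-sampled pair.
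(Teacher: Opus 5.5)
Your proposal is correct and follows essentially the same route as the paper. Both apply Lemma~\ref{lem:mle-hp-simple} with $\delta_t=t^{-2}$ together with Borel--Cantelli, use the deterministic C-tracking lower bound on $N_{\min}(t)$ (the paper's Lemma~\ref{lem:ctrack}(d)--(e)) to send the deviation bound to zero, and then apply the top-$k$ gap argument; the only difference is that the paper uses a $\Delta_k/4$ margin where you use $\Delta/2$ with strict inequalities.
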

For the analysis in Proposition~\ref{prop:Gamma-hp-new} we work on a truncated window. Fix $t$ and define the burn-in index $b_t:=\lceil t^{1/4}\rceil$.
A union bound applied to Lemma~\ref{lem:mle-hp-simple} over $s=b_t,\ldots,t$ yields an event of probability
at least $1-t^{-p}$ on which $\hat B_s=B(\bm{\theta})$ for all $s\in\{b_t,\ldots,t\}$.
This is the regime in which the oracle-tracking regret analysis applies.

\begin{proposition}[High-probability oracle-rate lower bound]\label{prop:Gamma-hp-new}
Fix any $p>1$. There exist constants $C_p<\infty$ and $t_p<\infty$ such that for all $t\ge t_p$,
\[
\begin{aligned}
&\Prob_{\bm{\theta}}\!\Bigg(
\Gamma(\hat{\bm{w}}_t^{\mathrm{emp}};\bm{\theta})
\le \Gamma^*(\bm{\theta})
-\underbrace{C_p\big(t^{-(1-\alpha)}+t^{-\alpha}+\sqrt{\tfrac{\log t}{t}}\big)}_{\text{oracle-tracking regret}} \\
&\qquad-\underbrace{C_p\,t^{-(1-\gamma)/2}\sqrt{\log t}}_{\text{estimation}}
-\underbrace{D_{\max}|\mathcal P|\big(\tfrac{|\mathcal P|-1}{t}+\tfrac{t^{-\gamma}}{1-\gamma}\big)}_{\text{mixing}} \\
&\qquad-\underbrace{2D_{\max}\,t^{-3/4}}_{\text{burn-in}}
\Bigg)
\le t^{-p}.
\end{aligned}
\]
\end{proposition}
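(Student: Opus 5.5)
Our plan is to decompose $\Gamma^*(\bm{\theta})-\Gamma(\hat{\bm{w}}_t^{\mathrm{emp}};\bm{\theta})$ into the four labelled error terms and bound each on a common good event of probability at least $1-t^{-p}$.

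\textbf{Good event.} Let $\mathcal E_t$ be the intersection of two events:
\begin{itemize}[noitemsep,topsep=0pt]
\item the union-bound event described before the proposition, on which $\hat B_s=B(\bm{\theta})$ and Lemma~\ref{lem:mle-hp-simple} holds for all $s\in\{b_t,\dots,t\}$ with confidence $t^{-p-2}$ per step;
\item a martingale concentration event for the stochastic gradients, controlled by Azuma--Hoeffding with boundedness $|\hat g|\le m D_{\max}$. Here $D_{\max}$ bounds every $d_{ab}(\bm{\theta},\bm{\theta}')$ over $\Theta\times\Theta$, which is finite since $|\eta|\le 2R$ and $A''\le\overline\sigma^2$.
\end{itemize}
Forced exploration through C-tracking gives $N_{\min}(s)\gtrsim s^{1-\gamma}$ (the $\rho_s=s^{-\gamma}$ mixing yields $P_{ij}(s)\ge s^{1-\gamma}/((1-\gamma)|\mathcal P|)$ minus the tracking slack $|\mathcal P|$). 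Therefore on $\mathcal E_t$ we have $\|\hat{\bm{\theta}}_s-\bm{\theta}\|\lesssim s^{-(1-\gamma)/2}\sqrt{\log t}$.

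\textbf{Step 1 (mixing and tracking).} C-tracking (Garivier--Kaufmann, Lemma 7) gives $\|N(t)/t-\frac1t\sum_{s\le t}\tilde{\bm{w}}_s\|_\infty\le(|\mathcal P|-1)/t$. Also $\frac1t\sum_s\|\tilde{\bm{w}}_s-\bm{w}_s\|_1\le \frac2t\sum_s s^{-\gamma}\le 2t^{-\gamma}/(1-\gamma)$. Each $\gamma_{ij}(\cdot;\bm{\theta})$ is an infimum of linear functions of $\bm{w}$ with coefficients in $[0,D_{\max}]$, hence $D_{\max}$-Lipschitz in $\ell_1$, and so is $\Gamma(\cdot;\bm{\theta})$. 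This yields the mixing term, with $\Gamma(\hat{\bm{w}}^{\mathrm{emp}}_t)\ge\Gamma(\bar{\bm{w}}_t)-\text{mixing}$, where $\bar{\bm{w}}_t=\frac1t\sum_{s\le t}\bm{w}_s$.

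\textbf{Step 2 (burn-in).} Replacing $\bar{\bm{w}}_t$ by the average over $s\in\{b_t,\dots,t\}$ costs at most $2D_{\max}b_t/t\approx 2D_{\max}t^{-3/4}$ by the same Lipschitz bound.

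\textbf{Step 3 (regret).} By concavity of $\Gamma$ (Lemma~\ref{lem:gamma-basic}), $\Gamma(\text{avg}_s\bm{w}_s;\bm{\theta})\ge \frac{1}{t-b_t}\sum_s F(\bm{w}_s,\bm{q};\bm{\theta})$ for the minimizing $\bm{q}$. We then run the standard saddle-point regret argument for the game $F(\cdot,\cdot;\hat{\bm{\theta}}_s)$, whose boundary set is fixed at $B(\bm{\theta})$ on $\mathcal E_t$.
\begin{itemize}[noitemsep,topsep=0pt]
\item Entropic FTRL with $\mu_s=s^{-\alpha}$ and gradients bounded by $mD_{\max}$ has regret $O(\log|\mathcal P|/\mu_t+\sum_s\mu_s)=O(t^\alpha+t^{1-\alpha})$ against the $\bm{w}^*$ of $F(\cdot,\cdot;\bm{\theta})$. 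The same holds for $\bm{q}$. The restart at $b_t$ only affects cumulative scores additively and is absorbed into the burn-in term.
\item Replacing true gradients by the unbiased estimates adds a martingale term of order $\sqrt{t\log t}$ on $\mathcal E_t$.
\end{itemize}
Dividing by $t$ gives $t^{-(1-\alpha)}+t^{-\alpha}+\sqrt{\log t/t}$. Combined with the saddle inequalities, this yields $\frac1t\sum_s F(\bm{w}_s,\bm{q};\hat{\bm{\theta}}_s)\ge\Gamma^*$ minus the regret, up to the parameter-drift error below.

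\textbf{Step 4 (estimation).} We need $|\gamma_{ij}(\bm{w};\hat{\bm{\theta}}_s)-\gamma_{ij}(\bm{w};\bm{\theta})|\le L\|\hat{\bm{\theta}}_s-\bm{\theta}\|$ and $|\Gamma^*(\hat{\bm{\theta}}_s)-\Gamma^*(\bm{\theta})|\le L\|\hat{\bm{\theta}}_s-\bm{\theta}\|$ uniformly in $\bm{w}$.
\begin{itemize}[noitemsep,topsep=0pt]
\item The map $\bm{\theta}\mapsto d_{ab}(\bm{\theta},\bm{\theta}')$ is Lipschitz on the compact set with constant $\le 4R\overline\sigma^2$.
\item The constraint set $\Theta_{ij}$ does not depend on $\bm{\theta}$ once $\hat B_s=B(\bm{\theta})$.
\end{itemize}
Hence an infimum of uniformly Lipschitz functions is Lipschitz. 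Averaging $s^{-(1-\gamma)/2}\sqrt{\log t}$ over $s\le t$ gives $C_p t^{-(1-\gamma)/2}\sqrt{\log t}$.

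\textbf{Main obstacle.} I expect the hardest step to be Step 3 with a moving game. The iterate sequence is driven by gradients of $F(\cdot,\cdot;\hat{\bm{\theta}}_s)$, and these must be compared to the fixed $\bm{\theta}$ game while keeping the bias to the estimation rate. I would handle this by bounding gradient drift through the same Lipschitz constants and charging it to the estimation term. If that fails, I would only bound $\sum_s|F(\bm{w}_s,\bm{q}_s;\hat{\bm{\theta}}_s)-F(\bm{w}_s,\bm{q}_s;\bm{\theta})|$, avoiding any need for continuity of the projections $\bm{\theta}^*_{ij}$ in $\bm{\theta}$.
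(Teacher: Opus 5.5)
Your proposal is correct and is essentially the paper's proof: the same MLE/boundary-correctness event with per-step confidence $t^{-(p+2)}$, FTRL-plus-Azuma regret bounds for the estimated games $F(\cdot,\cdot;\hat{\bm{\theta}}_s)$, transfer to the true game via the uniform bound $\sup_{\bm{w},\bm{q}}|F(\bm{w},\bm{q};\hat{\bm{\theta}}_s)-F(\bm{w},\bm{q};\bm{\theta})|\le L\|\hat{\bm{\theta}}_s-\bm{\theta}\|_2$, then Jensen/saddle inequalities, burn-in truncation and C-tracking mixing. Your Step~4 bound is exactly what the paper uses, applied at the comparator points as well as at $(\bm{w}_s,\bm{q}_s)$, so no gradient-drift control is needed. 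The remaining differences are cosmetic: the paper first converts the estimated-game regrets into true-game regrets and then applies a regret-to-duality-gap lemma, whereas you chain the inequalities directly; and there is no actual restart at $b_t$, since the pre-$b_t$ rounds are simply dropped from the regret sum at cost $2G(b_t-1)$, which is absorbed into the $C_p$ terms.
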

Proposition~\ref{prop:Gamma-hp-new} links the information rate our algorithm achieves empirically to the optimal rate as a function of $t$. There are four terms contributing to finite time non-optimality which all vanish asymptotically. The first error term is from the oracle tracking regret from using FTRL. The second and third terms represent deviations from optimal allocation arising from estimation error and mixing bias. Here, there is a trade-off controlled by $\gamma$. A larger $\gamma$ means faster decay of the mixing weight, thereby decreasing mixing bias; however, estimation bias increases. Ignoring polylog factors, $\gamma = 1/3$  maximizes the minimum decay rate of these two terms. Finally, the last term corresponds to the burn-in phase. In this phase, the estimated boundary may not be the correct boundary, but we can still bound the overall loss accumulated in this phase.

Next, we show that the procedure is $\delta$-correct.
\begin{proposition}\label{prop:delta-pac}
Fix $\delta\in(0,1)$.
Run Algorithm~\ref{alg:main} with stopping time $\tau_\delta$ defined in ~\eqref{eq:tau-delta}. Then the procedure is $\delta$-correct, i.e.
\begin{itemize}[topsep=0pt]
\item[\textup{(i)}] $\Prob_{\bm{\theta}}\!\big(\hat S_{\tau_\delta}\neq S^*(\bm{\theta})\big)\le \delta$.
\item[\textup{(ii)}] $\Prob_{\bm{\theta}}(\tau_\delta<\infty)=1$.
\end{itemize}
\end{proposition}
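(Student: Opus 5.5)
For part (i), the plan is to reduce the error event to a deviation of the generalized likelihood ratio at the true parameter, and then bound that deviation uniformly in time. Suppose $\hat S_{\tau_\delta}\neq S^*(\bm{\theta})$. Then there are $u\in S^*(\bm{\theta})\setminus \hat S_{\tau_\delta}$ and $v\in \hat S_{\tau_\delta}\setminus S^*(\bm{\theta})$. The pair $\{u,v\}$ is a boundary pair of $\hat{\bm{\theta}}_{\tau_\delta}$, with $v$ inside and $u$ outside. The true parameter satisfies $\theta_u\ge\theta_v$, so it reverses the estimated order of this pair, i.e.\ $\bm{\theta}\in\Theta_{uv}(\tau_\delta)$. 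Consequently
\[
\min_{(i,j)\in\hat B_{\tau_\delta}}Z_{ij}(\tau_\delta)\le \ell_{\tau_\delta}(\hat{\bm{\theta}}_{\tau_\delta})-\ell_{\tau_\delta}(\bm{\theta}).
\]
Stopping requires this minimum to be at least $\beta(\tau_\delta,\delta)$, so the error event is contained in
\[
\bigl\{\exists t:\ \ell_t(\hat{\bm{\theta}}_t)-\ell_t(\bm{\theta})\ge \beta(t,\delta)\bigr\}.
\]
It therefore suffices to show that this event has probability at most $\delta$.

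To bound it, I write $\ell_t(\bm{\theta}')-\ell_t(\bm{\theta})=(\bm{\theta}'-\bm{\theta})^\top S_t-\sum_s\Delta A_s$. Here $S_t=\sum_s \bm{x}_{A_s}(T(Y_s)-A'(\eta_{A_s}(\bm{\theta})))$ is a martingale, and the Bregman terms satisfy $\Delta A_s=A(\eta'_s)-A(\eta_s)-A'(\eta_s)(\eta'_s-\eta_s)$. Let $\Delta=\bm{\theta}'-\bm{\theta}$. I then follow the method-of-mixtures argument of \citet{kaufmann2021mixture}. The exponential-family likelihood ratios $\exp(\ell_t(\bm{\theta}+\bm{\Delta})-\ell_t(\bm{\theta}))$ are nonnegative martingales of mean one. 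Mixing them over a Gaussian prior $\bm{\Delta}\sim N(0,\lambda^{-1}I)$ gives a mixture martingale $M_t$, and Ville's inequality gives $\Prob(\exists t: M_t\ge 1/\delta)\le\delta$.

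The main step is a lower bound $\log M_t\ge \ell_t(\hat{\bm{\theta}}_t)-\ell_t(\bm{\theta})-\frac{\lambda}{2}\|\hat{\bm{\theta}}_t\|^2-\frac12\log\det(I+\frac{\overline\sigma^2}{\lambda}\mathcal L_t)$, which turns the Ville bound into the threshold \eqref{eq:beta}. I plan to get it as follows.
\begin{itemize}
\item Center a Gaussian prior at $\hat{\bm{\theta}}_t$ rather than at $\bm{\theta}$. Because the prior may be chosen independently of the data, the cleaner route is to mix directly over $\bm{\theta}'$ with prior $N(0,\lambda^{-1}I)$, which is why $\|\hat{\bm{\theta}}_t\|^2$ rather than $\|\hat{\bm{\theta}}_t-\bm{\theta}\|^2$ appears.
\item Expand $\ell_t$ around its maximizer $\hat{\bm{\theta}}_t$. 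The curvature is at most $\overline\sigma^2\mathcal L_t$ by Assumption~\ref{ass:expfam-global}, so $\ell_t(\bm{\theta}')\ge \ell_t(\hat{\bm{\theta}}_t)+\nabla\ell_t(\hat{\bm{\theta}}_t)^\top(\bm{\theta}'-\hat{\bm{\theta}}_t)-\frac{\overline\sigma^2}{2}\|\bm{\theta}'-\hat{\bm{\theta}}_t\|^2_{\mathcal L_t}$.
\item Compute the Gaussian integral of this quadratic lower bound. It produces exactly the determinant term and a penalty of the form $\frac{\lambda}{2}\|\hat{\bm{\theta}}_t\|^2$.
\end{itemize}
The first-order term is the main obstacle, because $\hat{\bm{\theta}}_t$ is a constrained MLE and its gradient need not vanish. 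I will handle it using the KKT conditions on $\Theta$: for feasible $\bm{\theta}'$ the gradient term has a sign that only helps. A second issue is that the mixture must integrate over all of $\R^n$, not only over $\Theta$. I plan to restrict the integral to the subspace $\sum_i\theta'_i=0$, where $\mathcal L_t$ acts and the quadratic bound holds everywhere on $\R$, so that the $\|\cdot\|_\infty$ constraint is not needed. If the sign argument fails, I will instead bound from below only over the region where the expansion holds and accept a slightly different constant. I expect this lemma to carry essentially all of the difficulty of part (i).

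For part (ii), it suffices to show that $\tau_\delta$ is almost surely finite. The forced exploration $\rho_t=t^{-\gamma}$ under C-tracking gives $N_{\min}(t)\gtrsim t^{1-\gamma}$. Combined with Corollary~\ref{cor:mle-stabilize}, this gives $\hat{\bm{\theta}}_t\to\bm{\theta}$ and $\hat B_t=B(\bm{\theta})$ eventually. On that event each $Z_{ij}(t)\ge \inf_{\bm{\theta}'\in\Theta_{ij}}\ell_t(\hat{\bm{\theta}}_t)-\ell_t(\bm{\theta}')$, and by a law of large numbers this quantity grows like $t\cdot \gamma_{ij}(\hat{\bm{w}}^{\mathrm{emp}}_t;\bm{\theta})-o(t)$. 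Since every pair is sampled at rate at least $t^{-\gamma}$, this growth is at least of order $ct^{1-\gamma}$ with $c>0$. On the other side, $\beta(t,\delta)=\log(1/\delta)+O(\log t)$, because $\|\hat{\bm{\theta}}_t\|\le\sqrt n R$ and $\log\det(I+\frac{\overline\sigma^2}{\lambda}\mathcal L_t)\le n\log(1+\frac{2\overline\sigma^2 t}{\lambda})$. The statistic therefore eventually exceeds the threshold, and $\tau_\delta<\infty$ almost surely. Alternatively, one can apply Proposition~\ref{prop:Gamma-hp-new} with Borel–Cantelli, since $\sum_t t^{-p}<\infty$.
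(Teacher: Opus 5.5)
Your reduction of the error event to $\{\exists t:\ \ell_t(\hat{\bm{\theta}}_t)-\ell_t(\bm{\theta})\ge\beta(t,\delta)\}$, the Gaussian mixture martingale on $\Theta_0$, and Ville's inequality follow the paper's route. The step you single out as the crux of part (i), however, is resolved the wrong way.

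For a maximizer of a concave function over a convex set, the first-order condition is $\nabla\ell_t(\hat{\bm{\theta}}_t)^\top(\bm{\theta}'-\hat{\bm{\theta}}_t)\le 0$ for every feasible $\bm{\theta}'$. In your lower bound $\ell_t(\bm{\theta}')\ge\ell_t(\hat{\bm{\theta}}_t)+\nabla\ell_t(\hat{\bm{\theta}}_t)^\top(\bm{\theta}'-\hat{\bm{\theta}}_t)-\frac{\overline\sigma^2}{2}\|\bm{\theta}'-\hat{\bm{\theta}}_t\|_{\mathcal L_t}^2$, this term therefore has the sign that hurts, not helps. For the infeasible $\bm{\theta}'$ that make up the rest of $\Theta_0$, over which you integrate, KKT gives no sign at all. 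Your fallback of restricting the integral is also misdirected. Since $A''\le\overline\sigma^2$ on all of $\R$, the quadratic bound already holds on the whole subspace, and truncating the integral would only add a lossy, data-dependent term to the threshold.

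The missing observation is that a linear tilt can only increase a Gaussian integral. Write $\bm{y}=\bm{\theta}'-\hat{\bm{\theta}}_t\in\Theta_0$, let $H=\lambda I+\overline\sigma^2\mathcal L_t$ be restricted to $\Theta_0$, and set $\bm{b}=\nabla\ell_t(\hat{\bm{\theta}}_t)-\lambda\hat{\bm{\theta}}_t\in\Theta_0$. Then the exponent of $\exp(\ell_t(\bm{\theta}')-\ell_t(\bm{\theta})-\frac{\lambda}{2}\|\bm{\theta}'\|_2^2)$ is at least $\ell_t(\hat{\bm{\theta}}_t)-\ell_t(\bm{\theta})-\frac{\lambda}{2}\|\hat{\bm{\theta}}_t\|_2^2+\bm{b}^\top\bm{y}-\frac12\bm{y}^\top H\bm{y}$. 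Moreover, $\int_{\Theta_0}\exp(\bm{b}^\top\bm{y}-\frac12\bm{y}^\top H\bm{y})\,d\bm{y}=(2\pi)^{d/2}\det(H)^{-1/2}\exp(\frac12\bm{b}^\top H^{-1}\bm{b})\ge(2\pi)^{d/2}\det(H)^{-1/2}$, whatever the sign of $\bm{b}$. Dropping $\frac12\bm{b}^\top H^{-1}\bm{b}$ gives exactly the bound you want. The paper avoids the issue in an equivalent way. It expands the penalized log-likelihood $\bm{\vartheta}\mapsto\ell_t(\bm{\vartheta})-\frac{\lambda}{2}\|\bm{\vartheta}\|_2^2$ on $\Theta_0$ around its unconstrained maximizer, where the gradient vanishes by $\lambda$-strong concavity. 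Only afterwards does it lower-bound the value at that maximizer by the value at $\hat{\bm{\theta}}_t$.

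In part (ii), your primary argument does not close for all admissible $\gamma$. Forced exploration only gives $t\,\gamma_{ij}(\hat{\bm{w}}_t^{\mathrm{emp}};\bm{\theta})\gtrsim c\,t^{1-\gamma}$. The law-of-large-numbers error you subtract is only $o(t)$: the uniform deviation of $-\frac1t\ell_t$ from its compensator over $\Theta$ is of order $\sqrt{\log t/t}$, so the subtracted term is of order $\sqrt{t\log t}$. Once $\gamma\ge 1/2$, $c\,t^{1-\gamma}$ does not dominate it.

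What you need is linear growth. First, $\Gamma(\hat{\bm{w}}_t^{\mathrm{emp}};\bm{\theta})\to\Gamma^*(\bm{\theta})$ almost surely, via Proposition~\ref{prop:Gamma-hp-new} and Borel--Cantelli; this is your one-line alternative and is the paper's route. Second, $\Gamma^*(\bm{\theta})>0$, which you should verify. If $u$ is the top-$k$ endpoint of a boundary pair, then $\theta_u-\theta_v>0\ge\theta'_u-\theta'_v$ on $\Theta_{uv}$. Continuity and compactness then give $\inf_{\bm{\theta}'\in\Theta_{uv}}d_{uv}(\bm{\theta},\bm{\theta}')>0$, so already the uniform design has a positive rate. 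With both facts, $\min_{(i,j)\in\hat B_t}Z_{ij}(t)/t\to\Gamma^*(\bm{\theta})>0$ while $\beta(t,\delta)=O(\log t)$, so that alternative should be your main argument. Forced exploration alone could be made to work only with a self-normalized bound on the score martingale $\sum_s\bm{x}_{A_s}(T(Y_s)-A'(\eta_{A_s}(\bm{\theta})))$ in the $\mathcal L_t$-geometry, in place of a uniform one.
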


For part (ii), we show that the threshold $\beta(t,\delta)$ grows at most logarithmically in $t$,
whereas $\min_{(i,j)\in\hat B_t} Z_{ij}(t)$ eventually grows linearly in $t$. Consequently, almost surely there exists a finite time at which the stopping condition is met.

Finally, we show that the expected stopping time matches the information-theoretic lower bound.
\begin{theorem}[Asymptotic optimality in expectation]\label{thm:asymp-opt-exp}
Fix $\bm{\theta}\in\Theta^{\mathrm{gap}}$ and let $\tau_\delta$ be the stopping time of Algorithm~\ref{alg:main}.
Then
\[
\lim_{\delta\to 0}\ \frac{\E_{\bm{\theta}}[\tau_\delta]}{\log(1/\delta)}
\;=\;
\frac{1}{\Gamma^*(\bm{\theta})}.
\]
\end{theorem}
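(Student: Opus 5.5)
The lower bound $\liminf_{\delta\to0}\E_{\bm{\theta}}[\tau_\delta]/\log(1/\delta)\ge 1/\Gamma^*(\bm{\theta})$ follows from Theorem~\ref{thm:lower-bound}, because Proposition~\ref{prop:delta-pac} shows Algorithm~\ref{alg:main} is $\delta$-correct. So the work is the upper bound $\limsup_{\delta\to 0}\E_{\bm{\theta}}[\tau_\delta]/\log(1/\delta)\le 1/\Gamma^*(\bm{\theta})$. I would follow the Garivier--Kaufmann template: construct a ``good event'' $\mathcal E_T$ for each horizon $T$, show the algorithm stops by time $T$ on $\mathcal E_T$ once $T\gtrsim \log(1/\delta)/\Gamma^*$, and use $\E[\tau_\delta]\le T_0+\sum_{T\ge T_0}\Prob(\mathcal E_T^c)$ with a summable tail.

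\textbf{Step 1 (linking $Z$ to $\Gamma$).} Fix $\epsilon>0$. For each $t$, Proposition~\ref{prop:Gamma-hp-new} with some $p>1$ gives an event $G_t$ with $\Prob(G_t^c)\le t^{-p}$ on which $\Gamma(\hat{\bm w}^{\mathrm{emp}}_t;\bm\theta)\ge\Gamma^*(\bm\theta)-r_t$, where $r_t\to0$ is deterministic. I also intersect with the MLE event of Lemma~\ref{lem:mle-hp-simple} at level $t^{-p}$. Since $N_{\min}(t)\gtrsim t^{1-\gamma}$ by forced exploration, this event gives $\|\hat{\bm\theta}_t-\bm\theta\|_2\le \epsilon_t\to0$, and hence $\hat B_t=B(\bm\theta)$ for large $t$.

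On this event I lower bound $Z_{ij}(t)$. Let $\Lambda(\bm\theta,\bm\theta'):=\ell_t(\bm\theta)-\ell_t(\bm\theta')$. Exponential-family algebra gives
\[
\ell_t(\hat{\bm\theta}_t)-\ell_t(\bm\theta')\ \ge\ t\,D_{\hat{\bm w}^{\mathrm{emp}}_t}(\hat{\bm\theta}_t\|\bm\theta')
\]
up to a term involving the score at $\hat{\bm\theta}_t$. That term is nonnegative by first-order optimality of the MLE over the convex set $\Theta$, which suffices for $\bm\theta$ interior and $t$ large. Therefore
\[
Z_{ij}(t)\ge t\inf_{\bm\theta'\in\Theta_{ij}}D_{\hat{\bm w}^{\mathrm{emp}}_t}(\hat{\bm\theta}_t\|\bm\theta') = t\,\gamma_{ij}(\hat{\bm w}^{\mathrm{emp}}_t;\hat{\bm\theta}_t).
\]
Next I need continuity of $(\bm w,\bm\theta)\mapsto\Gamma(\bm w;\bm\theta)$ in $\bm\theta$, uniformly over $\bm w\in\Delta_{\mathcal P}$, near the true $\bm\theta$ with the boundary fixed. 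Each $d_{ab}$ is a Bregman divergence of $A$ with $A''\le\overline\sigma^2$ on a compact parameter range, so it is uniformly Lipschitz in its first argument. The infimum over $\bm\theta'$ and the average over $\bm w$ preserve this Lipschitz bound, giving $|\Gamma(\bm w;\hat{\bm\theta}_t)-\Gamma(\bm w;\bm\theta)|\le L\epsilon_t$. Combining, on the good event,
\[
\min_{(i,j)\in\hat B_t}Z_{ij}(t)\ge t(\Gamma^*(\bm\theta)-r_t-L\epsilon_t).
\]

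\textbf{Step 2 (threshold growth and stopping).} I bound the threshold deterministically. Since $\|\hat{\bm\theta}_t\|_2^2\le nR^2$ and $\mathcal L_t\preceq 2t I_n$ (each $\bm x_{ab}$ has squared norm $2$), we get $\beta(t,\delta)\le\log(1/\delta)+c_1+\tfrac n2\log(1+c_2t)$. For $T$ large, define $\mathcal E_T:=\bigcap_{t=\lceil \sqrt T\rceil}^T G_t$, so that $\Prob(\mathcal E_T^c)\le\sum_{t\ge\sqrt T}t^{-p}\lesssim T^{(1-p)/2}$. Choosing $p>3$ makes this summable in $T$. On $\mathcal E_T$, for $t\in[\sqrt T,T]$ with $T$ large, $r_t+L\epsilon_t\le\epsilon$. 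Hence stopping occurs before $T$ as soon as $T(\Gamma^*-\epsilon)\ge\log(1/\delta)+c_1+\tfrac n2\log(1+c_2T)$. Let $T_0(\delta)$ be the smallest such $T$ beyond the burn-in; then $\limsup_{\delta\to0}T_0(\delta)/\log(1/\delta)\le 1/(\Gamma^*-\epsilon)$. Summing gives
\[
\E[\tau_\delta]\le T_0(\delta)+\sum_T\Prob(\mathcal E_T^c)=T_0(\delta)+O(1).
\]
Dividing by $\log(1/\delta)$, letting $\delta\to0$, and then $\epsilon\to0$ proves the claim.

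\textbf{Main obstacle.} The delicate step is Step 1: converting the likelihood-ratio statistic into $t\,\Gamma(\hat{\bm w}^{\mathrm{emp}}_t;\cdot)$ when the MLE is constrained to $\Theta$, and obtaining uniform continuity of $\Gamma$ in $\bm\theta$ despite the projections possibly hitting $\partial\Theta$. I would handle the first issue by noting that $\hat{\bm\theta}_t$ eventually lies in $\mathrm{int}(\Theta)$ on the good event, since Assumption~\ref{ass:theta-bounded} puts $\bm\theta$ in the interior. In that case the score vanishes and the identity is exact. For the second issue, I would avoid differentiability altogether and rely only on the Lipschitz property of $d_{ab}$ in its first argument, which holds uniformly over $\bm\theta'\in\Theta$.
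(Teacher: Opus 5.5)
Your proposal is correct. Its skeleton matches the paper's: the lower bound from Theorem~\ref{thm:lower-bound} plus $\delta$-correctness, a deterministic logarithmic envelope for $\beta(t,\delta)$, Proposition~\ref{prop:Gamma-hp-new} for $\Gamma(\hat{\bm{w}}^{\mathrm{emp}}_t;\bm{\theta})$, and a tail sum for $\E_{\bm{\theta}}[\tau_\delta]$.

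Where you differ is in how you link the GLR statistic to the information rate. The paper uses Lemma~\ref{lem:glr-compare}(b), the two-sided bound $|Z_{ij}(t)/t-\gamma_{ij}(\hat{\bm{w}}^{\mathrm{emp}}_t;\bm{\theta})|\le 2\Delta_t$. It obtains this by comparing the empirical negative log-likelihood with its conditional expectation uniformly over $\Theta$, and it then needs the separate concentration result Lemma~\ref{lem:Delta-hp}.

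You instead use the exact exponential-family identity $\ell_t(\hat{\bm{\theta}}_t)-\ell_t(\bm{\theta}')=t\,D_{\hat{\bm{w}}^{\mathrm{emp}}_t}(\hat{\bm{\theta}}_t\|\bm{\theta}')+\langle\nabla\ell_t(\hat{\bm{\theta}}_t),\hat{\bm{\theta}}_t-\bm{\theta}'\rangle$. The last term is nonnegative by first-order optimality of the MLE over the convex set $\Theta$. This gives $\min_{(i,j)\in\hat B_t}Z_{ij}(t)\ge t\,\Gamma(\hat{\bm{w}}^{\mathrm{emp}}_t;\hat{\bm{\theta}}_t)$ deterministically at every $t$. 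You then pass from $\hat{\bm{\theta}}_t$ to $\bm{\theta}$ with the Lipschitz bound of Lemma~\ref{lem:payoff-plug}, on the MLE event you already need for boundary correctness.

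Your route saves a concentration lemma. The cost is a transfer error of order $t^{-(1-\gamma)/2}\sqrt{\log t}$ instead of $\sqrt{\log t/t}$. Since that term already appears in Proposition~\ref{prop:Gamma-hp-new}, nothing is lost, even for the rate refinement. The paper's comparison, in exchange, is two-sided and independent of the MLE rate, and it is reused to show $Z_{\hat S_t}(t)/t\to\Gamma^*(\bm{\theta})$ almost surely for finite stopping.

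Two parts of your write-up can be simplified.
\begin{itemize}
\item The interiority argument in your ``main obstacle'' is unnecessary. The variational inequality $\langle\nabla\ell_t(\hat{\bm{\theta}}_t),\bm{\theta}'-\hat{\bm{\theta}}_t\rangle\le 0$ holds for all $\bm{\theta}'\in\Theta$ whether or not $\hat{\bm{\theta}}_t$ is interior.
\item The window $\bigcap_{t=\lceil\sqrt T\rceil}^{T}G_t$ is superfluous. The event $\{\tau_\delta>T\}$ only requires the stopping test to fail at time $T$, so the single-time good event at $T$ suffices, as in the paper. The tails are then summable for any $p>1$ rather than requiring $p>3$.
\end{itemize}
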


On a high level this follows from Proposition~\ref{prop:Gamma-hp-new}: for large $t$ the algorithm collects
information at a rate arbitrarily close to $\Gamma^*(\bm{\theta})$, so the stopping rule triggers after at most $\log(1/\delta)/\Gamma^*(\bm{\theta})$ samples (up to lower-order terms).
On the other hand, Theorem~\ref{thm:lower-bound} shows that no $\delta$-correct procedure can asymptotically do
better in expectation.

\begin{figure*}[t]
\centering
\includegraphics[width=0.95\textwidth]{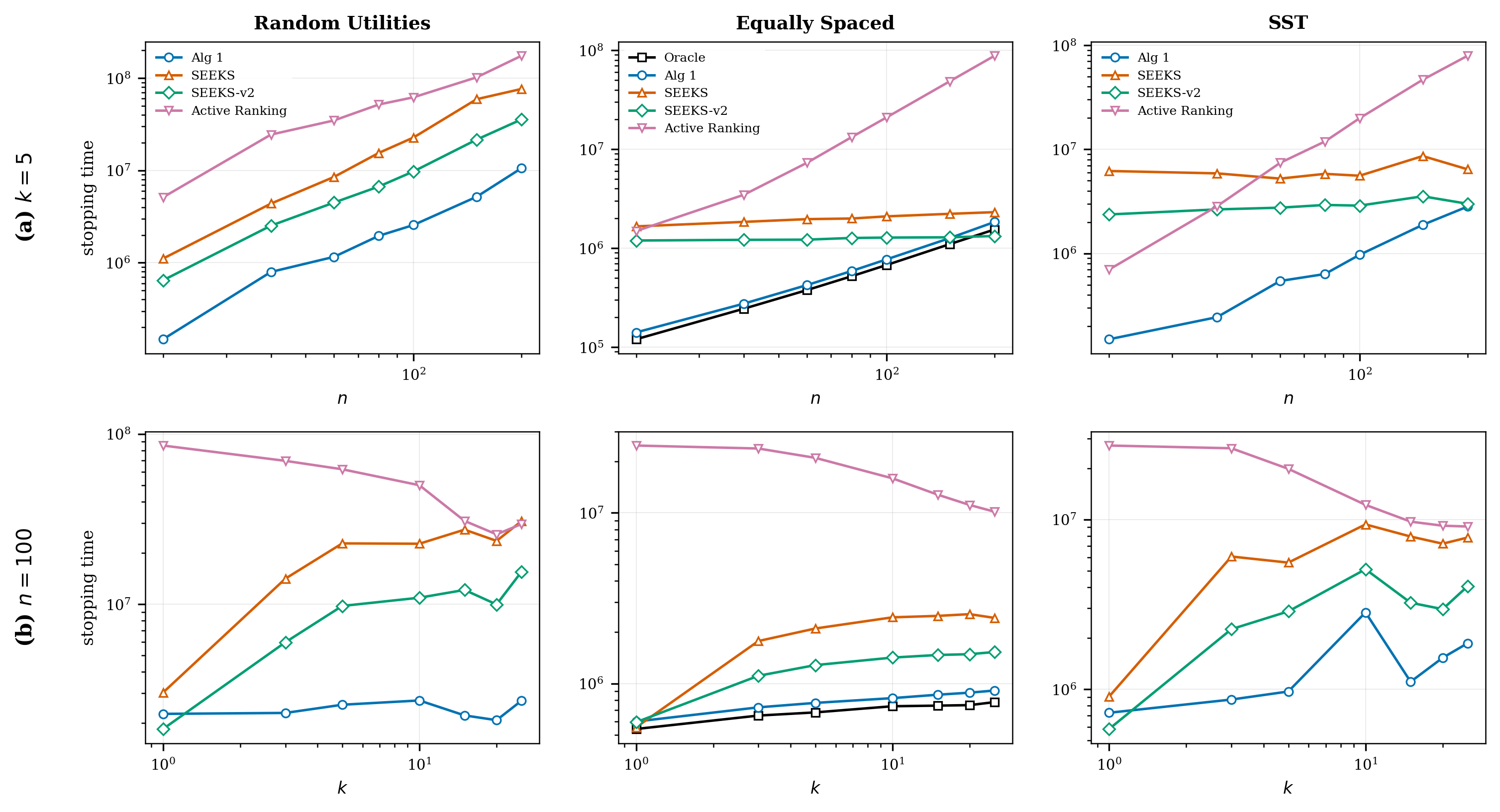}
\caption{Mean stopping times over 100 simulations with $\delta=0.01$. Columns correspond to the three instances; rows correspond to (a) varying $n$ with $k=5$ fixed, and (b) varying $k$ with $n=100$ fixed.}
\label{fig:results}
\end{figure*}

\section{Numerical Illustrations}\label{sec:experiments}
We evaluate Algorithm~\ref{alg:main} on three synthetic instances. For each, we vary the number of items $n$ (with $k=5$ fixed) and vary $k$ (with $n=100$ fixed) fixing $\delta=0.01$.

Following Proposition~\ref{prop:Gamma-hp-new}, we set the mixing exponent to $\gamma=1/3$. For the learning-rate schedule $\mu_t=t^{-\alpha}$, we set $\alpha=0.2$. With stochastic gradients, the early gradient directions can be noisy\footnote{See Appendix~\ref{subsec:alphatune} for further discussion on this and tuning $\alpha$.}, and if the decay rate is too fast, these will dominate the cumulative gradient scores $\bm{\Psi}^{(w)}_{t},\bm{\Psi}^{(q)}_{t}$ for finite $t$. $\alpha$ can be tuned by running the primal--dual algorithm on the oracle problem (with $\bm{\theta}$ known) over a collection of $\bm{\theta}$ instances, and selecting the value that converges both quickly and stably to $\bm{w}^*(\bm{\theta})$.

\textbf{Baselines.}
We compare against the most competitive fixed-confidence baselines for $\delta$-correct top-$k$ identification from pairwise comparisons:
(i) \emph{SEEKS} and \emph{SEEKS-v2} \citep{ren2020bestk_pairwise},
(ii) \emph{Active Ranking} from  \citet{heckel2019active}. 

We consider three data-generating processes; in all experiments, the comparison feedback is binary, and Algorithm~\ref{alg:main} is implemented using the Bradley--Terry likelihood.\footnote{We are not aware of fixed-confidence top-$k$ identification baselines for cardinal pairwise feedback that would allow a direct comparison.}

\textbf{Random utilities.}
Draw $\tilde\theta_1,\ldots,\tilde\theta_n \stackrel{\text{i.i.d.}}{\sim} \mathrm{Unif}[-5,5]$ and center by
$\bm{\theta} := \tilde{\bm{\theta}} - \frac{1}{n}\big(\sum_{i=1}^n \tilde\theta_i\big)\mathbf 1$.
For computational purposes, we reject the draw unless
$\theta_k - \theta_{k+1} \ge 0.02$.
Data are generated under the Bradley--Terry model.

\textbf{Equally spaced utilities.}
Fix $\mathrm{gap} = 0.1$ and set
$\theta_i = \frac{n+1-2i}{2}\cdot\mathrm{gap}$ for $i\in[n]$. Comparisons are generated under the Bradley--Terry model as above.

\textbf{SST preference matrix.}
To test the algorithm when the model is misspecified, we generate outcomes from a pairwise probability matrix $P$ that satisfies SST but is not induced by latent utilities. The construction closely follows the ``independent bands'' generative process of \citet{shah2017stochastic}.
Fix $\Delta=0.05$ and construct $P$ diagonal-by-diagonal. Set $p_{ii}=\tfrac12$ and draw $p_{i,i+1}\sim\mathrm{Unif}[p_{ii},p_{ii}+\Delta]$ for $i=1,\ldots,n-1$.
Then, for each subsequent diagonal band, define the lower bound
$L_{ij}:=\max\{p_{i,j-1},\,p_{i+1,j}\}$ and draw
$p_{ij}\sim \mathrm{Unif}\bigl[L_{ij},\,\min\{L_{ij}+\Delta,\,1\}\bigr]$.
This ensures $p_{ij}\ge \max\{p_{i\ell},p_{\ell j}\}$ for all $i<\ell<j$, satisfying SST. Similar to the random utilities instance, we enforce $p_{k,k+1} \ge 0.51$ for computational reasons.

\subsection{Results}
Figure~\ref{fig:results} presents the mean stopping time across 100 simulations on a log-log scale. All algorithms returned the correct top-$k$ set across each instance.

\textbf{Random utilities} (left column). Algorithm~\ref{alg:main} outperforms the baselines as both $n$ and $k$ vary. As $n$ increases, all algorithms' stopping times increase because the instances become progressively harder. The range of $\bm{\theta}$ is fixed, so on average, there are boundary pairs with smaller utility gaps. 

\textbf{Equally spaced} (center column). In this experiment, we include an oracle benchmark. The oracle computes $\bm{w}^*(\bm{\theta})$ offline, allocates online according to those proportions, and uses the stopping rule in section~\ref{subsec:stopping}. We see that the stopping time of  Algorithm~\ref{alg:main} closely tracks the oracle, demonstrating that even at $\delta = .01$ our performance is close to the oracle performance. However, as $n$ grows, the stopping time increases, but does not increase for SEEKS and SEEKS-v2. The primary driver is that our stopping threshold grows with $\|\hat{\bm{\theta}}_t\|_2^2$, which grows with $n$ in this equally spaced instance. Theorem \ref{thm:asymp-opt-exp} guarantees that for sufficiently small $\delta$ our expected stopping time will be smaller, and this is demonstrated in Figure~\ref{fig:delta-sweep}. However as $\|\bm{\theta}\|_2^2$ grows the stopping threshold is only tight when $\delta$ is small, so  $\log(\frac{1}{\delta})$ is a dominant term in $\beta(t,\delta)$. 
\begin{figure}[t]
\centering
\includegraphics[width=0.8\columnwidth]{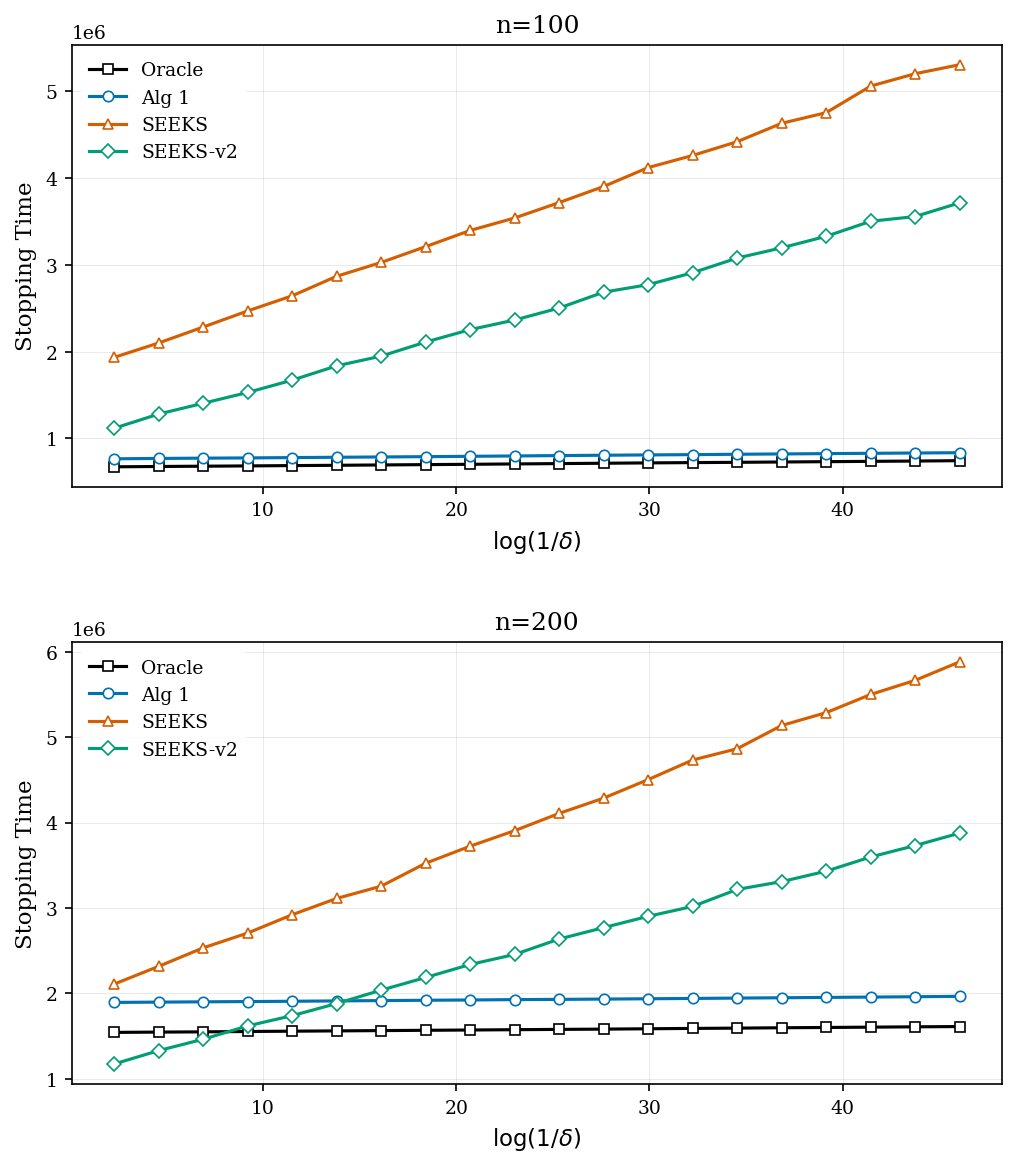}
\caption{Mean stopping time over 100 simulations for the Equally Spaced instance with $k=5$.}
\label{fig:delta-sweep}
\end{figure} 

\textbf{SST} (right column). Even under misspecification, Algorithm~\ref{alg:main} remains competitive across $n$. We observe a similar pattern to the equally spaced instance. As $n$ grows, $\|\hat{\bm{\theta}}_t \|_2^2$ grows so the stopping threshold is loose at $\delta = .01$.

\section{Conclusions} 
For top-$k$ identification from pairwise comparisons, we construct an online algorithm that is optimal as $\delta \to 0$. Reducing the alternative set to parameters that invert a boundary pair yields a structured saddle-point problem that naturally supports maintaining both primal and dual iterates. This lowers the per-round computational burden relative to approaches that rely on computing a best-response oracle each iteration. As a result, the method scales to moderately large $n$. In simulations, our procedure is competitive with existing baselines, and typically performs best when utilities are confined to a fixed range (so many boundary gaps are small when $n$ is large) and as $k$ grows. 

The main weakness of the algorithm is the stopping threshold at moderate $\delta$, which in some instances can even cause the oracle to be outperformed by existing baselines. Future work is required to investigate if a tighter stopping threshold can be constructed for moderate $\delta$ while maintaining optimality as $\delta \to 0$.

\section*{Impact Statement}
This paper presents work whose goal is to advance the field of machine learning. There are many potential societal consequences of our work, none of which we feel must be specifically highlighted here.

\bibliography{refs}

\begin{thebibliography}{36}
\providecommand{\natexlab}[1]{#1}
\providecommand{\url}[1]{\texttt{#1}}
\expandafter\ifx\csname urlstyle\endcsname\relax
  \providecommand{\doi}[1]{doi: #1}\else
  \providecommand{\doi}{doi: \begingroup \urlstyle{rm}\Url}\fi

\bibitem[Abbasi-Yadkori et~al.(2011)Abbasi-Yadkori, P{\'a}l, and
  Szepesv{\'a}ri]{abbasi2011improved}
Abbasi-Yadkori, Y., P{\'a}l, D., and Szepesv{\'a}ri, C.
\newblock Improved algorithms for linear stochastic bandits.
\newblock In \emph{Advances in Neural Information Processing Systems 24}, pp.\
  2312--2320, 2011.

\bibitem[Arora et~al.(2012)Arora, Hazan, and Kale]{arora2012mw}
Arora, S., Hazan, E., and Kale, S.
\newblock The multiplicative weights update method: a meta-algorithm and
  applications.
\newblock \emph{Theory of Computing}, 8:\penalty0 121--164, 2012.

\bibitem[Bengs et~al.(2021)Bengs, Busa-Fekete, El~Mesaoudi-Paul, and
  H{\"u}llermeier]{bengs2021duelingSurvey}
Bengs, V., Busa-Fekete, R., El~Mesaoudi-Paul, A., and H{\"u}llermeier, E.
\newblock Preference-based online learning with dueling bandits: A survey.
\newblock \emph{Journal of Machine Learning Research}, 22:\penalty0 1--108,
  2021.

\bibitem[Braverman et~al.(2019)Braverman, Mao, and Peres]{braverman2019sorted}
Braverman, M., Mao, J., and Peres, Y.
\newblock Sorted top-k in rounds.
\newblock In \emph{Proceedings of the Thirty-Second Conference on Learning
  Theory}, pp.\  342--382, 2019.

\bibitem[Busa-Fekete et~al.(2013)Busa-Fekete, Sz{\"o}r{\'e}nyi, Weng, Cheng,
  and H{\"u}llermeier]{busaFekete2013topk}
Busa-Fekete, R., Sz{\"o}r{\'e}nyi, B., Weng, P., Cheng, W., and
  H{\"u}llermeier, E.
\newblock Top-$k$ selection based on adaptive sampling of noisy preferences.
\newblock In \emph{Proceedings of the 30th International Conference on Machine
  Learning}, pp.\  1094--1102, 2013.

\bibitem[Chen et~al.(2013)Chen, Bennett, Collins-Thompson, and
  Horvitz]{Chen2013PairwiseCrowdAgg}
Chen, X., Bennett, P.~N., Collins-Thompson, K., and Horvitz, E.
\newblock Pairwise ranking aggregation in a crowdsourced setting.
\newblock In \emph{Proceedings of the Sixth ACM International Conference on Web
  Search and Data Mining}, pp.\  193--202, Rome, Italy, 2013.

\bibitem[Chen et~al.(2018)Chen, Li, and Mao]{chen2018nearly_instance_opt_mnl}
Chen, X., Li, Y., and Mao, J.
\newblock A nearly instance optimal algorithm for top-$k$ ranking under the
  multinomial logit model.
\newblock In \emph{Proceedings of the Twenty-Ninth Annual ACM-SIAM Symposium on
  Discrete Algorithms}, pp.\  2504--2522, 2018.

\bibitem[Chen \& Suh(2015)Chen and Suh]{chena15_spectralmle}
Chen, Y. and Suh, C.
\newblock Spectral {MLE}: {Top}-{K} rank aggregation from pairwise comparisons.
\newblock In \emph{Proceedings of the 32nd International Conference on Machine
  Learning}, pp.\  371--380, 2015.

\bibitem[Chen et~al.(2019)Chen, Fan, Ma, and
  Wang]{chen2019spectral_regularized_mle}
Chen, Y., Fan, J., Ma, C., and Wang, K.
\newblock Spectral method and regularized {MLE} are both optimal for {Top}-{K}
  ranking.
\newblock \emph{The Annals of Statistics}, 47\penalty0 (4):\penalty0
  2204--2235, 2019.

\bibitem[Chiang et~al.(2024)Chiang, Zheng, Sheng, Angelopoulos, Li, Li, Zhu,
  Zhang, Jordan, Gonzalez, and Stoica]{chiang2024chatbotarena}
Chiang, W.-L., Zheng, L., Sheng, Y., Angelopoulos, A.~N., Li, T., Li, D., Zhu,
  B., Zhang, H., Jordan, M.~I., Gonzalez, J.~E., and Stoica, I.
\newblock Chatbot arena: An open platform for evaluating {LLM}s by human
  preference.
\newblock In \emph{Proceedings of the 41st International Conference on Machine
  Learning}, pp.\  8359--8388, 2024.

\bibitem[Degenne et~al.(2019)Degenne, Koolen, and
  M{\'e}nard]{degenne2019solving_games}
Degenne, R., Koolen, W.~M., and M{\'e}nard, P.
\newblock Non-asymptotic pure exploration by solving games.
\newblock In \emph{Advances in Neural Information Processing Systems 32}, pp.\
  14492--14501, 2019.

\bibitem[Degenne et~al.(2020)Degenne, M{\'e}nard, Shang, and
  Valko]{degenne2020gamification}
Degenne, R., M{\'e}nard, P., Shang, X., and Valko, M.
\newblock Gamification of pure exploration for linear bandits.
\newblock In \emph{Proceedings of the 37th International Conference on Machine
  Learning}, pp.\  2432--2442, 2020.

\bibitem[Garivier \& Kaufmann(2016)Garivier and Kaufmann]{garivier2016optimal}
Garivier, A. and Kaufmann, E.
\newblock Optimal best arm identification with fixed confidence.
\newblock In \emph{Proceedings of the 29th Annual Conference on Learning
  Theory}, pp.\  998--1027, 2016.

\bibitem[Haddenhorst et~al.(2021)Haddenhorst, Bengs, and
  H{\"u}llermeier]{haddenhorst2021gcw}
Haddenhorst, B., Bengs, V., and H{\"u}llermeier, E.
\newblock Identification of the generalized condorcet winner in multi-dueling
  bandits.
\newblock In \emph{Advances in Neural Information Processing Systems 34}, pp.\
  25904--25916, 2021.

\bibitem[Hajek et~al.(2014)Hajek, Oh, and Xu]{hajek2014minimax}
Hajek, B., Oh, S., and Xu, J.
\newblock Minimax-optimal inference from partial rankings.
\newblock In \emph{Advances in Neural Information Processing Systems 27}, pp.\
  1475--1483, 2014.

\bibitem[Hazan(2016)]{hazan2016oco}
Hazan, E.
\newblock Introduction to online convex optimization.
\newblock \emph{Foundations and Trends in Optimization}, 2\penalty0
  (3-4):\penalty0 157--325, 2016.

\bibitem[Heckel et~al.(2019)Heckel, Shah, Ramchandran, and
  Wainwright]{heckel2019active}
Heckel, R., Shah, N.~B., Ramchandran, K., and Wainwright, M.~J.
\newblock Active ranking from pairwise comparisons and when parametric
  assumptions don’t help.
\newblock \emph{The Annals of Statistics}, 47\penalty0 (6):\penalty0
  3099--3126, 2019.

\bibitem[Jang et~al.(2016)Jang, Kim, Suh, and Oh]{jang2016topk}
Jang, M., Kim, S., Suh, C., and Oh, S.
\newblock Top-{K} ranking from pairwise comparisons: When spectral ranking is
  optimal.
\newblock \emph{arXiv preprint arXiv:1603.04153}, 2016.

\bibitem[Jedra \& Prouti{\`e}re(2020)Jedra and
  Prouti{\`e}re]{jedraproutiere2020optimal_linear}
Jedra, Y. and Prouti{\`e}re, A.
\newblock Optimal best-arm identification in linear bandits.
\newblock In \emph{Advances in Neural Information Processing Systems 33}, pp.\
  10007--10017, 2020.

\bibitem[Kalloori et~al.(2018)Kalloori, Ricci, and
  Gennari]{Kalloori2018ElicitingPairwise}
Kalloori, S., Ricci, F., and Gennari, R.
\newblock Eliciting pairwise preferences in recommender systems.
\newblock In \emph{Proceedings of the 12th ACM Conference on Recommender
  Systems}, pp.\  329--337, Vancouver, BC, Canada, 2018.

\bibitem[Kaufmann \& Koolen(2021)Kaufmann and Koolen]{kaufmann2021mixture}
Kaufmann, E. and Koolen, W.~M.
\newblock Mixture martingales revisited with applications to sequential tests
  and confidence intervals.
\newblock \emph{Journal of Machine Learning Research}, 22:\penalty0 1--44,
  2021.

\bibitem[Kaufmann et~al.(2016)Kaufmann, Capp{\'e}, and
  Garivier]{kaufmann2016complexity}
Kaufmann, E., Capp{\'e}, O., and Garivier, A.
\newblock On the complexity of best-arm identification in multi-armed bandit
  models.
\newblock \emph{Journal of Machine Learning Research}, 17:\penalty0 1--42,
  2016.

\bibitem[Khetan \& Oh(2016)Khetan and Oh]{khetan2016rankbreaking}
Khetan, A. and Oh, S.
\newblock Data-driven rank breaking for efficient rank aggregation.
\newblock In \emph{Proceedings of the 33rd International Conference on Machine
  Learning}, pp.\  89--98, 2016.

\bibitem[Kou et~al.(2017)Kou, Li, Wang, U, and Gong]{Kou2017CrowdsourcedTopk}
Kou, N.~M., Li, Y., Wang, H., U, L.~H., and Gong, Z.
\newblock Crowdsourced top-k queries by confidence-aware pairwise judgments.
\newblock In \emph{Proceedings of the 2017 ACM International Conference on
  Management of Data}, pp.\  1415--1430, Chicago, IL, USA, 2017.

\bibitem[Lattimore \& Szepesv{\'a}ri(2020)Lattimore and
  Szepesv{\'a}ri]{lattimore2020bandit}
Lattimore, T. and Szepesv{\'a}ri, C.
\newblock \emph{Bandit Algorithms}.
\newblock Cambridge University Press, 2020.

\bibitem[Mohajer et~al.(2017)Mohajer, Suh, and Elmahdy]{mohajer2017active}
Mohajer, S., Suh, C., and Elmahdy, A.
\newblock Active learning for top-$k$ rank aggregation from noisy comparisons.
\newblock In \emph{Proceedings of the 34th International Conference on Machine
  Learning}, pp.\  2488--2497, 2017.

\bibitem[Narimanzadeh et~al.(2023)Narimanzadeh, Badie-Modiri, Smirnova, and
  Chen]{Narimanzadeh2023PairwiseCrowdsourcing}
Narimanzadeh, H., Badie-Modiri, A., Smirnova, I.~G., and Chen, T. H.~Y.
\newblock Crowdsourcing subjective annotations using pairwise comparisons
  reduces bias and error compared to the majority-vote method.
\newblock \emph{Proceedings of the ACM on Human-Computer Interaction},
  7\penalty0 (CSCW2):\penalty0 1--29, 2023.

\bibitem[Negahban et~al.(2017)Negahban, Oh, and
  Shah]{negahban2017rankcentrality}
Negahban, S., Oh, S., and Shah, D.
\newblock Rank centrality: Ranking from pairwise comparisons.
\newblock \emph{Operations Research}, 65\penalty0 (1):\penalty0 266--287, 2017.

\bibitem[Ren et~al.(2018)Ren, Liu, and Shroff]{ren2018pacranking}
Ren, W., Liu, J., and Shroff, N.~B.
\newblock {PAC} ranking from pairwise and listwise queries: Lower bounds and
  upper bounds.
\newblock \emph{arXiv preprint arXiv:1806.02970}, 2018.

\bibitem[Ren et~al.(2020)Ren, Liu, and Shroff]{ren2020bestk_pairwise}
Ren, W., Liu, J., and Shroff, N.~B.
\newblock The sample complexity of best-$k$ items selection from pairwise
  comparisons.
\newblock In \emph{Proceedings of the 37th International Conference on Machine
  Learning}, pp.\  8051--8072, 2020.

\bibitem[Saha \& Gopalan(2019)Saha and Gopalan]{saha2019pacpl}
Saha, A. and Gopalan, A.
\newblock {PAC} battling bandits in the plackett-luce model.
\newblock In \emph{Proceedings of the 30th International Conference on
  Algorithmic Learning Theory}, pp.\  700--737, 2019.

\bibitem[Saha \& Gopalan(2020)Saha and Gopalan]{saha2020instanceoptimalpl}
Saha, A. and Gopalan, A.
\newblock From {PAC} to instance-optimal sample complexity in the plackett-luce
  model.
\newblock In \emph{Proceedings of the 37th International Conference on Machine
  Learning}, pp.\  8367--8376, 2020.

\bibitem[Shah et~al.(2017)Shah, Balakrishnan, Guntuboyina, and
  Wainwright]{shah2017stochastic}
Shah, N.~B., Balakrishnan, S., Guntuboyina, A., and Wainwright, M.~J.
\newblock Stochastically transitive models for pairwise comparisons:
  Statistical and computational issues.
\newblock \emph{IEEE Transactions on Information Theory}, 63\penalty0
  (2):\penalty0 934--959, 2017.

\bibitem[Shalev-Shwartz(2012)]{shalev2012online}
Shalev-Shwartz, S.
\newblock Online learning and online convex optimization.
\newblock \emph{Foundations and Trends in Machine Learning}, 4\penalty0
  (2):\penalty0 107--194, 2012.

\bibitem[Wang et~al.(2021)Wang, Tzeng, and Prouti{\`e}re]{wang2021fws}
Wang, P.-A., Tzeng, R.-C., and Prouti{\`e}re, A.
\newblock Fast pure exploration via frank-wolfe.
\newblock In \emph{Advances in Neural Information Processing Systems 34}, pp.\
  5810--5821, 2021.

\bibitem[Xu et~al.(2019)Xu, Honda, and Sugiyama]{xu2019qualitative}
Xu, L., Honda, J., and Sugiyama, M.
\newblock Dueling bandits with qualitative feedback.
\newblock In \emph{Proceedings of the Thirty-Third AAAI Conference on
  Artificial Intelligence}, pp.\  5549--5556, 2019.

\end{thebibliography}
\bibliographystyle{icml2026}

\clearpage

\appendix
\section*{Notation}
\begingroup
\small
\renewcommand{\arraystretch}{1.25}
\setlength{\tabcolsep}{3pt}
This table lists notation from the main text that is used in the appendix without being redefined there. Additional notation introduced inside a proof is local to that proof.

\begin{tabular}{@{}>{\raggedright\arraybackslash}p{0.31\columnwidth}>{\raggedright\arraybackslash}p{0.63\columnwidth}@{}}
\toprule
Notation & Definition \\
\midrule
\multicolumn{2}{l}{\textit{Problem and comparison model}} \\
$n,k,[n]$           & number of items, identification set size, and $\{1,\ldots,n\}$ \\
$\mathcal{P}$       & $\{(i,j):1\le i<j\le n\}$ \\
$\bm{\theta}$ & true utility vector \\
$\Theta,\Theta_0,\Theta^{\mathrm{gap}}$ & parameter space, centered subspace, and positive top-$k$ gap subset \\
$R$                 & bound of $\|\bm{\theta}\|_\infty\le R$ \\
$A_t,Y_t,\mathcal{F}_t$ & sampled pair, outcome, and observation filtration at time $t$ \\
$\bm{x}_{ij},\eta_{ij}(\bm{\theta})$ & comparison direction and natural parameter, $\eta_{ij}=\theta_i-\theta_j$ \\
$A(\eta),T(y)$      & log-partition function and sufficient statistic \\
$\overline\sigma^2,\underline a,\sigma^2$ & global curvature upper bound, local curvature lower bound, and sub-Gaussian variance bound \\
$\ell_t(\bm{\theta})$ & log-likelihood after $t$ observations \\
\\
\multicolumn{2}{l}{\textit{Top-$k$ alternatives}} \\
$S^*(\bm{\theta})$  & true top-$k$ set \\
$B(\bm{\theta})$    & boundary pairs \\
$u_{ij},v_{ij}$     & top-$k$ and non-top-$k$ endpoints of boundary pair $(i,j)$ \\
$\Theta_{ij}$       & set of alternatives that swap the ranking of $(i,j) \in B(\bm{\theta})$ \\
$\Theta_{ij}(t)$    & estimated version of $\Theta_{ij}$ formed from $\hat{\bm{\theta}}_t$ \\
\\
\multicolumn{2}{l}{\textit{Oracle game}} \\
$\Delta_{\mathcal{P}},\Delta_B$ & probability simplexes over all pairs and estimated boundary pairs \\
$d_{ij}(\bm{\theta},\bm{\theta}')$ & $\KL(P_{\eta_{ij}(\bm{\theta})} \| P_{\eta_{ij}(\bm{\theta}')})$ \\
$D_{\bm{w}}(\bm{\theta}\|\bm{\theta}')$ & $\sum_{(a,b)\in\mathcal P}w_{ab}\,d_{ab}(\bm{\theta},\bm{\theta}')$ \\
$\gamma_{ij}(\bm{w};\bm{\theta})$ & $\inf_{\bm{\theta}'\in\Theta_{ij}}D_{\bm{w}}(\bm{\theta}\|\bm{\theta}')$ \\
$\bm{\theta}^*_{ij}(\bm{w};\bm{\theta})$ & $\argmin_{\bm{\theta}'\in\Theta_{ij}}$ $D_{\bm{w}}(\bm{\theta}\|\bm{\theta}')$ \\
$\Gamma(\bm{w};\bm{\theta})$ & $\min_{(i,j)\in B(\bm{\theta})}\gamma_{ij}(\bm{w};\bm{\theta})$ \\
$\Gamma^*(\bm{\theta})$ & $\sup_{\bm{w}\in\Delta_{\mathcal P}}\Gamma(\bm{w};\bm{\theta})$ \\
$F(\bm{w},\bm{q};\bm{\theta})$ & $\sum_{(i,j)\in B(\bm{\theta})}q_{ij}\,\gamma_{ij}(\bm{w};\bm{\theta})$ \\
$G(\bm{w})$          & support graph $([n],\{\{i,j\}:w_{ij}>0\})$ \\
\bottomrule
\end{tabular}

\newpage

\begin{tabular}{@{}>{\raggedright\arraybackslash}p{0.31\columnwidth}>{\raggedright\arraybackslash}p{0.63\columnwidth}@{}}
\toprule
Notation & Definition \\
\midrule
\multicolumn{2}{l}{\textit{Sequential algorithm}} \\
$\hat{\bm{\theta}}_t,\hat S_t,\hat B_t$ & MLE and its induced top-$k$ set and boundary set \\
$N_{ij}(t),N_{\min}(t)$ & empirical count for pair $(i,j)$ and the minimum pair count \\
$\hat w^{\mathrm{emp}}_{t,ij}$ & empirical allocation proportion $N_{ij}(t)/t$ \\
$\bm{w}_t,\bm{q}_t$  & primal and dual iterates\\
$I_t$               & boundary pair sampled for the stochastic-gradient update \\
$\bm{\theta}_t^*$ & $\argmin_{\bm{\theta}'\in\Theta_{I_t}(t)}$ $D_{\bm{w}_t}(\hat{\bm{\theta}}_t\|\bm{\theta}')$ \\
$\gamma_t$ & $D_{\bm{w}_t}(\hat{\bm{\theta}}_t\|\bm{\theta}_t^*)=\gamma_{I_t}(\bm{w}_t;\hat{\bm{\theta}}_t)$ \\
$\mu_t,\rho_t$      & FTRL learning rate and C-tracking mixing weight \\
$\bm{u}$            & uniform distribution over $\mathcal P$ \\
$Z_{ij}(t),Z_{\hat S_t}(t)$ & boundary-pair GLR statistic and its minimum over $\hat B_t$ \\
\bottomrule
\end{tabular}
\endgroup
\clearpage

\section{Proofs for Section~\ref{sec:oracle}}\label{app:sec4-proofs}

\subsection{Proof of Lemma \ref{lem:gamma-basic}}
\begin{proof}
For any $\bm{\theta}'\in\Theta_{ij}$, $\bm{w}\mapsto D_{\bm{w}}(\bm{\theta}\|\bm{\theta}')$ is linear.
Taking the infimum over $\bm{\theta}'\in\Theta_{ij}$ is a pointwise infimum of linear functions, thus is concave.

For continuity: $(\bm{w},\bm{\theta}')\mapsto D_{\bm{w}}(\bm{\theta}\|\bm{\theta}')$ is continuous and $\Theta_{ij}$ is compact, so
$\gamma_{ij}(\bm{w};\bm{\theta})=\min_{\bm{\theta}'\in\Theta_{ij}} D_{\bm{w}}(\bm{\theta}\|\bm{\theta}')$ is continuous as a minimum of a continuous function over a compact set.
\end{proof}

\subsection{Proof of Lemma \ref{lem:inner-geometry}}
\begin{proof}
\emph{(a)}
\[
  \nabla^2_{\bm{\theta}'} D_{\bm{w}}(\bm{\theta}\|\bm{\theta}')
  = \sum_{(a,b)\in\mathcal P} w_{ab}\,A''(\eta_{ab}(\bm{\theta}'))\, \bm{x}_{ab}\bm{x}_{ab}^\top .
\]
So for any $\bm{v}\in\R^n$,
\begin{align*}
&\bm{v}^\top \nabla^2_{\bm{\theta}'} D_{\bm{w}}(\bm{\theta}\|\bm{\theta}') \bm{v} \\
&\quad = \sum_{(a,b)\in\mathcal P} w_{ab}\,A''(\eta_{ab}(\bm{\theta}'))\,(v_a-v_b)^2 \ge 0,
\end{align*}
with equality iff $v_a=v_b$ on every edge in $G(\bm{w})$. Since $G(\bm{w})$ is connected this implies $\bm{v}=c\mathbf 1$.
Restricting to $\bm{v}\in\Theta_0$ forces $c=0$, so the Hessian is positive definite on $\Theta_0$.

\emph{(b)} This follows from (a) and the fact that $\Theta_{ij}$ is compact.

\emph{(c)} Let $\bm{\theta}^*=\bm{\theta}^*_{ij}(\bm{w};\bm{\theta})$ be the unique minimizer over $\Theta_{ij}$.
If $\bm{\theta}^*\in\partial\Theta$ we are done. Otherwise $\bm{\theta}^*\in\mathrm{int}(\Theta)$.
If $\theta^*_{v_{ij}}>\theta^*_{u_{ij}}$, then there exists $\lambda\in(0,1)$ such that
$\bar{\bm{\theta}}:=(1-\lambda)\bm{\theta}^*+\lambda\bm{\theta}$ satisfies $\bar\theta_{v_{ij}}=\bar\theta_{u_{ij}}$.
And $\bar{\bm{\theta}}\in\Theta$. Also $\bar{\bm{\theta}}\in\Theta_{ij}$ by construction. By convexity of $D_{\bm{w}}(\bm{\theta}\|\cdot)$ and $D_{\bm{w}}(\bm{\theta}\|\bm{\theta})=0$,
\[
  D_{\bm{w}}(\bm{\theta}\|\bar{\bm{\theta}})\le (1-\lambda)D_{\bm{w}}(\bm{\theta}\|\bm{\theta}^*)+\lambda D_{\bm{w}}(\bm{\theta}\|\bm{\theta})
  < D_{\bm{w}}(\bm{\theta}\|\bm{\theta}^*),
\]
contradiction. So if $\bm{\theta}^*\in\mathrm{int}(\Theta)$ the halfspace must bind.
\end{proof}

\section{Proof of Lemma \ref{lem:mle-hp-simple}}
Define
\[
  \Xi_{ij}(t)\ :=\ \sum_{s=1}^t \mathbf 1_{\{A_s=(i,j)\}}\Big(T(Y_s)-A'(\eta_{ij}(\bm{\theta}))\Big),
\]

\begin{proof}
Let $\bm{\Delta}:=\hat{\bm{\theta}}_t-\bm{\theta}\in\Theta_0$ and for each $(i,j)\in\mathcal P$ set
$z_{ij}:=\bm{x}_{ij}^\top\bm{\Delta}=\Delta_i-\Delta_j$.
By definition of $\ell_t$ and $\Xi_{ij}(t)$,
\begin{align*}
&\ell_t(\bm{\theta}+\bm{\Delta})-\ell_t(\bm{\theta})\\
&\quad =\sum_{(i,j)\in\mathcal P}\Big(z_{ij}\,\Xi_{ij}(t) -N_{ij}(t)\,d_{ij}(\bm{\theta},\bm{\theta}+\bm{\Delta})\Big).
\end{align*}
Since $\hat{\bm{\theta}}_t$ maximizes $\ell_t$ over the constraint set and $\bm{\theta}$ is feasible,
$\ell_t(\bm{\theta}+\bm{\Delta})-\ell_t(\bm{\theta})\ge 0$.

By assumption $A''\ge \underline a$ on $[-2R,2R]$, so strong convexity gives
$d_{ij}(\bm{\theta},\bm{\theta}+\bm{\Delta})=A(\eta_{ij}(\bm{\theta})+z_{ij})-A(\eta_{ij}(\bm{\theta}))-A'(\eta_{ij}(\bm{\theta}))\,z_{ij}\ge \frac{\underline a}{2}\,z_{ij}^2$, so
\begin{equation}\label{eq:mle-basic-ineq}
0 \le \sum_{(i,j)\in\mathcal P}\Big(z_{ij}\,\Xi_{ij}(t)-\frac{\underline a}{2}N_{ij}(t)\,z_{ij}^2\Big).
\end{equation}
Rearranging gives
\begin{equation}\label{eq:mle-rearrange}
\frac{\underline a}{2}\sum_{(i,j)\in\mathcal P}N_{ij}(t)\,z_{ij}^2
\le \sum_{(i,j)\in\mathcal P} z_{ij}\,\Xi_{ij}(t).
\end{equation}
Apply Cauchy--Schwarz:
\begin{align*}
&\sum_{(i,j)} z_{ij}\Xi_{ij}(t)\\
&\quad =\sum_{(i,j)}\big(\sqrt{N_{ij}(t)}\,z_{ij}\big)\cdot\Big(\frac{\Xi_{ij}(t)}{\sqrt{N_{ij}(t)}}\Big)\\
&\quad \le \sqrt{\sum_{(i,j)}N_{ij}(t)\,z_{ij}^2}\;\sqrt{\sum_{(i,j)}\frac{\Xi_{ij}(t)^2}{N_{ij}(t)}}.
\end{align*}
Let
\[
A:=\sum_{(i,j)\in\mathcal P}N_{ij}(t)\,z_{ij}^2,
\qquad
B:=\sum_{(i,j)\in\mathcal P}\frac{\Xi_{ij}(t)^2}{N_{ij}(t)}.
\]
Then \eqref{eq:mle-rearrange} and the above imply
\[
\frac{\underline a}{2}A \le \sqrt{A}\sqrt{B}
\quad\Longrightarrow\quad
A\le \frac{4}{\underline a^2}B,
\]
i.e.
\begin{equation}\label{eq:mle-square-step}
\sum_{(i,j)\in\mathcal P}N_{ij}(t)\,(\bm{x}_{ij}^\top\bm{\Delta})^2
\le \frac{4}{\underline a^2}\sum_{(i,j)\in\mathcal P}\frac{\Xi_{ij}(t)^2}{N_{ij}(t)}.
\end{equation}
Define the weighted Laplacian
\[
L_t:=\sum_{(i,j)\in\mathcal P}N_{ij}(t)\,\bm{x}_{ij}\bm{x}_{ij}^\top,
\]
so $\bm{\Delta}^\top L_t\bm{\Delta}=\sum_{(i,j)}N_{ij}(t)(\bm{x}_{ij}^\top\bm{\Delta})^2$.
Since $N_{ij}(t)\ge N_{\min}(t)$ for all pairs,
\[
L_t \succeq N_{\min}(t)\sum_{1\le i<j\le n}\bm{x}_{ij}\bm{x}_{ij}^\top
= N_{\min}(t)\,(nI_n-\mathbf 1\mathbf 1^\top).
\]
$\bm{\Delta}\in\Theta_0$ implies $\mathbf 1^\top\bm{\Delta}=0$, so
\[
\bm{\Delta}^\top L_t\bm{\Delta} \ge n\,N_{\min}(t)\,\|\bm{\Delta}\|_2^2.
\]
Combining with \eqref{eq:mle-square-step} gives
\begin{equation}\label{eq:mle-preconclusion}
\|\bm{\Delta}\|_2^2
\le \frac{4}{\underline a^2\,n\,N_{\min}(t)}\sum_{(i,j)\in\mathcal P}\frac{\Xi_{ij}(t)^2}{N_{ij}(t)}.
\end{equation}
It remains to bound $\sum_{(i,j)}\Xi_{ij}(t)^2/N_{ij}(t)$ with high probability.
Fix a pair $(i,j)\in\mathcal P$ and define martingale differences
\[
\xi_s^{ij}:=\mathbf 1_{\{A_s=(i,j)\}}\Big(T(Y_s)-A'(\eta_{ij}(\bm{\theta}))\Big),\qquad s\ge 1,
\]
so that $\Xi_{ij}(t)=\sum_{s=1}^t\xi_s^{ij}$ and $N_{ij}(t)=\sum_{s=1}^t\mathbf 1_{\{A_s=(i,j)\}}$.
By Assumption~\ref{ass:subg}, for all $\lambda\in\R$,
\[
\E_{\bm{\theta}}\!\left[\exp\!\big(\lambda\xi_s^{ij}\big)\mid\mathcal F_{s-1}\right]
\le \exp\!\left(\frac{\sigma^2\lambda^2}{2}\mathbf 1_{\{A_s=(i,j)\}}\right).
\]
Therefore for each $\lambda$ the process
\[
M_s(\lambda):=\exp\!\left(\lambda \Xi_{ij}(s)-\frac{\sigma^2\lambda^2}{2}N_{ij}(s)\right)
\]
is a nonnegative supermartingale, so $\E_{\bm{\theta}}[M_t(\lambda)]\le 1$.
Fix $m\in\{1,\ldots,t\}$ and $u>0$, and set $\lambda:=\sqrt{2u}/(\sigma\sqrt m)$.
On the event $\{N_{ij}(t)=m\}$ we have
\[
\Big\{\Xi_{ij}(t)\ge \sigma\sqrt{2mu}\Big\}
\ \subseteq\
\Big\{M_t(\lambda)\ge e^{u}\Big\},
\]
since $\lambda\sigma\sqrt{2mu}=2u$ and $(\sigma^2\lambda^2/2)m=u$.
Thus by Markov's inequality,
\[
\Prob_{\bm{\theta}}\!\left(\Xi_{ij}(t)\ge \sigma\sqrt{2mu},\,N_{ij}(t)=m\right)
\le e^{-u}.
\]
Applying the same argument to $-\Xi_{ij}(t)$ gives
\[
\Prob_{\bm{\theta}}\!\left(|\Xi_{ij}(t)|\ge \sigma\sqrt{2mu},\,N_{ij}(t)=m\right)
\le 2e^{-u}.
\]
Sum over $m=1,\ldots,t$ to get
\[
\Prob_{\bm{\theta}}\!\left(\frac{\Xi_{ij}(t)^2}{N_{ij}(t)}\ge 2\sigma^2 u,\ N_{ij}(t)\ge 1\right)\le 2t\,e^{-u}.
\]
Choose $u=\log\!\big(\frac{2t|\mathcal P|}{\delta}\big)$ and apply a union bound over $(i,j)\in\mathcal P$:
with probability at least $1-\delta$,
\[
\max_{(i,j)\in\mathcal P}\frac{\Xi_{ij}(t)^2}{N_{ij}(t)}
\le 2\sigma^2\log\!\Big(\frac{2t|\mathcal P|}{\delta}\Big),
\]
hence
\[
\sum_{(i,j)\in\mathcal P}\frac{\Xi_{ij}(t)^2}{N_{ij}(t)}
\le 2\sigma^2|\mathcal P|\log\!\Big(\frac{2t|\mathcal P|}{\delta}\Big).
\]
Plugging into \eqref{eq:mle-preconclusion} gives
\[
\|\hat{\bm{\theta}}_t-\bm{\theta}\|_2^2
\le \frac{8\sigma^2|\mathcal P|}{\underline a^2\,n\,N_{\min}(t)}\log\!\Big(\frac{2t|\mathcal P|}{\delta}\Big).
\]
$|\mathcal P|=\frac{n(n-1)}{2}\le \frac{n^2}{2}$ so
\[
\|\hat{\bm{\theta}}_t-\bm{\theta}\|_2
\le \frac{2\sigma}{\underline a}\sqrt{\frac{n\log\!\big(\frac{2t|\mathcal P|}{\delta}\big)}{N_{\min}(t)}},
\]
\end{proof}
\subsection{Proof of Corollary~\ref{cor:mle-stabilize}}
\begin{proof}
Apply Lemma~\ref{lem:mle-hp-simple} with $\delta_t=t^{-2}$. Since $\sum_{t\ge1}\delta_t<\infty$, by Borel--Cantelli almost surely there exists
$t_1<\infty$ such that for all $t\ge t_1$,
\[
\|\hat{\bm{\theta}}_t-\bm{\theta}\|_2
\le \frac{2\sigma}{\underline a}\sqrt{\frac{n\log\!\big(\frac{2t|\mathcal P|}{\delta_t}\big)}{N_{\min}(t)}}.
\]
By Lemma~\ref{lem:ctrack}\emph{(e)}, $N_{\min}(t)/\log t\to\infty$ almost surely, while
$\log\!\big(\frac{2t|\mathcal P|}{\delta_t}\big)=\log(2t|\mathcal P|t^2)=O(\log t)$. Hence the right-hand side tends to $0$ almost surely,
so $\|\hat{\bm{\theta}}_t-\bm{\theta}\|_2\to 0$ almost surely.

Since $\bm{\theta}\in\Theta^{\mathrm{gap}}$, define $\Delta_k:=\theta_{(k)}-\theta_{(k+1)}>0$.
Because $\|\hat{\bm{\theta}}_t-\bm{\theta}\|_\infty\le \|\hat{\bm{\theta}}_t-\bm{\theta}\|_2$, almost surely there exists $t_{\mathrm{stab}}<\infty$ such that
$\|\hat{\bm{\theta}}_t-\bm{\theta}\|_\infty\le \Delta_k/4$ for all $t\ge t_{\mathrm{stab}}$.
For such $t$, if $u\in S^*(\bm{\theta})$ and $v\notin S^*(\bm{\theta})$ then $\theta_u-\theta_v\ge \Delta_k$, and thus
\[
\hat\theta_{t,u}-\hat\theta_{t,v}
\ge (\theta_u-\theta_v)-2\|\hat{\bm{\theta}}_t-\bm{\theta}\|_\infty
\ge \Delta_k-\frac{\Delta_k}{2} >0.
\]
Therefore $\hat S_t=S^*(\bm{\theta})$ and $\hat B_t=B(\bm{\theta})$ for all $t\ge t_{\mathrm{stab}}$
\end{proof}

\section{Proof of Proposition~\ref{prop:Gamma-hp-new}}
\paragraph{Outline.}
The proof decomposes into (i) online learning error,
(ii) statistical error from estimating $\bm{\theta}$, and (iii) C-tracking/mixing bias.
First, Lemma~\ref{lem:sg-est} establishes that the stochastic gradients are unbiased estimators
of the gradients and are uniformly bounded, which is the bounded-noise condition needed for exponential-weights/FTRL analysis.
Next, Lemma~\ref{lem:hp-regret} gives high probability regret bounds for both the primal and dual players. Lemma~\ref{lem:regret-gap} bounds the duality gap by averaged regret.
Finally, we combine these bounds.
We introduce a burn-in index $b_t:=\lceil t^{1/4}\rceil$ so that, with high probability, the set of boundary pairs is correct for every time $s\in\{b_t,\ldots,t\}$.

Some supporting lemmas are applications of standard online learning regret bounds and online saddle point games; we include full proofs for completeness.

Fix $\bm{\theta}\in\Theta^{\mathrm{gap}}$ and define $B:=B(\bm{\theta})$ and $m:=|B|=k(n-k)$.
Let $D_{\max}$ and $L$ be as in Lemma~\ref{lem:payoff-plug}, and define $G:=mD_{\max}$.
For each horizon $T\ge 1$ set $b_T:=\lceil T^{1/4}\rceil$.

\begin{lemma}\label{lem:sg-est}
Take $t\ge 1$ and $I_t\sim\mathrm{Unif}(B)$ is sampled after $(\hat{\bm{\theta}}_t,\bm{w}_t,\bm{q}_t)$ are formed.
Recall the stochastic gradients from \eqref{eq:stoch-grads-comp}:
\begin{align*}
&\hat g^{(w)}_{t,ab}:=m\,q_{t,I_t}\,d_{ab}(\hat{\bm{\theta}}_t,\bm{\theta}_t^*),\ (a,b)\in\mathcal P,\\
&\hat g^{(q)}_{t,ij}:=m\,\gamma_t\,\mathbf 1\{(i,j)=I_t\},\ (i,j)\in B.
\end{align*}
Then,
\begin{align*}
&\E\!\left[\hat g^{(w)}_{t,ab}\mid \hat{\bm{\theta}}_t,\bm{w}_t,\bm{q}_t\right]
=\frac{\partial F}{\partial w_{ab}}(\bm{w}_t,\bm{q}_t;\hat{\bm{\theta}}_t),\\
&\E\!\left[\hat g^{(q)}_{t,ij}\mid \hat{\bm{\theta}}_t,\bm{w}_t,\bm{q}_t\right]
=\frac{\partial F}{\partial q_{ij}}(\bm{w}_t,\bm{q}_t;\hat{\bm{\theta}}_t),
\end{align*}
additionally, $\|\hat{\bm{g}}^{(w)}_t\|_\infty\le mD_{\max}$ and $\|\hat{\bm{g}}^{(q)}_t\|_\infty\le mD_{\max}$ almost surely.
If $G(\bm{w}_t)$ is connected (so the KL projections are unique), the right-hand sides above are gradients; otherwise, they are supergradients.
\end{lemma}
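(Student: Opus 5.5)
The plan is to compute the two conditional expectations directly from the law of $I_t$, and then get the almost-sure bounds from a uniform bound on the per-pair KL divergences over $\Theta$.

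\textbf{Unbiasedness.} Condition on $(\hat{\bm{\theta}}_t,\bm{w}_t,\bm{q}_t)$. Given these, the projection $\bm{\theta}^*_{ij}(\bm{w}_t;\hat{\bm{\theta}}_t)$ is a deterministic function of $(i,j)$ (fix a measurable selection if it is not unique). The only remaining randomness is $I_t\sim\mathrm{Unif}(B)$ with $|B|=m$. For the primal coordinate $(a,b)$ this gives
\[
\E\bigl[\hat g^{(w)}_{t,ab}\mid\cdot\bigr]=\sum_{(i,j)\in B}\frac1m\,m\,q_{t,ij}\,d_{ab}\bigl(\hat{\bm{\theta}}_t,\bm{\theta}^*_{ij}(\bm{w}_t;\hat{\bm{\theta}}_t)\bigr).
\]
This is exactly the expression in \eqref{eq:F-gradients}, whose validity rests on Danskin's theorem and Lemma~\ref{lem:inner-geometry}. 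For the dual coordinate $(i,j)$, the indicator has conditional probability $1/m$, so $\E[\hat g^{(q)}_{t,ij}\mid\cdot]=\gamma_{ij}(\bm{w}_t;\hat{\bm{\theta}}_t)=\partial F/\partial q_{ij}$. The dual identity needs no differentiability, because $F$ is linear in $\bm{q}$.

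\textbf{Connected versus disconnected support.} If $G(\bm{w}_t)$ is connected, Lemma~\ref{lem:inner-geometry}(b) gives a unique projection, and Danskin's theorem makes the right-hand side the gradient. If it is disconnected, I use that $\gamma_{ij}(\bm{w})=\min_{\bm{\theta}'\in\Theta_{ij}}D_{\bm{w}}(\hat{\bm{\theta}}_t\|\bm{\theta}')$ is a minimum of functions linear in $\bm{w}$. For any minimizer $\bm{\theta}^*$ at $\bm{w}_t$ we then have
\[
\gamma_{ij}(\bm{w})\le D_{\bm{w}}(\hat{\bm{\theta}}_t\|\bm{\theta}^*)=\gamma_{ij}(\bm{w}_t)+\langle \bm{w}-\bm{w}_t,\,(d_{ab}(\hat{\bm{\theta}}_t,\bm{\theta}^*))_{ab}\rangle .
\]
So the vector $(d_{ab}(\hat{\bm{\theta}}_t,\bm{\theta}^*))_{ab}$ is a supergradient of the concave map $\gamma_{ij}$, and $q$-weighted combinations of these are supergradients of $F(\cdot,\bm{q}_t)$.

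\textbf{Boundedness.} I take $D_{\max}:=\sup_{\bm{\theta},\bm{\theta}'\in\Theta}\max_{(a,b)}d_{ab}(\bm{\theta},\bm{\theta}')$, which is the constant of Lemma~\ref{lem:payoff-plug}. It is finite because $d_{ab}(\bm{\theta},\bm{\theta}')=A(\eta')-A(\eta)-A'(\eta)(\eta'-\eta)\le \tfrac{\overline\sigma^2}{2}(\eta'-\eta)^2\le 8\overline\sigma^2R^2$ by Assumption~\ref{ass:expfam-global}, with $|\eta|,|\eta'|\le 2R$. Since $\hat{\bm{\theta}}_t$ and $\bm{\theta}^*_t$ both lie in $\Theta$ and $q_{t,I_t}\le 1$, we get $|\hat g^{(w)}_{t,ab}|\le mD_{\max}$. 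Similarly $\gamma_t=\sum_{ab}w_{t,ab}d_{ab}\le D_{\max}$ because $\bm{w}_t\in\Delta_{\mathcal P}$, so $|\hat g^{(q)}_{t,ij}|\le mD_{\max}$.

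\textbf{Main obstacle.} The one delicate point is the supergradient case: non-uniqueness of the projection requires a measurable selection, and the indexing is over $B=B(\bm{\theta})$ versus $\hat B_t$. I would simply state that the lemma is applied on the event where $\hat B_t=B$, or equivalently with $B$ replaced by $\hat B_t$ throughout, since the computation is identical.
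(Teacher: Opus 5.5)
Your proposal is correct and follows the paper's proof. You condition on $(\hat{\bm{\theta}}_t,\bm{w}_t,\bm{q}_t)$ and average over $I_t\sim\mathrm{Unif}(B)$ so that the factor $m$ cancels the $1/m$, then bound the estimates using $q_{t,I_t}\le 1$ and $0\le d_{ab},\gamma_t\le D_{\max}$. Your explicit supergradient argument for disconnected $G(\bm{w}_t)$ (a minimum of functions linear in $\bm{w}$) and your bound $D_{\max}\le 8\overline\sigma^2R^2$ supply details the paper leaves implicit, but the route is the same.
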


\begin{proof}
Choose $(a,b)\in\mathcal P$. Each boundary pair has probability $1/m$ of being selected. It follows that
 \[
\E\!\left[\hat g^{(w)}_{t,ab}\mid \hat{\bm{\theta}}_t,\bm{w}_t,\bm{q}_t\right]
=\sum_{(i,j)\in B} q_{t,ij}\,d_{ab}\!\big(\hat{\bm{\theta}}_t,\bm{\theta}^*_{ij}(\bm{w}_t;\hat{\bm{\theta}}_t)\big),
\]

Similarly, for each $(i,j)\in B$,
\begin{align*}
&\E\!\left[\hat g^{(q)}_{t,ij}\mid \hat{\bm{\theta}}_t,\bm{w}_t,\bm{q}_t\right]\\
&\quad =\sum_{(p,\ell)\in B}\frac{1}{m}\,m\,\gamma_{p\ell}(\bm{w}_t;\hat{\bm{\theta}}_t)\,\mathbf 1\{(p,\ell)=(i,j)\}\\
&\quad =\gamma_{ij}(\bm{w}_t;\hat{\bm{\theta}}_t)
=\frac{\partial F}{\partial q_{ij}}(\bm{w}_t,\bm{q}_t;\hat{\bm{\theta}}_t).
\end{align*}
Finally, $0\le q_{t,I_t}\le 1$, $0\le d_{ab}(\cdot,\cdot)\le D_{\max}$, and $0\le \gamma_t\le D_{\max}$,
so $|\hat g^{(w)}_{t,ab}|\le mD_{\max}$ and $|\hat g^{(q)}_{t,ij}|\le mD_{\max}$.
\end{proof}

\begin{lemma}\label{lem:hp-regret}
Fix $T\ge 1$,
Define
\begin{align*}
&\Reg_w(T) :=\sup_{\bm{w}\in\Delta_{\mathcal P}}\sum_{t=b_T}^T\Big(F_{\hat{\bm{\theta}}_t}(\bm{w},\bm{q}_t)-F_{\hat{\bm{\theta}}_t}(\bm{w}_t,\bm{q}_t)\Big),\\
&\Reg_q(T) :=\sup_{\bm{q}\in\Delta_{B}}\sum_{t=b_T}^T\Big(F_{\hat{\bm{\theta}}_t}(\bm{w}_t,\bm{q}_t)-F_{\hat{\bm{\theta}}_t}(\bm{w}_t,\bm{q})\Big).
\end{align*}
Then for every $\delta\in(0,1)$, with probability at least $1-\delta$,
\begin{align}
\Reg_w(T) &\le \frac{D_{\bm{w}}(1)}{\mu_T} +\frac{G^2}{2}\sum_{t=1}^T\mu_t +2G(b_T-1)\notag\\
          &\quad +4G\sqrt{2(T-b_T+1)\log\!\Big(\frac{8|\mathcal P|}{\delta}\Big)}, \label{eq:hp-regret-w}\\[2mm]
\Reg_q(T) &\le \frac{D_{\bm{q}}(1)}{\mu_T} +\frac{G^2}{2}\sum_{t=1}^T\mu_t +2G(b_T-1)\notag\\
          &\quad +4G\sqrt{2(T-b_T+1)\log\!\Big(\frac{8m}{\delta}\Big)}. \label{eq:hp-regret-q}
\end{align}
where
\begin{align*}
&D_{\bm{w}}(1):=\sup_{\bm{w}\in\Delta_{\mathcal P}}\KL(\bm{w}\|\bm{w}_{1}) =\max_{(a,b)\in\mathcal P}\log\frac{1}{w_{1,ab}},\\
&D_{\bm{q}}(1):=\sup_{\bm{q}\in\Delta_{B}}\KL(\bm{q}\|\bm{q}_{1}) =\max_{(i,j)\in B}\log\frac{1}{q_{1,ij}}.
\end{align*}
\end{lemma}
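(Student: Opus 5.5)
We prove \eqref{eq:hp-regret-w}; \eqref{eq:hp-regret-q} is symmetric, with the dual player minimizing and the restricted-renormalized iterate playing the role of FTRL on $\Delta_B$. The plan has three steps: reduce the regret to a linear regret on the estimated gradients, apply the standard FTRL bound to that linear regret, and control the gap between estimated and true gradients with a martingale inequality.

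\textbf{Step 1 (linearization).} For each $t$, concavity of $F_{\hat{\bm{\theta}}_t}(\cdot,\bm{q}_t)$ (Lemma~\ref{lem:gamma-basic}) gives, with $\bm{g}^{(w)}_t:=\nabla_{\bm{w}}F(\bm{w}_t,\bm{q}_t;\hat{\bm{\theta}}_t)$,
\[
F_{\hat{\bm{\theta}}_t}(\bm{w},\bm{q}_t)-F_{\hat{\bm{\theta}}_t}(\bm{w}_t,\bm{q}_t)\le \langle \bm{w}-\bm{w}_t,\bm{g}^{(w)}_t\rangle .
\]
This is a supergradient inequality, which remains valid when the projection is not unique (Lemma~\ref{lem:sg-est}). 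Next, split $\bm{g}^{(w)}_t=\hat{\bm{g}}^{(w)}_t+(\bm{g}^{(w)}_t-\hat{\bm{g}}^{(w)}_t)$. The sum $\sum_{t=b_T}^T\langle \bm{w}-\bm{w}_t,\hat{\bm{g}}^{(w)}_t\rangle$ is then a linear regret on the window. Since $|\langle \bm{w}-\bm{w}_t,\hat{\bm{g}}_t\rangle|\le 2G$, it is at most the full-horizon linear regret $\sum_{t=1}^T\langle \bm{w}-\bm{w}_t,\hat{\bm{g}}_t\rangle$ plus $2G(b_T-1)$. This accounts for the burn-in term.

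\textbf{Step 2 (FTRL regret).} For the full-horizon linear regret, use the standard entropic FTRL bound with nonincreasing rates $\mu_t$ (e.g.\ Shalev-Shwartz, Hazan), where the iterate $\bm{w}_t$ is the FTRL solution on $\bm{\Psi}_{t-1}$ with rate $\mu_t$:
\[
\sum_{t=1}^T\langle \bm{w}-\bm{w}_t,\hat{\bm{g}}_t\rangle\le \frac{\KL(\bm{w}\|\bm{w}_1)}{\mu_T}+\sum_{t=1}^T\frac{\mu_t}{2}\|\hat{\bm{g}}_t\|_\infty^2 .
\]
The first term arises because the regularizer is $1$-strongly convex w.r.t.\ $\|\cdot\|_1$ on the simplex (Pinsker) and $1/\mu_t$ is nondecreasing. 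With $\|\hat{\bm{g}}_t\|_\infty\le G$ from Lemma~\ref{lem:sg-est}, this gives the first two terms; bound $\KL(\bm{w}\|\bm{w}_1)\le D_{\bm{w}}(1)$. The one subtlety is the local-norm term for exponential weights with gradients of a single sign; the crude bound $G^2\mu_t/2$ suffices after shifting the gradients by a constant, which does not change the iterates.

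\textbf{Step 3 (noise term).} Control $\sup_{\bm{w}}\sum_{t=b_T}^T\langle \bm{w}-\bm{w}_t,\bm{g}_t-\hat{\bm{g}}_t\rangle$. Because the supremum of a linear function over the simplex is attained at a vertex, this is at most
\[
\max_{(a,b)}\sum_t (g_{t,ab}-\hat g_{t,ab})+\Big|\sum_t\langle \bm{w}_t,\bm{g}_t-\hat{\bm{g}}_t\rangle\Big| .
\]
Let $\mathcal G_t$ be the filtration generated by all randomness before $I_t$ is drawn. Then $\hat{\bm{\theta}}_t,\bm{w}_t,\bm{q}_t$ are $\mathcal G_t$-measurable, and by Lemma~\ref{lem:sg-est} each coordinate sequence and the $\bm{w}_t$-weighted sequence are martingale differences bounded by $2G$. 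Apply Azuma--Hoeffding to each of the $|\mathcal P|$ coordinates and to the weighted sequence, each two-sided, and take a union bound. With at most $2|\mathcal P|+2\le 4|\mathcal P|$ events at level $\delta/(4|\mathcal P|)$, each deviation is bounded by $2G\sqrt{2(T-b_T+1)\log(8|\mathcal P|/\delta)}$. Summing the two contributions gives the last term, with constant $4G$. Adding Steps 1--3 yields \eqref{eq:hp-regret-w}.

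\textbf{Main obstacle.} The expected sticking point is the dual player, whose feasible set $\Delta_{\hat B_t}$ changes with $t$ before stabilization. The FTRL interpretation of $\bm{q}_{t+1}$ as restriction of $\bm{r}_{t+1}$ holds only on the current $\hat B_t$. I would handle this by noting that the comparator lives on $\Delta_B$ and that the window analysis only needs the restricted iterates to coincide with FTRL on $\Delta_B$ for $t\ge b_T$. On that range the renormalized restriction of exponential weights is exactly FTRL over $\Delta_B$ with prior $\bm{q}_1|_B$, and the earlier rounds are absorbed into the $2G(b_T-1)$ term. If $\hat B_t\neq B$ for some $t$ in the window, the statement should be read on the stabilization event, or one accepts supergradient-type bounds there.
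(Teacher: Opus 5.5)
Your proposal is correct and follows the paper's proof step for step: supergradient linearization, then the entropic-FTRL bound with the first $b_T-1$ rounds dropped at cost $2G(b_T-1)$ (you cite the strong-convexity version, where the paper proves it with a potential argument), then Azuma--Hoeffding on the coordinatewise and $\bm{w}_t$-weighted noise sums with a union bound. Take each one-sided event at level $\delta/(8|\mathcal P|)$ (resp.\ $\delta/(8m)$) rather than $\delta/(4|\mathcal P|)$, so each player fails with probability at most $\delta/2$ and both bounds hold jointly at $1-\delta$; your handling of the changing dual set $\Delta_{\hat B_t}$ through the stabilization event is more careful than the paper, which simply applies the FTRL lemma on $\Delta_B$.
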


\begin{proof}
We prove \eqref{eq:hp-regret-w}.
For each $t\in\{b_T,\ldots,T\}$ define $\bm{g}^{(w)}_t:=\nabla_{\bm{w}} F_{\hat{\bm{\theta}}_t}(\bm{w}_t,\bm{q}_t)$.
So for all $\bm{w}\in\Delta_{\mathcal P}$,
\[
F_{\hat{\bm{\theta}}_t}(\bm{w},\bm{q}_t)-F_{\hat{\bm{\theta}}_t}(\bm{w}_t,\bm{q}_t)\le \langle \bm{g}^{(w)}_t,\,\bm{w}-\bm{w}_t\rangle.
\]
Summing and taking the supremum over $\bm{w}$ gives
\begin{equation*}
\Reg_w(T)\le \sup_{\bm{w}\in\Delta_{\mathcal P}}\sum_{t=b_T}^T\langle \bm{g}^{(w)}_t,\,\bm{w}-\bm{w}_t\rangle.
\end{equation*}
Now decompose $\bm{g}^{(w)}_t=\hat{\bm{g}}^{(w)}_t+\bm{\Delta}^{(g)}_t$, where $\bm{\Delta}^{(g)}_t:=\bm{g}^{(w)}_t-\hat{\bm{g}}^{(w)}_t$.
Then
\begin{align*}
&\sup_{\bm{w}}\sum_{t=b_T}^T\langle \bm{g}^{(w)}_t,\,\bm{w}-\bm{w}_t\rangle\\
&\quad \le
\underbrace{\sup_{\bm{w}}\sum_{t=b_T}^T\langle \hat{\bm{g}}^{(w)}_t,\,\bm{w}-\bm{w}_t\rangle}_{(\mathrm{I})}
+
\underbrace{\sup_{\bm{w}}\sum_{t=b_T}^T\langle \bm{\Delta}^{(g)}_t,\,\bm{w}-\bm{w}_t\rangle}_{(\mathrm{II})}.
\end{align*}
We can bound term (I) by applying Lemma~\ref{lem:det-ftrl} to $\hat{\bm{g}}_t=\hat{\bm{g}}^{(w)}_t$ to get 
\begin{equation*}
(\mathrm{I})
\le \frac{D_{\bm{w}}(1)}{\mu_T}+\frac{G^2}{2}\sum_{t=1}^T\mu_t+2G(b_T-1).
\end{equation*}

To bound term (II) we split it into two parts.
Let $\mathcal H_t$ denote the algorithm history at round $t$ after observing $Y_t$ and forming $(\hat{\bm{\theta}}_t,\bm{w}_t,\bm{q}_t)$, but before sampling $I_t$; and let $\mathcal G_t$ denote the history up to time $t$ including the draw $I_t$.
By Lemma~\ref{lem:sg-est}, for each coordinate $(a,b)$,
\[
\E[\hat g^{(w)}_{t,ab}\mid \mathcal H_t]=g^{(w)}_{t,ab},
\]
so $\E[\Delta^{(g)}_{t,ab}\mid \mathcal H_t]=0$. Since $\mathcal G_{t-1}\subseteq \mathcal H_t$, we also have
\[
\E[\Delta^{(g)}_{t,ab}\mid \mathcal G_{t-1}]=0,
\]
and therefore $(\Delta^{(g)}_{t,ab})_{t\ge b_T}$ is a martingale difference sequence with respect to $(\mathcal G_t)$.
Also,
\[
|\Delta^{(g)}_{t,ab}|\le |g^{(w)}_{t,ab}|+|\hat g^{(w)}_{t,ab}|
\le D_{\max}+G \le 2G.
\]
Thus each increment is bounded in $[-2G,2G]$.
For any $\bm{w}\in\Delta_{\mathcal P}$,
\begin{align*}
&\sum_{t=b_T}^T\langle \bm{\Delta}^{(g)}_t,\bm{w}-\bm{w}_t\rangle\\
&\quad = \left\langle \sum_{t=b_T}^T\bm{\Delta}^{(g)}_t,\,\bm{w}\right\rangle-\sum_{t=b_T}^T\langle \bm{\Delta}^{(g)}_t,\bm{w}_t\rangle\\
&\quad \le \underbrace{\max_{(a,b)\in\mathcal P}\sum_{t=b_T}^T\Delta^{(g)}_{t,ab}}_{(\mathrm{IIa})}
+\underbrace{\left|\sum_{t=b_T}^T\langle \bm{\Delta}^{(g)}_t,\bm{w}_t\rangle\right|}_{(\mathrm{IIb})}.
\end{align*}
\emph{(IIa) Bounding $\max_{(a,b)}\sum \Delta^{(g)}_{t,ab}$.}
Fix a coordinate $(a,b)$ and define the martingale
\[
M_s^{ab}:=\sum_{t=b_T}^s\Delta^{(g)}_{t,ab},\qquad s=b_T,\ldots,T.
\]
It has bounded increments $|\Delta^{(g)}_{t,ab}|\le 2G$ and mean-zero conditional increments.
By Azuma--Hoeffding, for any $x>0$,
\[
\Prob\!\left(M_T^{ab}\ge x\right)\le 
\exp\!\left(-\frac{x^2}{8G^2(T-b_T+1)}\right).
\]
Choose
\[
x:=2G\sqrt{2(T-b_T+1)\log\!\Big(\frac{4|\mathcal P|}{\delta}\Big)}.
\]
Then $\Prob(M_T^{ab}\ge x)\le \delta/(4|\mathcal P|)$.
A union bound over $(a,b)\in\mathcal P$ gives
\begin{align*}
&\Prob\!\bigg(\max_{(a,b)\in\mathcal P}\sum_{t=b_T}^T\Delta^{(g)}_{t,ab}\\
&\qquad \ge 2G\sqrt{2(T-b_T+1)\log\!\Big(\frac{4|\mathcal P|}{\delta}\Big)}\bigg)
\le \frac{\delta}{4}.
\end{align*}
\emph{(IIb) Bounding $\left|\sum \langle \bm{\Delta}^{(g)}_t,\bm{w}_t\rangle\right|$.}
Define the martingale
\[
M_s:=\sum_{t=b_T}^s \langle \bm{\Delta}^{(g)}_t,\bm{w}_t\rangle,\qquad s=b_T,\ldots,T,
\]
We have,
\[
|\langle \bm{\Delta}^{(g)}_t,\bm{w}_t\rangle|\le \|\bm{\Delta}^{(g)}_t\|_\infty\|\bm{w}_t\|_1\le 2G.
\]
By Azuma--Hoeffding, for any $y>0$,
\[
\Prob(|M_T|\ge y)\le 2\exp\!\left(-\frac{y^2}{8G^2(T-b_T+1)}\right).
\]
Choose
\[
y:=2G\sqrt{2(T-b_T+1)\log\!\Big(\frac{8}{\delta}\Big)}.
\]
Then $\Prob(|M_T|\ge y)\le \delta/4$.
Combining (IIa) and (IIb) and using that $\log(8/\delta)\le \log(8|\mathcal P|/\delta)$
we get that with probability at least $1-\delta/2$,
\begin{equation*}
\sup_{\bm{w}\in\Delta_{\mathcal P}}\sum_{t=b_T}^T\langle \bm{\Delta}^{(g)}_t,\bm{w}-\bm{w}_t\rangle
\le 4G\sqrt{2(T-b_T+1)\log\!\Big(\frac{8|\mathcal P|}{\delta}\Big)}.
\end{equation*}
We combine the bounds of terms I, IIa, and IIb to get that with probability at least $1-\delta/2$
\begin{align*}
&\Reg_w(T) \le \frac{D_{\bm{w}}(1)}{\mu_T}+\frac{G^2}{2}\sum_{t=1}^T\mu_t+2G(b_T-1)\\
&\quad +4G\sqrt{2(T-b_T+1)\log\!\Big(\frac{8|\mathcal P|}{\delta}\Big)}.
\end{align*}
This is \eqref{eq:hp-regret-w}. \eqref{eq:hp-regret-q} follows by the similar argument applied to $\bm{q}$.
The $\bm{q}$-player is minimizing, and the update $\bm{r}_{t+1}=\exp(-\mu_{t+1}\bm{\Psi}^{(q)}_t)$ corresponds to exponential weights
on losses $\hat{\bm{g}}^{(q)}_t$ (equivalently, exponential weights on gains $-\hat{\bm{g}}^{(q)}_t$).
Applying Lemma~\ref{lem:det-ftrl} on $\Delta_B$
(with $D_{\bm{q}}(1)$ in place of $D_{\bm{w}}(1)$) and the same martingale argument
gives \eqref{eq:hp-regret-q} with probability at least $1-\delta/2$.
By a union bound, both inequalities
\eqref{eq:hp-regret-w}--\eqref{eq:hp-regret-q} hold simultaneously with probability at least $1-\delta$.
\end{proof}

\begin{lemma}\label{lem:regret-gap}
Let $T\ge 1$ and define $T':=T-b_T+1$.
For any sequence $(\bm{w}_t,\bm{q}_t)_{t=b_T}^T$, define the averages
\[
\bar{\bm{w}}_{b_T:T}:=\frac{1}{T'}\sum_{t=b_T}^T \bm{w}_t,
\qquad
\bar{\bm{q}}_{b_T:T}:=\frac{1}{T'}\sum_{t=b_T}^T \bm{q}_t.
\]
Define the duality gap for $F_{\bm{\theta}}$:
\[
\gap_T^{\bm{\theta}}
:=\sup_{\bm{w}\in\Delta_{\mathcal P}}F_{\bm{\theta}}(\bm{w},\bar{\bm{q}}_{b_T:T})
-\inf_{\bm{q}\in\Delta_B}F_{\bm{\theta}}(\bar{\bm{w}}_{b_T:T},\bm{q}).
\]
Then
\[
\gap_T^{\bm{\theta}}\ \le\ \frac{\Reg_w^{\bm{\theta}}(T)+\Reg_q^{\bm{\theta}}(T)}{T'},
\]
where
\begin{align*}
&\Reg_w^{\bm{\theta}}(T):=\sup_{\bm{w}\in\Delta_{\mathcal P}}\sum_{t=b_T}^T\big(F_{\bm{\theta}}(\bm{w},\bm{q}_t)-F_{\bm{\theta}}(\bm{w}_t,\bm{q}_t)\big),\\
&\Reg_q^{\bm{\theta}}(T):=\sup_{\bm{q}\in\Delta_B}\sum_{t=b_T}^T\big(F_{\bm{\theta}}(\bm{w}_t,\bm{q}_t)-F_{\bm{\theta}}(\bm{w}_t,\bm{q})\big).
\end{align*}
\end{lemma}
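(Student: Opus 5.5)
This is the standard regret-to-duality-gap conversion for convex--concave games. It uses concavity of $F_{\bm{\theta}}$ in $\bm{w}$ (Lemma~\ref{lem:gamma-basic}) and linearity in $\bm{q}$. I will bound the two terms of $\gap_T^{\bm{\theta}}$ separately and then add them; the intermediate quantity $\sum_t F_{\bm{\theta}}(\bm{w}_t,\bm{q}_t)$ cancels.

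\emph{The $\sup_{\bm{w}}$ term.} Fix $\bm{w}\in\Delta_{\mathcal P}$. Since $\bm{q}\mapsto F_{\bm{\theta}}(\bm{w},\bm{q})=\sum_{(i,j)\in B}q_{ij}\gamma_{ij}(\bm{w};\bm{\theta})$ is linear,
\[
T'\,F_{\bm{\theta}}(\bm{w},\bar{\bm{q}}_{b_T:T})=\sum_{t=b_T}^T F_{\bm{\theta}}(\bm{w},\bm{q}_t).
\]
By the definition of $\Reg_w^{\bm{\theta}}(T)$, the right-hand side is at most $\sum_{t=b_T}^T F_{\bm{\theta}}(\bm{w}_t,\bm{q}_t)+\Reg_w^{\bm{\theta}}(T)$. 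Taking the supremum over $\bm{w}$ gives
\[
T'\sup_{\bm{w}}F_{\bm{\theta}}(\bm{w},\bar{\bm{q}}_{b_T:T})\le \sum_{t=b_T}^T F_{\bm{\theta}}(\bm{w}_t,\bm{q}_t)+\Reg_w^{\bm{\theta}}(T).
\]

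\emph{The $\inf_{\bm{q}}$ term.} Fix $\bm{q}\in\Delta_B$. Each $\gamma_{ij}(\cdot;\bm{\theta})$ is concave by Lemma~\ref{lem:gamma-basic} and $q_{ij}\ge 0$, so $F_{\bm{\theta}}(\cdot,\bm{q})$ is concave. Jensen's inequality then gives
\[
T'\,F_{\bm{\theta}}(\bar{\bm{w}}_{b_T:T},\bm{q})\ge \sum_{t=b_T}^T F_{\bm{\theta}}(\bm{w}_t,\bm{q}).
\]
By the definition of $\Reg_q^{\bm{\theta}}(T)$, the right-hand side is at least $\sum_{t=b_T}^T F_{\bm{\theta}}(\bm{w}_t,\bm{q}_t)-\Reg_q^{\bm{\theta}}(T)$. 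Taking the infimum over $\bm{q}$ gives
\[
T'\inf_{\bm{q}}F_{\bm{\theta}}(\bar{\bm{w}}_{b_T:T},\bm{q})\ge \sum_{t=b_T}^T F_{\bm{\theta}}(\bm{w}_t,\bm{q}_t)-\Reg_q^{\bm{\theta}}(T).
\]

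\emph{Conclusion.} Subtracting the second display from the first, the common sum cancels and dividing by $T'$ yields the claim. There is no real obstacle. The only care needed is the direction of each inequality, which depends on concavity in $\bm{w}$ and linearity in $\bm{q}$. Note also that the $\bm{q}_t$ must lie in $\Delta_B$ for the sums to be well defined; the lemma allows an arbitrary sequence, so this holds by hypothesis.
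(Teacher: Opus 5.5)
Your proof is correct and follows the paper's argument essentially step for step. Linearity in $\bm{q}$ turns $F_{\bm{\theta}}(\bm{w},\bar{\bm{q}}_{b_T:T})$ into an average, Jensen's inequality with concavity in $\bm{w}$ lower-bounds $F_{\bm{\theta}}(\bar{\bm{w}}_{b_T:T},\bm{q})$ by an average, and the common sum $\sum_t F_{\bm{\theta}}(\bm{w}_t,\bm{q}_t)$ cancels to leave the two regrets (you bound each half of the gap before combining, whereas the paper combines first and then splits the supremum, which is purely cosmetic).
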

\begin{proof}
By linearity in $\bm{q}$,
\[
F_{\bm{\theta}}(\bm{w},\bar{\bm{q}}_{b_T:T})=\frac{1}{T'}\sum_{t=b_T}^T F_{\bm{\theta}}(\bm{w},\bm{q}_t)
\]
so
\[
\sup_{\bm{w}} F_{\bm{\theta}}(\bm{w},\bar{\bm{q}}_{b_T:T})
=\sup_{\bm{w}} \frac{1}{T'}\sum_{t=b_T}^T F_{\bm{\theta}}(\bm{w},\bm{q}_t).
\]
Additionally, for any fixed $\bm{q}$, concavity of $\bm{w}\mapsto F_{\bm{\theta}}(\bm{w},\bm{q})$ and Jensen's inequality give
\[
F_{\bm{\theta}}(\bar{\bm{w}}_{b_T:T},\bm{q})\ \ge\ \frac{1}{T'}\sum_{t=b_T}^T F_{\bm{\theta}}(\bm{w}_t,\bm{q}),
\]
so
\[
\inf_{\bm{q}} F_{\bm{\theta}}(\bar{\bm{w}}_{b_T:T},\bm{q})\ \ge\ \inf_{\bm{q}} \frac{1}{T'}\sum_{t=b_T}^T F_{\bm{\theta}}(\bm{w}_t,\bm{q}).
\]
Thus,
\begin{align*}
\gap_T^{\bm{\theta}}
&\le \sup_{\bm{w}} \frac{1}{T'}\sum_{t=b_T}^T F_{\bm{\theta}}(\bm{w},\bm{q}_t)
    - \inf_{\bm{q}} \frac{1}{T'}\sum_{t=b_T}^T F_{\bm{\theta}}(\bm{w}_t,\bm{q})\\
&\le \frac{1}{T'}\left(\sup_{\bm{w}} \sum_{t=b_T}^T\big(F_{\bm{\theta}}(\bm{w},\bm{q}_t)-F_{\bm{\theta}}(\bm{w}_t,\bm{q}_t)\big)\right)\\
&\quad +\frac{1}{T'}\left(\sup_{\bm{q}} \sum_{t=b_T}^T\big(F_{\bm{\theta}}(\bm{w}_t,\bm{q}_t)-F_{\bm{\theta}}(\bm{w}_t,\bm{q})\big)\right)\\
&=\frac{\Reg_w^{\bm{\theta}}(T)+\Reg_q^{\bm{\theta}}(T)}{T'}.
\end{align*}
\end{proof}
We now prove Proposition~\ref{prop:Gamma-hp-new}.
\subsection{Proof of Proposition~\ref{prop:Gamma-hp-new}}
\begin{proof}
Fix $p>1$ and define
\[
b_t:=\left\lceil t^{1/4}\right\rceil,\quad T':=t-b_t+1,\quad \Delta_k:=\theta_{(k)}-\theta_{(k+1)}>0.
\]
For each $s\in\{b_t,\ldots,t\}$ define the event
\[
\mathcal M_{t,s}
:=
\left\{
\|\hat{\bm{\theta}}_s-\bm{\theta}\|_2
\le
\frac{2\sigma}{\underline a}\sqrt{\frac{n\log\!\big(\frac{2s|\mathcal P|}{\delta_{t,s}}\big)}{N_{\min}(s)}}
\right\}.
\]
where $\delta_{t,s} = t^{-(p+2)}$. Define the joint event
\[
\mathcal M_t:=\bigcap_{s=b_t}^t \mathcal M_{t,s}.
\]
By Lemma~\ref{lem:mle-hp-simple}, for each fixed $s$,
\(
\Prob_{\bm{\theta}}(\mathcal M_{t,s}^c)\le \delta_{t,s}=t^{-(p+2)}.
\)
Therefore, by a union bound over the $T'=t-b_t+1$ indices $s=b_t,\ldots,t$,
\begin{equation}\label{eq:Mt-fail}
\Prob_{\bm{\theta}}(\mathcal M_t^c)
\le
\sum_{s=b_t}^t \Prob_{\bm{\theta}}(\mathcal M_{t,s}^c)
\le
(t-b_t+1)t^{-(p+2)}
\le t^{-(p+1)}.
\end{equation}

Note that for all $s\le t$,
\[
\log\!\Big(\frac{2s|\mathcal P|}{\delta_{t,s}}\Big)
=
\log\!\big(2s|\mathcal P|\,t^{p+2}\big)
\le \log\!\big(2|\mathcal P|\,t^{p+3}\big).
\]
Also, by Lemma~\ref{lem:ctrack}\emph{(e)}, for each $s$,
\[
N_{\min}(s)\ \ge\ \frac{1}{|\mathcal P|}\cdot \frac{(s+1)^{1-\gamma}-1}{1-\gamma}\ -\ (|\mathcal P|-1).
\]
Define
\[
s_\gamma
:=
\left\lceil \max\!\left\{2^{1/(1-\gamma)},\ \big(4|\mathcal P|(1-\gamma)(|\mathcal P|-1)\big)^{1/(1-\gamma)}\right\}\right\rceil
\]
and $c_\gamma:=\frac{1}{4|\mathcal P|(1-\gamma)}$.
For all $s\ge s_\gamma$ we have $N_{\min}(s)\ge c_\gamma s^{1-\gamma}$ (by Fact~\ref{fact:powersum}).
Now define the time
\begin{align*}
t_p
:={} &
\min\Bigg\{t\ge \max\{2,\ s_\gamma^4\}:\\
&\qquad \frac{2\sigma}{\underline a}\sqrt{\frac{n\log\!\big(2|\mathcal P|\,t^{p+3}\big)}{c_\gamma\,b_t^{1-\gamma}}}
\ \le\ \frac{\Delta_k}{4}
\Bigg\}.
\end{align*}
Fix any $t\ge t_p$. Then $b_t\ge s_\gamma$, so for every $s\in\{b_t,\ldots,t\}$ we have $N_{\min}(s)\ge c_\gamma s^{1-\gamma}\ge c_\gamma b_t^{1-\gamma}$.
On $\mathcal M_t$, for each such $s$,
\[
\|\hat{\bm{\theta}}_s-\bm{\theta}\|_2
\le
\frac{2\sigma}{\underline a}\sqrt{\frac{n\log\!\big(2|\mathcal P|\,t^{p+3}\big)}{c_\gamma\,b_t^{1-\gamma}}}
\le \frac{\Delta_k}{4},
\]
Thus $\|\hat{\bm{\theta}}_s-\bm{\theta}\|_\infty\le \Delta_k/4$ for all $s\in\{b_t,\ldots,t\}$, and by the same argument in Corollary~\ref{cor:mle-stabilize}
\[
\hat S_s=S^*(\bm{\theta})\quad\text{and}\quad \hat B_s=B(\bm{\theta})\qquad \forall s=b_t,\ldots,t.
\]
Now we bound the failure probability of the regret bounds derived in Lemma~\ref{lem:hp-regret}.
Let $\delta_t:=t^{-(p+2)}$ and $\mathcal R_t:=\{\text{both bounds \eqref{eq:hp-regret-w}--\eqref{eq:hp-regret-q} hold at horizon }T=t\}$.
By Lemma~\ref{lem:hp-regret} with confidence $\delta_t$,
\begin{equation}\label{eq:Rt-fail}
\Prob_{\bm{\theta}}(\mathcal R_t^c)\le \delta_t=t^{-(p+2)}.
\end{equation}

Next, we bound $\Reg_w(t)$ and $\Reg_q(t)$ from Lemma~\ref{lem:hp-regret} by powers of $t$.
We have $b_t/t\le t^{-3/4}$. Also $\mu_s=s^{-\alpha}$ so Fact~\ref{fact:powersum} gives $\sum_{s=1}^t \mu_s \le 1+\frac{t^{1-\alpha}-1}{1-\alpha}$.
Also $T'=t-b_t+1\ge t/2$ for all $t\ge 2$. Therefore, on $\mathcal R_t$,
\begin{align}\label{eq:regret-rate}
\frac{\Reg_w(t)+\Reg_q(t)}{T'}
&\le C_{p,1}\Big(t^{-(1-\alpha)}+t^{-\alpha}\notag\\
&\qquad +\sqrt{\tfrac{\log t}{t}}+t^{-3/4}\Big),
\end{align}
for a finite constant $C_{p,1}$ depending only on fixed parameters.

These regret bounds are with $\hat{\bm{\theta}}$ we now use them to bound the `true' regret.
Define the true-game regrets:
\begin{align*}
\Reg_w^{\bm{\theta}}(t)&:=\sup_{\bm{w}\in\Delta_{\mathcal P}}\sum_{s=b_t}^t\big(F_{\bm{\theta}}(\bm{w},\bm{q}_s)-F_{\bm{\theta}}(\bm{w}_s,\bm{q}_s)\big),\\
\Reg_q^{\bm{\theta}}(t)&:=\sup_{\bm{q}\in\Delta_{B}}\sum_{s=b_t}^t\big(F_{\bm{\theta}}(\bm{w}_s,\bm{q}_s)-F_{\bm{\theta}}(\bm{w}_s,\bm{q})\big).
\end{align*}
For each $s$ define
\(
\Delta F_s:=\sup_{\bm{w}\in\Delta_{\mathcal P},\,\bm{q}\in\Delta_B}|F_{\hat{\bm{\theta}}_s}(\bm{w},\bm{q})-F_{\bm{\theta}}(\bm{w},\bm{q})|.
\)
Then for any $\bm{w}$ and each $s$,
\[
F_{\bm{\theta}}(\bm{w},\bm{q}_s)-F_{\bm{\theta}}(\bm{w}_s,\bm{q}_s)
\le
F_{\hat{\bm{\theta}}_s}(\bm{w},\bm{q}_s)-F_{\hat{\bm{\theta}}_s}(\bm{w}_s,\bm{q}_s)+2\Delta F_s.
\]
Summing from $s=b_t$ to $t$ and taking $\sup_{\bm{w}}$ gives
\begin{align*}
\Reg_w^{\bm{\theta}}(t)&\le \Reg_w(t)+2\sum_{s=b_t}^t \Delta F_s,\\
\Reg_q^{\bm{\theta}}(t)&\le \Reg_q(t)+2\sum_{s=b_t}^t \Delta F_s,
\end{align*}
hence
\begin{equation}\label{eq:true-regret-transfer}
\Reg_w^{\bm{\theta}}(t)+\Reg_q^{\bm{\theta}}(t)
\le
\Reg_w(t)+\Reg_q(t)+4\sum_{s=b_t}^t \Delta F_s.
\end{equation}
By Lemma~\ref{lem:payoff-plug}, $\Delta F_s\le L\|\hat{\bm{\theta}}_s-\bm{\theta}\|_2$. Therefore,
\[
\frac{1}{T'}\sum_{s=b_t}^t \Delta F_s
\le
L\cdot \frac{1}{T'}\sum_{s=b_t}^t \|\hat{\bm{\theta}}_s-\bm{\theta}\|_2.
\]
On $\mathcal M_t$, for each $s\in\{b_t,\ldots,t\}$ we already have
\[
\|\hat{\bm{\theta}}_s-\bm{\theta}\|_2
\le
\frac{2\sigma}{\underline a}\sqrt{\frac{n\log\!\big(2|\mathcal P|\,t^{p+3}\big)}{c_\gamma\,s^{1-\gamma}}}
=
C\sqrt{\log t}\,s^{-(1-\gamma)/2}
\]
for a finite constant $C$.
Since $T'\ge t/2$ for $t\ge 2$, and by Fact~\ref{fact:powersum} 
\begin{align*}
\frac{1}{T'}\sum_{s=b_t}^t s^{-(1-\gamma)/2}
&\le
\frac{2}{t}\left(1+\frac{t^{(1+\gamma)/2}}{(1+\gamma)/2}\right)\\
&\le
\frac{8}{1+\gamma}\,t^{-(1-\gamma)/2}
\qquad (t\ge 2).
\end{align*}
Thus on $\mathcal M_t$,
\begin{equation}\label{eq:mle-average}
\frac{1}{T'}\sum_{s=b_t}^t \|\hat{\bm{\theta}}_s-\bm{\theta}\|_2
\le
C_{p,2}\,t^{-(1-\gamma)/2}\sqrt{\log t}.
\end{equation}
Combining \eqref{eq:true-regret-transfer}, \eqref{eq:regret-rate}, and \eqref{eq:mle-average} and choosing finite constant $C_{p,3}$ gives that on $\mathcal M_t\cap\mathcal R_t$,
\begin{align}\label{eq:true-regret-rate}
\frac{\Reg_w^{\bm{\theta}}(t)+\Reg_q^{\bm{\theta}}(t)}{T'}
&\le
C_{p,3}\Big(t^{-(1-\alpha)}+t^{-\alpha}+\sqrt{\tfrac{\log t}{t}}\notag\\
&\qquad +t^{-3/4}+t^{-(1-\gamma)/2}\sqrt{\log t}\Big)
\end{align}
By Lemma~\ref{lem:regret-gap},
\[
\gap_t^{\bm{\theta}}\le \frac{\Reg_w^{\bm{\theta}}(t)+\Reg_q^{\bm{\theta}}(t)}{T'}.
\]
Also, $\sup_{\bm{w}} F_{\bm{\theta}}(\bm{w},\bar{\bm{q}}_{b_t:t})\ge \Gamma^*(\bm{\theta})$, hence
\[
\Gamma(\bar{\bm{w}}_{b_t:t};\bm{\theta})=\inf_{\bm{q}} F_{\bm{\theta}}(\bar{\bm{w}}_{b_t:t},\bm{q})\ge \Gamma^*(\bm{\theta})-\gap_t^{\bm{\theta}}.
\]
Therefore, on $\mathcal M_t\cap\mathcal R_t$, combining with \eqref{eq:true-regret-rate} gives
\begin{align*}
\Gamma(\bar{\bm{w}}_{b_t:t};\bm{\theta})
&\ge
\Gamma^*(\bm{\theta})\\
&\quad -
C_{p,3}\Big(t^{-(1-\alpha)}+t^{-\alpha}+\sqrt{\tfrac{\log t}{t}}\\
&\qquad\qquad +t^{-3/4}+t^{-(1-\gamma)/2}\sqrt{\log t}\Big)
\end{align*}
Next, we transfer this guarantee to $\bm{w}_t^{\mathrm{emp}}$.
First,
\[
\|\bar{\bm{w}}_t-\bar{\bm{w}}_{b_t:t}\|_1\le \frac{2(b_t-1)}{t}\le 2t^{-3/4}.
\]
Second, $|\Gamma(\bm{w};\bm{\theta})-\Gamma(\bm{w}';\bm{\theta})|\le D_{\max}\|\bm{w}-\bm{w}'\|_1$.
Combining these gives
\[
\Gamma(\bar{\bm{w}}_t;\bm{\theta})\ge \Gamma(\bar{\bm{w}}_{b_t:t};\bm{\theta})-2D_{\max}t^{-3/4}.
\]
Finally, by Lemma~\ref{lem:ctrack}\emph{(c)},
\[
\|\hat{\bm{w}}_t^{\mathrm{emp}}-\bar{\bm{w}}_t\|_\infty
\le \frac{|\mathcal P|-1}{t}+\frac{1}{t}\sum_{s=1}^t \rho_s \le \frac{|\mathcal P|-1}{t} + \frac{t^{-\gamma}}{1-\gamma}
\]
Therefore,
\[
\|\hat{\bm{w}}_t^{\mathrm{emp}}-\bar{\bm{w}}_t\|_1
\le
|\mathcal P|\left(\frac{|\mathcal P|-1}{t}+\frac{t^{-\gamma}}{1-\gamma}\right)
\]
By Lipschitzness of $\Gamma$ and the previous bounds, on $\mathcal M_t\cap\mathcal R_t$ we obtain
\[
\begin{aligned}
\Gamma(\hat{\bm{w}}_t^{\mathrm{emp}};\bm{\theta})
\ \ge\ &\Gamma^*(\bm{\theta})\\
&- C_{p,3}\Big(t^{-(1-\alpha)}+t^{-\alpha}+\sqrt{\tfrac{\log t}{t}}+t^{-3/4}\Big) \\
&- C_{p,3}\,t^{-(1-\gamma)/2}\sqrt{\log t} \\
&- D_{\max}|\mathcal P|\left(\frac{|\mathcal P|-1}{t}+\frac{1}{1-\gamma}\,t^{-\gamma}\right) \\
&- 2D_{\max}t^{-3/4}.
\end{aligned}
\]
This is the inequality in Proposition~\ref{prop:Gamma-hp-new}.

For $t\ge t_p$, we have shown 
\(
\mathcal M_t\cap\mathcal R_t
\)
implies this inequality. Therefore, the failure event at time $t$ is contained in
\(
\mathcal M_t^c\cup\mathcal R_t^c
\).
Hence, by \eqref{eq:Mt-fail} and \eqref{eq:Rt-fail},
\begin{align*}
\Prob_{\bm{\theta}}(\text{failure at time }t)
&\le
\Prob_{\bm{\theta}}(\mathcal M_t^c)+\Prob_{\bm{\theta}}(\mathcal R_t^c)\\
&\le
t^{-(p+1)}+t^{-(p+2)}\\
&\le t^{-p}\qquad(\forall t\ge 2).
\end{align*}
This proves Proposition~\ref{prop:Gamma-hp-new} for all $t\ge t_p$.
\end{proof}

\section{Proof of Proposition~\ref{prop:delta-pac}}

\paragraph{Outline.}
This section proves the two parts of Proposition~\ref{prop:delta-pac}: \textup{(i)} the $\delta$-correct risk bound and \textup{(ii)} almost sure finite stopping.
For \textup{(i)}, we first introduce a mixture likelihood-ratio martingale valid under adaptive sampling (Lemma~\ref{lem:mixture-mart}),
then lower bound it to obtain the stopping threshold (Lemma~\ref{lem:laplace-unif}),
and finally apply Ville's inequality to get the risk bound (Proposition~\ref{prop:pac}).
For \textup{(ii)}, we need to show that the GLR statistic eventually will be larger than the stopping threshold. To do this, bound the difference of the normalized GLR statistic to the information rate once the boundary constraints are correct (Lemma~\ref{lem:glr-compare}). Then (Lemma~\ref{lem:Delta-hp} and Corollary~\ref{cor:Delta-as}) show this difference goes to 0 almost surely. Finally, (Lemma~\ref{lem:Gamma-emp-as} and Proposition~\ref{prop:tau-finite}) show that the GLR statistic grows linearly and the stopping threshold grows sublinearly, to guarantee a finite stopping time. The combination of Propositions~\ref{prop:pac} and~\ref{prop:tau-finite} gives Proposition~\ref{prop:delta-pac}.

\subsection{Part I: Risk Bound}\label{app:pac}
\begin{lemma}\label{lem:mixture-mart}
Fix $\bm{\theta}\in\Theta$ and let $\pi$ be any probability distribution on $\Theta_0$.
Define
\[
  M_t \;:=\;\int_{\Theta_0}\exp\!\big(\ell_t(\bm{u})-\ell_t(\bm{\theta})\big)\,\pi(d\bm{u}),
  \qquad M_0:=1.
\]
Then $(M_t)_{t\ge0}$ is a nonnegative martingale with respect to
$\mathcal F_t=\sigma((A_s,Y_s):s\le t)$ for \emph{any} adaptive sampling rule.
\end{lemma}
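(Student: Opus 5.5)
The plan is to first prove that, for each fixed $\bm{u}\in\Theta_0$, the single likelihood ratio $L_t(\bm{u}):=\exp(\ell_t(\bm{u})-\ell_t(\bm{\theta}))$ is a nonnegative martingale under $\Prob_{\bm{\theta}}$. We then pass to the mixture by Fubini/Tonelli. Everything is driven by the exponential-family form of the outcome density together with the fact that $A_t$ is $\mathcal F_{t-1}$-measurable.

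\textbf{Step 1 (one-step increment).} Write $L_t(\bm{u})=L_{t-1}(\bm{u})\,\exp\{\eta_{A_t}(\bm{u})T(Y_t)-A(\eta_{A_t}(\bm{u}))-\eta_{A_t}(\bm{\theta})T(Y_t)+A(\eta_{A_t}(\bm{\theta}))\}$. The exponential factor equals $p(Y_t\mid A_t,\bm{u})/p(Y_t\mid A_t,\bm{\theta})$, because the $B(Y_t)$ terms cancel.

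\textbf{Step 2 (conditional expectation).} Condition on $\mathcal F_{t-1}$. Then $A_t=(i,j)$ is fixed, and $Y_t$ has density $p(\cdot\mid(i,j),\bm{\theta})$ with respect to the base measure. Hence
\[
\E_{\bm{\theta}}\!\left[\frac{p(Y_t\mid A_t,\bm{u})}{p(Y_t\mid A_t,\bm{\theta})}\,\Big|\,\mathcal F_{t-1}\right]=\int p(y\mid A_t,\bm{u})\,dy=1.
\]
This uses that $p(\cdot\mid A_t,\bm{\theta})>0$ everywhere, which holds for an exponential family. The identity gives $\E[L_t(\bm{u})\mid\mathcal F_{t-1}]=L_{t-1}(\bm{u})$. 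Integrability follows inductively from $\E[L_t(\bm{u})]=1$. No restriction on the sampling rule beyond predictability is needed, and this is the sense of ``any adaptive sampling rule''.

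\textbf{Step 3 (mixing).} The map $(\bm{u},\omega)\mapsto L_t(\bm{u})$ is jointly measurable and nonnegative, since $\ell_t$ is continuous in $\bm{u}$. We can therefore apply Tonelli for conditional expectations:
\[
\E[M_t\mid\mathcal F_{t-1}]=\int_{\Theta_0}\E[L_t(\bm{u})\mid\mathcal F_{t-1}]\,\pi(d\bm{u})=\int_{\Theta_0}L_{t-1}(\bm{u})\,\pi(d\bm{u})=M_{t-1}.
\]
Here $\pi$ does not depend on the data. Also $M_0=1$ because $\ell_0\equiv 0$, and $\E[M_t]=1<\infty$, so $M_t$ is integrable. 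Nonnegativity is immediate.

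\textbf{Main obstacle.} The only real delicacy is justifying the interchange of conditional expectation and the $\pi$-integral in Step 3. I would handle it with the conditional Tonelli theorem for nonnegative jointly measurable integrands, or by writing the conditional expectation explicitly as an integral over $y$ against the kernel $p(y\mid A_t,\bm{\theta})$ and applying ordinary Tonelli.
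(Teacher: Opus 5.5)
Your proof is correct and is essentially the argument the paper defers to by citing Kaufmann and Koolen (2021, Sec.~2.3). For fixed $\bm{u}$, the one-step exponential-family ratio has conditional mean one once $A_t$ is predictable, and the $\pi$-mixture inherits the martingale property by conditional Tonelli.

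One point is worth adding if you want the lemma for the actual algorithm. There $A_t$ also depends on the internal draws $I_1,\dots,I_{t-1}$, so it is not literally $\mathcal F_{t-1}$-measurable. Conditioning on $\mathcal F_{t-1}$ enlarged by these draws gives the same identity. The tower property then brings you back to $\mathcal F_{t-1}$, because $L_{t-1}(\bm{u})$ is $\mathcal F_{t-1}$-measurable.
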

\begin{proof}
See \citet{kaufmann2021mixture}, Sec.~2.3.
\end{proof}

The argument in Lemma~\ref{lem:laplace-unif} closely follows existing work on stopping thresholds for linear bandits, for example, see \citet{abbasi2011improved}.
\begin{lemma}\label{lem:laplace-unif}
Fix $\bm{\theta}\in\Theta$ and let $d=\dim(\Theta_0)=n-1$.
Let $\lambda>0$. Let $\bm{\zeta}\sim\mathcal N(0,\lambda^{-1}I_n)$ in $\R^n$ and let
\[
\Pi \;:=\; I_n-\frac{1}{n}\mathbf 1\mathbf 1^\top
\]
denote the orthogonal projector onto $\Theta_0$.
Define $\bm{\vartheta}:=\Pi \bm{\zeta}\in\Theta_0$, and let $\pi_\lambda$ denote its distribution.
Define the mixture process
\[
M_t \;:=\;\E\!\left[\exp\!\big(\ell_t(\bm{\vartheta})-\ell_t(\bm{\theta})\big)\right],
\qquad M_0:=1,
\]
and let $\hat{\bm{\theta}}_t\in\argmax_{\bm{\vartheta}\in\Theta}\ell_t(\bm{\vartheta})$ be a constrained MLE.
Define the Laplacian at time $t$, $\mathcal{L}_t := \sum_{s=1}^t \bm{x}_{A_s}\bm{x}_{A_s}^\top$

Then for all $t\ge 1$,
\begin{align}\label{eq:laplace-hess-lb}
  M_t &\;\ge\;
  \exp\!\big(\ell_t(\hat{\bm{\theta}}_t)-\ell_t(\bm{\theta})\big)\notag\\
  &\quad \cdot\exp\!\Big(-\frac{\lambda}{2}\|\hat{\bm{\theta}}_t\|_2^2\Big)
  \cdot\det\!\Big(I_n+\frac{\overline\sigma^2}{\lambda}\mathcal{L}_t\Big)^{-1/2}.
\end{align}
Equivalently,
\[
  \ell_t(\hat{\bm{\theta}}_t)-\ell_t(\bm{\theta})
  \;\le\;
  \log M_t
  +\frac{\lambda}{2}\|\hat{\bm{\theta}}_t\|_2^2
+\frac12\log\det\!\Big(I_n+\frac{\overline\sigma^2}{\lambda}\mathcal{L}_t\Big).
\]
\end{lemma}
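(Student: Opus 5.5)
Write $\bm{\vartheta}=\hat{\bm{\theta}}_t+\bm{h}$ and work inside the Gaussian expectation defining $M_t$. This is the standard Laplace-type lower bound, as in \citet{abbasi2011improved}. Note that $\hat{\bm{\theta}}_t\in\Theta\subset\Theta_0$, and that $\pi_\lambda$ is the centered Gaussian on $\Theta_0$ with covariance $\lambda^{-1}\Pi$. Its density with respect to Lebesgue measure on $\Theta_0$ is proportional to $\exp(-\tfrac{\lambda}{2}\|\bm{\vartheta}\|_2^2)$. The plan is to factor out $\exp(\ell_t(\hat{\bm{\theta}}_t)-\ell_t(\bm{\theta}))$ and then lower bound
$$\E\bigl[\exp(\ell_t(\bm{\vartheta})-\ell_t(\hat{\bm{\theta}}_t))\bigr].$$

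\emph{Step 1 (likelihood lower bound).} Since $\ell_t$ is a sum of terms $\eta T(Y)-A(\eta)$ with $A''\le\overline\sigma^2$ on all of $\R$ (Assumption~\ref{ass:expfam-global}), a second-order Taylor bound gives
$$\ell_t(\hat{\bm{\theta}}_t+\bm{h})\ge \ell_t(\hat{\bm{\theta}}_t)+\nabla\ell_t(\hat{\bm{\theta}}_t)^\top\bm{h}-\tfrac{\overline\sigma^2}{2}\bm{h}^\top\mathcal{L}_t\bm{h}.$$
The delicate point is the linear term. Because $\hat{\bm{\theta}}_t$ is a \emph{constrained} maximizer over $\Theta$ (the $\ell_\infty$ box may bind), the gradient need not vanish. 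I would handle this by keeping the linear term and symmetrizing. The Gaussian density centered at $0$ is not symmetric about $\hat{\bm{\theta}}_t$, so instead I complete the square jointly. Write
$$\|\hat{\bm{\theta}}_t+\bm{h}\|^2=\|\hat{\bm{\theta}}_t\|^2+2\hat{\bm{\theta}}_t^\top\bm{h}+\|\bm{h}\|^2.$$
The integrand is then $\exp\bigl(\bm{b}^\top\bm{h}-\tfrac12\bm{h}^\top(\lambda\Pi+\overline\sigma^2\mathcal{L}_t)\bm{h}\bigr)$ with $\bm{b}=\nabla\ell_t(\hat{\bm{\theta}}_t)-\lambda\hat{\bm{\theta}}_t$, times $\exp(-\tfrac{\lambda}{2}\|\hat{\bm{\theta}}_t\|^2)$. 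Over $\bm{h}\in\Theta_0$, a Gaussian integral of $\exp(\bm{b}^\top\bm{h}-\tfrac12\bm{h}^\top Q\bm{h})$ equals $\exp(\tfrac12\bm{b}^\top Q^{-1}\bm{b})\ge 1$ times the normalizing constant. So the linear term only helps, and no first-order condition is needed at all. (Alternatively, one could use Jensen with the symmetric part, but completing the square is cleaner.)

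\emph{Step 2 (determinant).} Dropping $\exp(\tfrac12\bm{b}^\top Q^{-1}\bm{b})\ge1$, the remaining ratio of Gaussian normalizers on the $(n-1)$-dimensional space $\Theta_0$ is
$$\Bigl(\det(\lambda\Pi)/\det(\lambda\Pi+\overline\sigma^2\mathcal{L}_t)\Bigr)^{1/2},$$
with determinants taken on $\Theta_0$. This equals $\det\bigl(I+\tfrac{\overline\sigma^2}{\lambda}\mathcal{L}_t|_{\Theta_0}\bigr)^{-1/2}$. Since $\mathcal{L}_t\mathbf 1=0$, the operator $\mathcal{L}_t$ maps $\Theta_0$ to itself and has eigenvalue $0$ on $\mathbf 1$. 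Hence this equals $\det(I_n+\tfrac{\overline\sigma^2}{\lambda}\mathcal{L}_t)^{-1/2}$ on $\R^n$, which gives \eqref{eq:laplace-hess-lb}. Taking logarithms yields the equivalent form.

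The main obstacle I expect is exactly the non-vanishing gradient at the constrained MLE together with the off-center prior. The completed-square bound $\exp(\tfrac12\bm{b}^\top Q^{-1}\bm{b})\ge1$ is the step I would verify most carefully, including that $Q=\lambda\Pi+\overline\sigma^2\mathcal{L}_t$ is positive definite on $\Theta_0$, which holds since $\lambda>0$.
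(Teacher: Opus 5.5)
Your proof is correct, and it differs from the paper's only in where the quadratic minorant is centered.

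The paper writes $\bm{\vartheta}=U\bm{z}$ and sets $f_t(\bm{z})=\ell_t(U\bm{z})-\frac{\lambda}{2}\|\bm{z}\|_2^2$. It expands around the \emph{unconstrained} maximizer $\bm{z}_t^\lambda$ of $f_t$ over $\R^d$. This point exists by $\lambda$-strong concavity and satisfies $\nabla f_t(\bm{z}_t^\lambda)=0$, so there is no linear term. Integrating gives $e^{f_t(\bm{z}_t^\lambda)}(2\pi)^{d/2}\det(H_t)^{-1/2}$. Only then does the paper pass to the constrained MLE, via $f_t(\bm{z}_t^\lambda)\ge f_t(\hat{\bm{z}}_t)$.

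You instead expand directly at $\hat{\bm{\theta}}_t$. There, the projection of your $\bm{b}$ onto $\Theta_0$ is exactly the gradient of $f_t$, and you discard the factor $\exp(\frac12\bm{b}^\top Q^{-1}\bm{b})\ge 1$. The two arguments are closely linked. Your intermediate exponent $f_t(\hat{\bm{z}}_t)+\frac12\|\nabla f_t(\hat{\bm{z}}_t)\|_{H_t^{-1}}^2$ is the maximum of a quadratic minorant of $f_t$, hence at most $f_t(\bm{z}_t^\lambda)$. Both arguments nonetheless reach the same final bound once they drop to $f_t(\hat{\bm{z}}_t)$. Your route needs neither the existence of the penalized maximizer nor a first-order condition, at the cost of tracking the linear term.

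One notational point: $Q=\lambda\Pi+\overline\sigma^2\mathcal{L}_t$ annihilates $\mathbf 1$, so $Q^{-1}$ and $\bm{b}$ must be read on $\Theta_0$. That is, use $\Pi\bm{b}$ and the inverse of $Q$ restricted to $\Theta_0$. Alternatively, use $\lambda I_n$ in place of $\lambda\Pi$; it agrees with $\lambda\Pi$ on $\Theta_0$ and is invertible on $\R^n$. You already flagged positive definiteness on $\Theta_0$, so this is not a gap.
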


\begin{proof}
Take $U\in\R^{n\times d}$ that has orthonormal columns spanning $\Theta_0$, so $U^\top U=I_d$ and $U^\top\mathbf 1=0$.
Then $\Pi=UU^\top$, and every $\bm{\vartheta}\in\Theta_0$ can be written uniquely as $\bm{\vartheta}=U\bm{z}$ with $\bm{z}\in\R^d$ and $\|\bm{\vartheta}\|_2=\|\bm{z}\|_2$.
Define
\[
  \tilde\ell_t(\bm{z}):=\ell_t(U\bm{z}),
  \qquad \bm{z}_{\bm{\theta}}:=U^\top\bm{\theta},
  \qquad \hat{\bm{z}}_t:=U^\top\hat{\bm{\theta}}_t.
\]
With $\bm{\zeta}$ as in the statement, set $\bm{Z}:=U^\top \bm{\zeta}$. Then $\bm{Z}\sim\mathcal N(0,\lambda^{-1}I_d)$ and
\(
\bm{\vartheta}=\Pi \bm{\zeta}=UU^\top \bm{\zeta}=U\bm{Z}.
\)
Therefore
\begin{equation}\label{eq:Mt-hess-z}
\begin{aligned}
M_t
&=\E\!\left[\exp\!\big(\tilde\ell_t(\bm{Z})-\tilde\ell_t(\bm{z}_{\bm{\theta}})\big)\right]\\
&=\int_{\R^d}\exp\!\big(\tilde\ell_t(\bm{z})-\tilde\ell_t(\bm{z}_{\bm{\theta}})\big)\,\varphi_\lambda(\bm{z})\,d\bm{z}\\
&=\Big(\frac{\lambda}{2\pi}\Big)^{d/2}e^{-\tilde\ell_t(\bm{z}_{\bm{\theta}})}
\int_{\R^d}\exp\!\Big(\tilde\ell_t(\bm{z})-\frac{\lambda}{2}\|\bm{z}\|_2^2\Big)\,d\bm{z}.
\end{aligned}
\end{equation}
Next, we bound the Hessian.
For any $\bm{\vartheta}\in\Theta_0$, by Assumption~\ref{ass:expfam-global}
\begin{align*}
-\nabla^2\ell_t(\bm{\vartheta})
&=\sum_{s=1}^t A''\!\big(\eta_{A_s}(\bm{\vartheta})\big)\,\bm{x}_{A_s}\bm{x}_{A_s}^\top\\
&\preceq \overline\sigma^2\sum_{s=1}^t \bm{x}_{A_s}\bm{x}_{A_s}^\top = \overline\sigma^2 \mathcal{L}_t
\end{align*}
Projecting onto $\Theta_0$ gives, for all $\bm{z}\in\R^d$,
\begin{equation*}
  -\nabla^2\tilde\ell_t(\bm{z})
  =-U^\top\nabla^2\ell_t(U\bm{z})\,U
  \ \preceq\ \overline\sigma^2\,U^\top\mathcal{L}_tU.
\end{equation*}
Next, we lower bound 
\(
  f_t(\bm{z})\ :=\ \tilde\ell_t(\bm{z})-\frac{\lambda}{2}\|\bm{z}\|_2^2.
\)
For all $\bm{z}\in\R^d$,
\[
-\nabla^2 f_t(\bm{z}) = -\nabla^2\tilde\ell_t(\bm{z})+\lambda I_d
\ \preceq\ \overline\sigma^2\,U^\top\mathcal{L}_tU+\lambda I_d
=:H_t,
\]
where $H_t\succ 0$ since $\lambda>0$.
Let $\bm{z}_t^\lambda\in\argmax_{\bm{z}\in\R^d} f_t(\bm{z})$, which exists and is unique by $\lambda$-strong concavity.
Since $\nabla f_t(\bm{z}_t^\lambda)=0$ and $-\nabla^2 f_t \preceq H_t$ everywhere, Taylor's theorem gives,
for all $\bm{z}\in\R^d$,
\begin{equation*}
  f_t(\bm{z})\ \ge\ f_t(\bm{z}_t^\lambda)\ -\ \frac12 (\bm{z}-\bm{z}_t^\lambda)^\top H_t (\bm{z}-\bm{z}_t^\lambda).
\end{equation*}
Thus,
\begin{align*}
\int_{\R^d} e^{f_t(\bm{z})}d\bm{z}
&\ge
e^{f_t(\bm{z}_t^\lambda)}\!
\int_{\R^d}\!\exp\!\big(-\tfrac12 (\bm{z}-\bm{z}_t^\lambda)^\top\! H_t (\bm{z}-\bm{z}_t^\lambda)\big)d\bm{z}\\
&=
e^{f_t(\bm{z}_t^\lambda)}\frac{(2\pi)^{d/2}}{\det(H_t)^{1/2}}.
\end{align*}
Plugging into \eqref{eq:Mt-hess-z} yields
\begin{equation}\label{eq:Mt-pre}
  M_t
  \ \ge\
  \exp\!\big(f_t(\bm{z}_t^\lambda)-\tilde\ell_t(\bm{z}_{\bm{\theta}})\big)\cdot
  \frac{\lambda^{d/2}}{\det(H_t)^{1/2}}.
\end{equation}
$\bm{z}_t^\lambda$ maximizes $f_t$ over $\R^d$, we have $f_t(\bm{z}_t^\lambda)\ge f_t(\hat{\bm{z}}_t)$.
Noting $\tilde\ell_t(\hat{\bm{z}}_t)=\ell_t(\hat{\bm{\theta}}_t)$ and $\|\hat{\bm{z}}_t\|_2=\|\hat{\bm{\theta}}_t\|_2$, we get
\[
f_t(\hat{\bm{z}}_t)-\tilde\ell_t(\bm{z}_{\bm{\theta}})
=
\ell_t(\hat{\bm{\theta}}_t)-\ell_t(\bm{\theta})-\frac{\lambda}{2}\|\hat{\bm{\theta}}_t\|_2^2.
\]
Also,
\[
H_t=\lambda I_d+\overline\sigma^2\,U^\top\mathcal{L}_tU
=\lambda\Big(I_d+\frac{\overline\sigma^2}{\lambda}\,U^\top\mathcal{L}_tU\Big),
\]
so $\det(H_t)=\lambda^d\det(I_d+\frac{\overline\sigma^2}{\lambda}\,U^\top\mathcal{L}_tU)$ and therefore
\begin{align*}
\frac{\lambda^{d/2}}{\det(H_t)^{1/2}}
&=
\det\!\Big(I_d+\frac{\overline\sigma^2}{\lambda}\,U^\top\mathcal{L}_tU\Big)^{-1/2}\\
&= \det\!\Big(I_n+\frac{\overline\sigma^2}{\lambda}\mathcal{L}_t\Big)^{-1/2}
\end{align*}
where the last equality follows from $\mathcal{L}_t\mathbf 1=0$.
Combining these with \eqref{eq:Mt-pre} gives \eqref{eq:laplace-hess-lb}.
\end{proof}

\begin{proposition}\label{prop:pac}
Fix any $\bm{\theta}\in\Theta^{\mathrm{gap}}$ and $\lambda>0$.
Define the threshold
\begin{equation}\label{eq:beta-gauss}
  \beta(t,\delta)
  \;:=\;
  \log\frac{1}{\delta}
  +\frac{\lambda}{2}\|\hat{\bm{\theta}}_t\|_2^2
+\frac12\log\det\!\Big(I_n+\frac{\overline\sigma^2}{\lambda}\mathcal{L}_t\Big),
\end{equation}
and the stopping time
\[
  \tau_\delta:=\inf\{t\ge 1:\ Z_{\hat S_t}(t)\ge \beta(t,\delta)\},
  \qquad
  \text{output } \hat S_{\tau_\delta}.
\]
Then
\[
  \Prob_{\bm{\theta}}(\hat S_{\tau_\delta}\neq S^*(\bm{\theta}))\ \le\ \delta.
\]
\end{proposition}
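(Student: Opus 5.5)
The plan is to reduce the error event to a crossing of the mixture martingale of Lemma~\ref{lem:laplace-unif} above the level $1/\delta$, and then invoke Ville's inequality. Throughout, $Z_{\hat S_t}(t):=\min_{(i,j)\in\hat B_t}Z_{ij}(t)$, and $M_t$ is the Gaussian-mixture process of Lemma~\ref{lem:laplace-unif} built around the true parameter $\bm{\theta}$. By Lemma~\ref{lem:mixture-mart}, $M_t$ is a nonnegative martingale with $M_0=1$ under any adaptive sampling rule, since $\pi_\lambda$ is a probability distribution on $\Theta_0$. Ville's inequality then gives
\[
\Prob_{\bm{\theta}}\big(\exists t\ge 1:\ M_t\ge 1/\delta\big)\le\delta.
\]

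The key step is a deterministic inclusion: on the event $\{\tau_\delta<\infty,\ \hat S_{\tau_\delta}\neq S^*(\bm{\theta})\}$, we have $\log M_{\tau_\delta}\ge\log(1/\delta)$. Set $t=\tau_\delta$ and suppose $\hat S_t\neq S^*(\bm{\theta})$. Since both sets have size $k$, there is an item $i\in\hat S_t\setminus S^*(\bm{\theta})$ and an item $j\in S^*(\bm{\theta})\setminus\hat S_t$. Then $(i,j)$ is a boundary pair of $\hat{\bm{\theta}}_t$ with $i$ inside and $j$ outside $\hat S_t$. Under the truth, $j$ is in the top-$k$ and $i$ is not, so $\theta_j>\theta_i$. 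Hence $\bm{\theta}$ lies in the estimated alternative set $\Theta_{ij}(t)=\{\bm{\theta}'\in\Theta:\theta'_i\le\theta'_j\}$, which gives
\[
\sup_{\bm{\theta}'\in\Theta_{ij}(t)}\ell_t(\bm{\theta}')\ge\ell_t(\bm{\theta}),
\qquad\text{so}\qquad
\ell_t(\hat{\bm{\theta}}_t)-\ell_t(\bm{\theta})\ge Z_{ij}(t)\ge Z_{\hat S_t}(t)\ge\beta(t,\delta).
\]
Substituting the definition of $\beta(t,\delta)$ and applying the second inequality of Lemma~\ref{lem:laplace-unif}, the $\frac{\lambda}{2}\|\hat{\bm{\theta}}_t\|_2^2$ and $\frac12\log\det(\cdot)$ terms cancel exactly. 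What remains is $\log M_t\ge\log(1/\delta)$.

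Combining, $\{\hat S_{\tau_\delta}\neq S^*(\bm{\theta}),\ \tau_\delta<\infty\}\subseteq\{\exists t:\ M_t\ge 1/\delta\}$, and Ville's inequality yields the bound $\delta$. If $\tau_\delta=\infty$ there is no output, so this event does not contribute; in any case, finiteness is handled separately in part (ii).

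The main obstacle is making sure Lemma~\ref{lem:laplace-unif} and Lemma~\ref{lem:mixture-mart} are applied to the same process. Lemma~\ref{lem:laplace-unif} is stated with the centering $\ell_t(\bm{\theta})$ at the true parameter, and that is what makes $M_t$ a $\Prob_{\bm{\theta}}$-martingale. Its Laplace lower bound also uses the MLE $\hat{\bm{\theta}}_t$, which must be taken over $\Theta\subset\Theta_0$; this holds by construction.

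The only delicacy is the tie-breaking convention in $\Theta_{ij}(t)$: the orientation must be chosen so that it contains every parameter ranking $j$ at or above $i$. Any $\bm{\theta}$ that reverses a boundary pair of $\hat{\bm{\theta}}_t$ therefore lies in it. Since $\bm{\theta}\in\Theta^{\mathrm{gap}}$ has strict inequalities across its own boundary, nothing more is needed.
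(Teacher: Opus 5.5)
Your proposal is correct and follows essentially the same route as the paper's proof. You take a misordered pair in $\hat B_{\tau_\delta}$ so that $\bm{\theta}\in\Theta_{ij}(\tau_\delta)$, which lower-bounds $\ell_{\tau_\delta}(\hat{\bm{\theta}}_{\tau_\delta})-\ell_{\tau_\delta}(\bm{\theta})$ by $\beta(\tau_\delta,\delta)$. You then cancel the extra threshold terms via Lemma~\ref{lem:laplace-unif} to get $M_{\tau_\delta}\ge 1/\delta$, and conclude with Ville's inequality for the mixture martingale of Lemma~\ref{lem:mixture-mart}.
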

\begin{proof}
On the event $\{\tau_\delta=t,\ \hat S_t\neq S^*(\bm{\theta})\}$ there exist $u\in \hat S_t\setminus S^*(\bm{\theta})$ and
$v\in S^*(\bm{\theta})\setminus \hat S_t$. Then $\theta_v\ge \theta_u$, so $\bm{\theta}\in\Theta_{uv}(t)$.
And $(u,v)\in\hat B_t$, so
\[
Z_{uv}(t)
=\ell_t(\hat{\bm{\theta}}_t)-\sup_{\bm{\theta}'\in\Theta_{uv}(t)}\ell_t(\bm{\theta}')
\le \ell_t(\hat{\bm{\theta}}_t)-\ell_t(\bm{\theta}).
\]
Since $\tau_\delta=t$ implies $Z_{\hat S_t}(t)=\min_{(i,j)\in\hat B_t}Z_{ij}(t)\ge \beta(t,\delta)$, we have
\[
\ell_t(\hat{\bm{\theta}}_t)-\ell_t(\bm{\theta})\ \ge\ Z_{uv}(t)\ \ge\ \beta(t,\delta).
\]
Now let $\pi_\lambda$ be the Gaussian prior on $\Theta_0$ in Lemma~\ref{lem:laplace-unif} and define
\[
  M_t \;:=\;\int_{\Theta_0}\exp\!\big(\ell_t(\bm{u})-\ell_t(\bm{\theta})\big)\,\pi_\lambda(d\bm{u}),
  \qquad M_0:=1.
\]
By Lemma~\ref{lem:mixture-mart} $(M_t)$ is a nonnegative martingale.
Apply Lemma~\ref{lem:laplace-unif} at $t=\tau_\delta$,
\begin{align*}
M_{\tau_\delta}
&\ \ge\
\exp\!\big(\ell_{\tau_\delta}(\hat{\bm{\theta}}_{\tau_\delta})-\ell_{\tau_\delta}(\bm{\theta})\big)\\
&\quad \cdot\exp\!\Big(-\frac{\lambda}{2}\|\hat{\bm{\theta}}_{\tau_\delta}\|_2^2\Big)
\cdot\det\!\Big(I_n+\frac{\overline\sigma^2}{\lambda}\mathcal{L}_{\tau_\delta}\Big)^{-1/2}.
\end{align*}
On the event $\{\ell_{\tau_\delta}(\hat{\bm{\theta}}_{\tau_\delta})-\ell_{\tau_\delta}(\bm{\theta})\ge \beta(\tau_\delta,\delta)\}$,
\begin{align*}
M_{\tau_\delta}
&\ge
\exp\!\big(\beta(\tau_\delta,\delta)\big)
\cdot\exp\!\Big(-\frac{\lambda}{2}\|\hat{\bm{\theta}}_{\tau_\delta}\|_2^2\Big)\\
&\qquad\cdot\det\!\Big(I_n+\frac{\overline\sigma^2}{\lambda}\mathcal{L}_{\tau_\delta}\Big)^{-1/2}\\
&=
\exp\!\Big(\log\tfrac{1}{\delta}
+\tfrac{\lambda}{2}\|\hat{\bm{\theta}}_{\tau_\delta}\|_2^2
+\tfrac12\log\det\!\big(I_n+\frac{\overline\sigma^2}{\lambda}\mathcal{L}_{\tau_\delta}\big)\Big)\\
&\qquad\cdot\exp\!\Big(-\tfrac{\lambda}{2}\|\hat{\bm{\theta}}_{\tau_\delta}\|_2^2\Big)
\cdot\det\!\Big(I_n+\frac{\overline\sigma^2}{\lambda}\mathcal{L}_{\tau_\delta}\Big)^{-1/2}\\
&=\frac{1}{\delta}.
\end{align*}
Hence
\[
  \{\hat S_{\tau_\delta}\neq S^*(\bm{\theta})\}
  \ \subseteq\
  \Big\{\sup_{t\ge0}M_t\ge 1/\delta\Big\}.
\]
Since $(M_t)$ is a nonnegative martingale with $M_0=1$, Ville’s inequality gives
\[
  \Prob_{\bm{\theta}}\!\Big(\sup_{t\ge0}M_t\ge 1/\delta\Big)\le \delta.
\]
\end{proof}

\subsection{Part II: Finite Stopping}
First, we show that the data-dependent threshold from Lemma~\ref{lem:laplace-unif} is bounded by some constant.

\begin{lemma}\label{lem:beta-gauss-envelope}
Fix $\lambda>0$ and $\delta\in(0,1)$, and let $\beta(t,\delta)$ be as in Proposition~\ref{prop:pac}.
Let $d=\dim(\Theta_0)=n-1$ and define the constant
\[
C_\beta\ :=\ \frac{2\overline\sigma^2}{\lambda}.
\]
Then for all $t\ge 1$,
\[
\beta(t,\delta)\ \le\ \bar\beta(t,\delta)
:=
\log\frac{1}{\delta}
+\frac{\lambda}{2}\,nR^2
+\frac{d}{2}\log\!\big(1+C_\beta t\big).
\]
\end{lemma}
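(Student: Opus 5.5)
The plan is to bound the three terms of $\beta(t,\delta)$ separately; the $\log(1/\delta)$ term is left unchanged.

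\emph{Norm term.} Since $\hat{\bm{\theta}}_t\in\Theta$ is the constrained MLE, $\|\hat{\bm{\theta}}_t\|_\infty\le R$. Hence $\|\hat{\bm{\theta}}_t\|_2^2\le nR^2$, and so $\frac{\lambda}{2}\|\hat{\bm{\theta}}_t\|_2^2\le \frac{\lambda}{2}nR^2$.

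\emph{Log-determinant term.} The matrix $\mathcal{L}_t=\sum_{s=1}^t\bm{x}_{A_s}\bm{x}_{A_s}^\top$ is positive semidefinite and satisfies $\mathcal{L}_t\mathbf 1=0$. Its rank is therefore at most $d=n-1$: it has eigenvalue $0$ on $\mathbf 1$ and nonnegative eigenvalues $\nu_1,\ldots,\nu_d$ on $\Theta_0$. This gives
\[
\log\det\!\Big(I_n+\tfrac{\overline\sigma^2}{\lambda}\mathcal{L}_t\Big)=\sum_{i=1}^d\log\!\Big(1+\tfrac{\overline\sigma^2}{\lambda}\nu_i\Big).
\]
By concavity of $\log$ (equivalently, AM--GM), this sum is at most $d\log\!\big(1+\frac{\overline\sigma^2}{\lambda}\cdot\frac{1}{d}\operatorname{tr}\mathcal{L}_t\big)$.

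The trace is computed directly: $\operatorname{tr}(\bm{x}_{ij}\bm{x}_{ij}^\top)=\|\bm{e}_i-\bm{e}_j\|_2^2=2$, so $\operatorname{tr}\mathcal{L}_t=2t$. Since $d\ge1$, we have $\frac{2t}{d}\le 2t$, and therefore
\[
\tfrac12\log\det\!\Big(I_n+\tfrac{\overline\sigma^2}{\lambda}\mathcal{L}_t\Big)\le \tfrac d2\log\!\Big(1+\tfrac{2\overline\sigma^2}{\lambda}t\Big)=\tfrac d2\log(1+C_\beta t).
\]

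Summing the three bounds gives $\beta(t,\delta)\le\bar\beta(t,\delta)$. The only step needing care is the rank/trace argument for the determinant, specifically using the zero eigenvalue on $\mathbf 1$ to get the factor $d$ rather than $n$. Alternatively, one could bound each eigenvalue by $\lambda_{\max}(\mathcal L_t)\le 2t$, but the concavity route is cleaner. The resulting envelope grows logarithmically in $t$, which is what is needed for the finite-stopping argument.
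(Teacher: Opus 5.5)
Your proof is correct and follows the paper's argument: you bound $\|\hat{\bm{\theta}}_t\|_2^2\le nR^2$ from the $\ell_\infty$ constraint, and you use $\mathcal{L}_t\succeq 0$ with $\mathcal{L}_t\mathbf 1=0$ to reduce the determinant to at most $d$ nontrivial factors. The only difference is minor: the paper bounds each of those eigenvalues by $\|\mathcal{L}_t\|_{\mathrm{op}}\le 2t$, whereas you bound them jointly via concavity of $\log$ and $\operatorname{tr}\mathcal{L}_t=2t$, which gives the slightly sharper intermediate bound $\frac{d}{2}\log(1+C_\beta t/d)$ before you relax it.
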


\begin{proof}
First, $\|\hat{\bm{\theta}}_t\|_\infty\le R$ implies $\|\hat{\bm{\theta}}_t\|_2^2 \le nR^2$.
Next, $\mathcal L_t=\sum_{s=1}^t \bm{x}_{A_s}\bm{x}_{A_s}^\top\succeq 0$ and $\mathcal L_t\mathbf 1=0$, so $\mathcal L_t$ has at most $d$ nonzero eigenvalues.
Also $\|\bm{x}_{ij}\|_2^2=2$ implies $\|\bm{x}_{A_s}\bm{x}_{A_s}^\top\|_{\mathrm{op}}=2$, hence $\|\mathcal L_t\|_{\mathrm{op}}\le 2t$.
Therefore,
\[
\det\!\Big(I_n+\frac{\overline\sigma^2}{\lambda}\mathcal L_t\Big)
\le \Big(1+\frac{\overline\sigma^2}{\lambda}\|\mathcal L_t\|_{\mathrm{op}}\Big)^d
\le (1+C_\beta t)^d,
\]
\end{proof}

The threshold $\beta(t,\delta)$ grows at most logarithmically with $t$. Thus, to show finite stopping, it is sufficient to show that  $\frac{1}{t}Z_{\hat S_t}(t)$ converges to some positive number.

For $\bm{\theta}'\in\Theta$, define
\begin{equation*}
\begin{aligned}
\bar L_t(\bm{\theta}')
&:= -\frac{1}{t}\,\ell_t(\bm{\theta}')\\
&= \frac{1}{t}\sum_{s=1}^t\Big(A(\eta_{A_s}(\bm{\theta}'))-\eta_{A_s}(\bm{\theta}')\,T(Y_s)\Big),\\
L_t(\bm{\theta}')
&:= \frac{1}{t}\sum_{s=1}^t \E_{\bm{\theta}}\!\Big[A(\eta_{A_s}(\bm{\theta}'))-\eta_{A_s}(\bm{\theta}')\,T(Y_s)\,\Big|\,\mathcal F_{s-1}\Big],
\end{aligned}
\end{equation*}
and the uniform deviation
\[
\Delta_t := \sup_{\bm{\theta}'\in\Theta}\big|\bar L_t(\bm{\theta}')-L_t(\bm{\theta}')\big|.
\]

\begin{lemma}\label{lem:glr-compare}
Fix $t\ge 1$ and $\bm{\theta}\in\Theta^{\mathrm{gap}}$.
\begin{enumerate}
\item[(a)] For every $\bm{\theta}'\in\Theta$,
\begin{equation}\label{eq:Lt-is-KL}
L_t(\bm{\theta}')\ =\ L_t(\bm{\theta})\ +\ D_{\hat{\bm{w}}_t^{\mathrm{emp}}}(\bm{\theta}\|\bm{\theta}').
\end{equation}
\item[(b)] Suppose that at time $t$ the estimated boundary constraints are correct, i.e.
$\hat B_t=B(\bm{\theta})$ and $\Theta_{ij}(t)=\Theta_{ij}$ for all $(i,j)\in B(\bm{\theta})$.
Then for every $(i,j)\in B(\bm{\theta})$,
\begin{equation}\label{eq:Zij-vs-gamma}
\left|\frac{1}{t}Z_{ij}(t)\ -\ \gamma_{ij}(\hat{\bm{w}}_t^{\mathrm{emp}};\bm{\theta})\right|
\ \le\ 2\Delta_t,
\end{equation}
\end{enumerate}
\end{lemma}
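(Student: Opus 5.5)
For part (a), I will compute the conditional expectation term by term. Given $\mathcal F_{s-1}$, the pair $A_s$ is fixed and $\E_{\bm{\theta}}[T(Y_s)\mid\mathcal F_{s-1}]=A'(\eta_{A_s}(\bm{\theta}))$. Hence each summand of $L_t(\bm{\theta}')$ equals $A(\eta_{A_s}(\bm{\theta}'))-\eta_{A_s}(\bm{\theta}')A'(\eta_{A_s}(\bm{\theta}))$. Subtracting the same expression at $\bm{\theta}$, each difference becomes
\[
A(\eta')-A(\eta)-(\eta'-\eta)A'(\eta),
\]
with $\eta=\eta_{A_s}(\bm{\theta})$ and $\eta'=\eta_{A_s}(\bm{\theta}')$. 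This is exactly the exponential-family Bregman form of $\KL(P_\eta\|P_{\eta'})=d_{A_s}(\bm{\theta},\bm{\theta}')$. Grouping the $t$ summands by pair gives $\frac1t\sum_{(a,b)}N_{ab}(t)\,d_{ab}(\bm{\theta},\bm{\theta}')=D_{\hat{\bm{w}}^{\mathrm{emp}}_t}(\bm{\theta}\|\bm{\theta}')$, which is \eqref{eq:Lt-is-KL}.

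For part (b), I will rewrite the GLR statistic in normalized form. Since $\hat{\bm{\theta}}_t$ maximizes $\ell_t$ over $\Theta$,
\[
\frac1t Z_{ij}(t)=\inf_{\bm{\theta}'\in\Theta_{ij}}\bar L_t(\bm{\theta}')-\inf_{\bm{\theta}'\in\Theta}\bar L_t(\bm{\theta}'),
\]
using the hypothesis $\Theta_{ij}(t)=\Theta_{ij}$. Infima over a fixed set are $1$-Lipschitz in sup-norm, so replacing $\bar L_t$ by $L_t$ in each infimum costs at most $\Delta_t$. That gives
\[
\Bigl|\tfrac1t Z_{ij}(t)-\bigl(\inf_{\Theta_{ij}}L_t-\inf_{\Theta}L_t\bigr)\Bigr|\le 2\Delta_t .
\]
By (a), $L_t=L_t(\bm{\theta})+D_{\hat{\bm{w}}^{\mathrm{emp}}_t}(\bm{\theta}\|\cdot)$. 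The infimum over $\Theta$ is $L_t(\bm{\theta})$, attained at $\bm{\theta}\in\Theta$ because KL is nonnegative. The infimum over $\Theta_{ij}$ is $L_t(\bm{\theta})+\gamma_{ij}(\hat{\bm{w}}^{\mathrm{emp}}_t;\bm{\theta})$. Their difference is therefore exactly $\gamma_{ij}(\hat{\bm{w}}^{\mathrm{emp}}_t;\bm{\theta})$, which yields \eqref{eq:Zij-vs-gamma}.

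The main care point is in (b): I must check that the sets used in $Z_{ij}(t)$ (namely $\Theta$ and $\Theta_{ij}(t)$) coincide with those in the definition of $\gamma_{ij}$. This is exactly what the correct-boundary hypothesis provides. The inequality itself needs no attainment of the infima, since it follows from the Lipschitz property. Finiteness of $\Delta_t$ for a fixed $t$ is immediate, because $\Theta$ is bounded and $A$ is continuous. Its almost-sure convergence to zero is deferred to the later lemmas and is not needed here.
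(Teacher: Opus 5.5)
Your proposal is correct and follows the paper's proof essentially step for step. Part (a) uses $\E_{\bm{\theta}}[T(Y_s)\mid\mathcal F_{s-1}]=A'(\eta_{A_s}(\bm{\theta}))$ and the Bregman form of the exponential-family KL, then groups by pairs. Part (b) writes both $\frac1t Z_{ij}(t)$ and $\gamma_{ij}(\hat{\bm{w}}_t^{\mathrm{emp}};\bm{\theta})$ as $\inf_{\Theta_{ij}}(\cdot)-\inf_{\Theta}(\cdot)$ of $\bar L_t$ and $L_t$ respectively, and applies $|\inf f-\inf g|\le\sup|f-g|$ twice; the paper identifies $\gamma_{ij}$ first and then bounds, but the argument is the same.
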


\begin{proof}
(a)
\[
\E_{\bm{\theta}}\!\big[T(Y_s)\mid \mathcal F_{s-1}\big]\ =\ A'(\eta_{A_s}(\bm{\theta})).
\]
Therefore,
\begin{align*}
L_t(\bm{\theta}')-L_t(\bm{\theta})
&=\frac{1}{t}\sum_{s=1}^t
\Big(
A(\eta_{A_s}(\bm{\theta}'))-A(\eta_{A_s}(\bm{\theta}))\\
&\qquad -\big(\eta_{A_s}(\bm{\theta}')-\eta_{A_s}(\bm{\theta})\big)\,A'(\eta_{A_s}(\bm{\theta}))
\Big)\\
&=\frac{1}{t}\sum_{s=1}^t d_{A_s}(\bm{\theta},\bm{\theta}').
\end{align*}
Since $\hat w^{\mathrm{emp}}_{t,ij}=N_{ij}(t)/t$,
\[
D_{\hat{\bm{w}}_t^{\mathrm{emp}}}(\bm{\theta}\|\bm{\theta}')=\sum_{(i,j)}\hat w^{\mathrm{emp}}_{t,ij}\,d_{ij}(\bm{\theta},\bm{\theta}')
=\frac{1}{t}\sum_{s=1}^t d_{A_s}(\bm{\theta},\bm{\theta}'),
\]
which proves \eqref{eq:Lt-is-KL}.

\paragraph{Part (b).}
By assumption, $\hat B_t=B(\bm{\theta})$ and for each $(i,j)\in B(\bm{\theta})$,
the constrained maximization in $Z_{ij}(t)$ is over $\Theta_{ij}$.
Let
\[
\hat{\bm{\theta}}_{ij,t} \in \argmax_{\bm{\theta}'\in\Theta_{ij}}\ell_t(\bm{\theta}')
\Leftrightarrow
\hat{\bm{\theta}}_{ij,t} \in \argmin_{\bm{\theta}'\in\Theta_{ij}}\bar L_t(\bm{\theta}'),
\]
and recall $\hat{\bm{\theta}}_t\in\argmax_{\Theta}\ell_t(\bm{\theta}')\Leftrightarrow \hat{\bm{\theta}}_t\in\argmin_{\Theta}\bar L_t(\bm{\theta}')$.
Then
\[
\frac{1}{t}Z_{ij}(t)
=\bar L_t(\hat{\bm{\theta}}_{ij,t})-\bar L_t(\hat{\bm{\theta}}_t)
=\inf_{\bm{\theta}'\in\Theta_{ij}}\bar L_t(\bm{\theta}')-\inf_{\bm{\theta}'\in\Theta}\bar L_t(\bm{\theta}').
\]
On the other hand, by \eqref{eq:Lt-is-KL} and nonnegativity of KL divergence, $\inf_{\bm{\theta}'\in\Theta}L_t(\bm{\theta}')=L_t(\bm{\theta})$, so
\begin{align*}
\inf_{\bm{\theta}'\in\Theta_{ij}}L_t(\bm{\theta}')
&=L_t(\bm{\theta})+\inf_{\bm{\theta}'\in\Theta_{ij}}D_{\hat{\bm{w}}_t^{\mathrm{emp}}}(\bm{\theta}\|\bm{\theta}')\\
&=L_t(\bm{\theta})+\gamma_{ij}(\hat{\bm{w}}_t^{\mathrm{emp}};\bm{\theta}).
\end{align*}
Thus
\[
\gamma_{ij}(\hat{\bm{w}}_t^{\mathrm{emp}};\bm{\theta})
=\inf_{\Theta_{ij}}L_t-\inf_{\Theta}L_t.
\]
Using $|\inf f-\inf g|\le \sup |f-g|$,
\begin{align*}
\left|\frac{1}{t}Z_{ij}(t)-\gamma_{ij}(\hat{\bm{w}}_t^{\mathrm{emp}};\bm{\theta})\right|
&=\Big|
\big(\inf_{\Theta_{ij}}\bar L_t-\inf_{\Theta}\bar L_t\big)\\
&\qquad -\big(\inf_{\Theta_{ij}}L_t-\inf_{\Theta}L_t\big)
\Big|\\
&\le
\left|\inf_{\Theta_{ij}}\bar L_t-\inf_{\Theta_{ij}}L_t\right|\\
&\qquad +
\left|\inf_{\Theta}\bar L_t-\inf_{\Theta}L_t\right|
\le 2\Delta_t,
\end{align*}
which proves \eqref{eq:Zij-vs-gamma}.
\end{proof}
Next, we will show that $\Delta_t \to 0 $ almost surely.
Define
\[
\bm{V}_t\ :=\ \sum_{s=1}^t \bm{x}_{A_s}\Big(T(Y_s)-A'(\eta_{A_s}(\bm{\theta}))\Big)\ \in\ \R^n.
\]
So for every $\bm{\theta}'\in\Theta$,
\begin{equation}\label{eq:barL-minus-L}
\bar L_t(\bm{\theta}')-L_t(\bm{\theta}')\ =\ -\frac{1}{t}\,\bm{\theta}'^\top \bm{V}_t.
\end{equation}

\begin{lemma}\label{lem:Delta-hp}
For every $t\ge 1$ and every $\delta\in(0,1)$,
\begin{equation}\label{eq:Delta-hp-bound}
\Prob_{\bm{\theta}}\!\left(\Delta_t\ \ge\ R\,\sigma\,n\,\sqrt{\frac{2\log(2n/\delta)}{t}}\right)\ \le\ \delta.
\end{equation}
\end{lemma}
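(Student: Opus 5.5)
By \eqref{eq:barL-minus-L}, $\Delta_t=\frac1t\sup_{\bm{\theta}'\in\Theta}|\bm{\theta}'^\top\bm{V}_t|$. The plan is to bound this supremum deterministically by $\|\bm{V}_t\|_\infty$, and then control each coordinate of $\bm{V}_t$ with a sub-Gaussian martingale tail bound and a union bound over the $n$ coordinates.

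\emph{Deterministic reduction.} For $\bm{\theta}'\in\Theta$ we have $\|\bm{\theta}'\|_\infty\le R$, so Hölder gives
\[
|\bm{\theta}'^\top\bm{V}_t|\le \|\bm{\theta}'\|_1\|\bm{V}_t\|_\infty\le nR\,\|\bm{V}_t\|_\infty .
\]
Hence $\Delta_t\le \frac{nR}{t}\|\bm{V}_t\|_\infty$. It therefore suffices to show
\[
\Prob_{\bm{\theta}}\bigl(\|\bm{V}_t\|_\infty\ge \sigma\sqrt{2t\log(2n/\delta)}\bigr)\le\delta .
\]

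\emph{Coordinatewise concentration.} Fix a coordinate $a\in[n]$. We have $V_{t,a}=\sum_{s=1}^t c_s\,\xi_s$, where $c_s:=(\bm{x}_{A_s})_a\in\{-1,0,1\}$ is $\mathcal F_{s-1}$-measurable and $\xi_s:=T(Y_s)-A'(\eta_{A_s}(\bm{\theta}))$. Since $\eta_{A_s}(\bm{\theta})\in[-2R,2R]$, Assumption~\ref{ass:subg} gives
\[
\E[\exp(\lambda c_s\xi_s)\mid\mathcal F_{s-1}]\le \exp(\sigma^2\lambda^2c_s^2/2)\le \exp(\sigma^2\lambda^2/2).
\]
So $\exp\bigl(\lambda V_{s,a}-s\sigma^2\lambda^2/2\bigr)$ is a nonnegative supermartingale, exactly as in the proof of Lemma~\ref{lem:mle-hp-simple}. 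Markov's inequality with the optimized choice $\lambda=x/(t\sigma^2)$ then yields
\[
\Prob\bigl(|V_{t,a}|\ge x\bigr)\le 2\exp\bigl(-x^2/(2t\sigma^2)\bigr).
\]
Taking $x=\sigma\sqrt{2t\log(2n/\delta)}$ and a union bound over the $n$ coordinates gives failure probability at most $\delta$. On the complementary event,
\[
\Delta_t\le nR\sigma\sqrt{2\log(2n/\delta)/t},
\]
which is \eqref{eq:Delta-hp-bound}.

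\emph{Main obstacle.} Only one step needs care: the adaptivity of $A_s$. Because $c_s$ is predictable and $|c_s|\le1$, we use the crude variance proxy $t\sigma^2$ rather than $\sigma^2N_a(t)$. This avoids any peeling over random counts, so no difficulty actually arises.
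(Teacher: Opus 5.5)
Your proposal is correct and follows essentially the same route as the paper. The paper also bounds $\Delta_t\le \frac{Rn}{t}\max_i|(\bm{V}_t)_i|$, though via $R\|\bm{V}_t\|_1$ rather than your Hölder pairing. It then uses the same supermartingale with the crude variance proxy $\sigma^2 t$, the same optimized Markov bound, and the same union bound over the $n$ coordinates.
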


\begin{proof}
By \eqref{eq:barL-minus-L},
\begin{align*}
\Delta_t&=\sup_{\bm{\theta}'\in\Theta}\frac{|\bm{\theta}'^\top \bm{V}_t|}{t}\\
&\le \sup_{\|\bm{\theta}'\|_\infty\le R}\frac{|\bm{\theta}'^\top \bm{V}_t|}{t}
=\frac{R}{t}\|\bm{V}_t\|_1
\le \frac{Rn}{t}\max_{1\le i\le n}|(\bm{V}_t)_i|.
\end{align*}
Fix $i\in[n]$ and write $(\bm{V}_t)_i=\sum_{s=1}^t \xi_{s,i}$ where
\[
\xi_{s,i}:=(\bm{x}_{A_s})_i\Big(T(Y_s)-A'(\eta_{A_s}(\bm{\theta}))\Big).
\]
Conditionally on $\mathcal F_{s-1}$, $(\bm{x}_{A_s})_i\in\{-1,0,1\}$ is fixed, and
$T(Y_s)-A'(\eta_{A_s}(\bm{\theta}))$ is centered and $\sigma^2$-sub-Gaussian by Assumption~\ref{ass:subg}.
So for all $\lambda\in\R$,
\[
\E_{\bm{\theta}}\!\left[\exp(\lambda\xi_{s,i})\mid\mathcal F_{s-1}\right]
\le \exp\!\left(\frac{\sigma^2\lambda^2}{2}\right).
\]
Define
\[
M_{s,i}(\lambda):=\exp\!\left(\lambda (\bm{V}_s)_i-\frac{\sigma^2\lambda^2}{2}s\right).
\]
Then $(M_{s,i}(\lambda))_{s\ge 0}$ is a nonnegative supermartingale, so $\E_{\bm{\theta}}[M_{t,i}(\lambda)]\le 1$.
By Markov’s inequality, for any $a>0$,
\[
\Prob_{\bm{\theta}}\!\left((\bm{V}_t)_i\ge a\right)
\le \exp\!\left(-\lambda a+\frac{\sigma^2\lambda^2}{2}t\right).
\]
Take $\lambda=a/(\sigma^2 t)$ to get
\begin{align*}
\Prob_{\bm{\theta}}\!\left((\bm{V}_t)_i\ge a\right)&\le \exp\!\left(-\frac{a^2}{2\sigma^2 t}\right),\\
\Prob_{\bm{\theta}}\!\left(|(\bm{V}_t)_i|\ge a\right)&\le 2\exp\!\left(-\frac{a^2}{2\sigma^2 t}\right).
\end{align*}
With $a=\sigma\sqrt{2t\log(2n/\delta)}$,
\[
\Prob_{\bm{\theta}}\!\left(|(\bm{V}_t)_i|\ge \sigma\sqrt{2t\log(2n/\delta)}\right)\le \frac{\delta}{n}.
\]
A union bound over $i=1,\ldots,n$ gives, with probability at least $1-\delta$,
\[
\max_{1\le i\le n}|(\bm{V}_t)_i|\le \sigma\sqrt{2t\log(2n/\delta)}.
\]
Substituting into $\Delta_t\le \frac{Rn}{t}\max_i |(\bm{V}_t)_i|$ yields \eqref{eq:Delta-hp-bound}.
\end{proof}

\begin{corollary}\label{cor:Delta-as}
Under Assumption~\ref{ass:subg}$,$ we have $\Delta_t\to 0$ almost surely.
\end{corollary}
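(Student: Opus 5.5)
We deduce Corollary~\ref{cor:Delta-as} from Lemma~\ref{lem:Delta-hp} with a Borel--Cantelli argument, in the same way Corollary~\ref{cor:mle-stabilize} was obtained from Lemma~\ref{lem:mle-hp-simple}.

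Apply Lemma~\ref{lem:Delta-hp} at each $t\ge 1$ with confidence level $\delta_t := t^{-2}$, and define the failure event
\[
E_t:=\Big\{\Delta_t\ \ge\ R\,\sigma\,n\,\sqrt{\tfrac{2\log(2n t^{2})}{t}}\Big\}.
\]
Lemma~\ref{lem:Delta-hp} gives $\Prob_{\bm{\theta}}(E_t)\le t^{-2}$, and $\sum_{t\ge1}t^{-2}<\infty$. By the first Borel--Cantelli lemma, almost surely only finitely many $E_t$ occur. Note that independence across $t$ is not needed: Lemma~\ref{lem:Delta-hp} holds for each fixed $t$ under any adaptive sampling rule, and the first Borel--Cantelli lemma uses only summability.

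Consequently there is almost surely a finite random time $t_0$ such that for all $t\ge t_0$,
\[
0\le \Delta_t< R\sigma n\sqrt{\tfrac{2\log(2n)+4\log t}{t}}.
\]
The right-hand side is deterministic and is $O(\sqrt{\log t/t})$, so it tends to $0$. Hence $\Delta_t\to0$ almost surely.

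There is no real obstacle here. The only point to check is that the bound in Lemma~\ref{lem:Delta-hp} degrades only logarithmically in $1/\delta$, which is what allows a summable choice such as $\delta_t=t^{-2}$ while the deviation bound still vanishes.
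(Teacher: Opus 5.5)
Your proposal is correct and is essentially the paper's own proof: the paper also applies Lemma~\ref{lem:Delta-hp} with $\delta_t=t^{-2}$ and concludes by Borel--Cantelli. You additionally make explicit that the resulting deterministic bound is $O(\sqrt{\log t/t})$, and that first Borel--Cantelli needs only summability, not independence across $t$.
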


\begin{proof}
Apply Lemma~\ref{lem:Delta-hp} with $\delta_t=t^{-2}$ and use Borel--Cantelli.
\end{proof}

\subsubsection{Finite stopping}\label{app:aopt-finite}

\begin{lemma}\label{lem:Gamma-emp-as}
Fix $\bm{\theta}\in\Theta^{\mathrm{gap}}$. Under the conditions of Proposition~\ref{prop:Gamma-hp-new}$,$
\[
\Gamma(\hat{\bm{w}}_t^{\mathrm{emp}};\bm{\theta})\ \longrightarrow\ \Gamma^*(\bm{\theta})
\qquad \text{almost surely.}
\]
\end{lemma}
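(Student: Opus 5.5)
The plan is to derive the almost-sure convergence from the high-probability bound of Proposition~\ref{prop:Gamma-hp-new} with a Borel--Cantelli argument, paired with the trivial upper bound $\Gamma(\hat{\bm{w}}_t^{\mathrm{emp}};\bm{\theta})\le\Gamma^*(\bm{\theta})$.

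For the upper bound, $\hat{\bm{w}}_t^{\mathrm{emp}}=(N_{ij}(t)/t)_{(i,j)}$ lies in $\Delta_{\mathcal P}$. Since $\Gamma^*(\bm{\theta})=\sup_{\bm{w}\in\Delta_{\mathcal P}}\Gamma(\bm{w};\bm{\theta})$, we get $\Gamma(\hat{\bm{w}}_t^{\mathrm{emp}};\bm{\theta})\le\Gamma^*(\bm{\theta})$ for every $t$ and every sample path. So only the lower bound needs work.

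For the lower bound, fix any $p>1$, say $p=2$. Let $\varepsilon_t$ be the total deficit appearing in Proposition~\ref{prop:Gamma-hp-new}:
\[
\varepsilon_t:=C_p\big(t^{-(1-\alpha)}+t^{-\alpha}+\sqrt{\log t/t}\big)+C_p t^{-(1-\gamma)/2}\sqrt{\log t}+D_{\max}|\mathcal P|\Big(\tfrac{|\mathcal P|-1}{t}+\tfrac{t^{-\gamma}}{1-\gamma}\Big)+2D_{\max}t^{-3/4}.
\]
Because $\alpha,\gamma\in(0,1)$, every term tends to $0$, so $\varepsilon_t\to0$ deterministically. Define the events $E_t:=\{\Gamma(\hat{\bm{w}}_t^{\mathrm{emp}};\bm{\theta})\le\Gamma^*(\bm{\theta})-\varepsilon_t\}$. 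The proposition gives $\Prob_{\bm{\theta}}(E_t)\le t^{-p}$ for $t\ge t_p$. Since $p>1$, the sum $\sum_{t\ge t_p}t^{-p}$ is finite, and Borel--Cantelli shows that almost surely only finitely many $E_t$ occur. Hence almost surely there is a random finite $t_0$ such that $\Gamma(\hat{\bm{w}}_t^{\mathrm{emp}};\bm{\theta})>\Gamma^*(\bm{\theta})-\varepsilon_t$ for all $t\ge t_0$. Together with the upper bound, this gives $|\Gamma(\hat{\bm{w}}_t^{\mathrm{emp}};\bm{\theta})-\Gamma^*(\bm{\theta})|\le\varepsilon_t\to0$ almost surely.

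The argument has essentially no obstacle, because all the quantitative work sits inside Proposition~\ref{prop:Gamma-hp-new}. The only point to check is that the constants $C_p$, $D_{\max}$ and the time $t_p$ are deterministic, depending only on $\bm{\theta}$, $n$, $k$, $\alpha$, $\gamma$, $p$ and the model constants. That makes $\varepsilon_t$ a deterministic null sequence and lets Borel--Cantelli apply directly. No separate appeal to Corollary~\ref{cor:mle-stabilize} is needed, since boundary stabilization is already handled inside the event $\mathcal M_t$ in the proof of the proposition.
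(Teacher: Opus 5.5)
Your proposal is correct and matches the paper's proof essentially step for step. It uses the trivial bound $\Gamma(\hat{\bm{w}}_t^{\mathrm{emp}};\bm{\theta})\le\Gamma^*(\bm{\theta})$, which holds because $\hat{\bm{w}}_t^{\mathrm{emp}}\in\Delta_{\mathcal P}$, together with Borel--Cantelli applied to the summable $t^{-p}$ failure probabilities of Proposition~\ref{prop:Gamma-hp-new} with its deterministic vanishing deficit $\varepsilon_t$; like the paper, it needs no separate appeal to Corollary~\ref{cor:mle-stabilize}.
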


\begin{proof}
For each $t\ge 1$, we have $\Gamma(\hat{\bm{w}}_t^{\mathrm{emp}};\bm{\theta})\le \Gamma^*(\bm{\theta})$ by definition.
Let $\varepsilon_t$ denote the error term subtracted from $\Gamma^*(\bm{\theta})$ in Proposition~\ref{prop:Gamma-hp-new}, so $\varepsilon_t\to 0$ as $t\to\infty$.
Choose any $p>1$ and take $t$ large enough that Proposition~\ref{prop:Gamma-hp-new} applies. Then
\[
\Prob_{\bm{\theta}}\!\Big(\Gamma(\hat{\bm{w}}_t^{\mathrm{emp}};\bm{\theta})\ \le\ \Gamma^*(\bm{\theta})-\varepsilon_t\Big)\ \le\ t^{-p}.
\]
Since $\sum_{t\ge 1} t^{-p}<\infty$, Borel--Cantelli implies that almost surely,
\[
\Gamma(\hat{\bm{w}}_t^{\mathrm{emp}};\bm{\theta})\ \ge\ \Gamma^*(\bm{\theta})-\varepsilon_t
\quad\text{for all sufficiently large }t.
\]

Taking $\liminf$ and using $\varepsilon_t\to 0$ yields $\liminf_{t\to\infty}\Gamma(\hat{\bm{w}}_t^{\mathrm{emp}};\bm{\theta})\ge \Gamma^*(\bm{\theta})$.
Combining with $\Gamma(\hat{\bm{w}}_t^{\mathrm{emp}};\bm{\theta})\le \Gamma^*(\bm{\theta})$ gives almost sure convergence.
\end{proof}

\begin{proposition}\label{prop:tau-finite}
Fix $\bm{\theta}\in\Theta^{\mathrm{gap}}$ and $\delta\in(0,1)$.
Let $\tau_\delta$ be the stopping time defined in \eqref{eq:tau-delta}. Then
\[
\Prob_{\bm{\theta}}(\tau_\delta<\infty)=1.
\]
\end{proposition}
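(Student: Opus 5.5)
The argument works on a single probability-one event on which all the needed almost-sure limits hold simultaneously. On that event we show that $\frac1t\min_{(i,j)\in\hat B_t}Z_{ij}(t)$ converges to $\Gamma^*(\bm{\theta})>0$. Meanwhile $\beta(t,\delta)/t\to 0$, so the stopping condition \eqref{eq:tau-delta} must eventually hold.

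\textbf{Step 1: the good event.} Let $\Omega_0$ be the intersection of three probability-one events:
\begin{itemize}[noitemsep,topsep=0pt]
\item[(a)] the event of Corollary~\ref{cor:mle-stabilize}, which gives a finite $t_{\mathrm{stab}}$ with $\hat B_t=B(\bm{\theta})$ for all $t\ge t_{\mathrm{stab}}$;
\item[(b)] the event of Corollary~\ref{cor:Delta-as}, on which $\Delta_t\to0$;
\item[(c)] the event of Lemma~\ref{lem:Gamma-emp-as}, on which $\Gamma(\hat{\bm{w}}_t^{\mathrm{emp}};\bm{\theta})\to\Gamma^*(\bm{\theta})$.
\end{itemize}
Since $\hat S_t=S^*(\bm{\theta})$ for $t\ge t_{\mathrm{stab}}$, the constraint sets $\Theta_{ij}(t)$ built from $\hat{\bm{\theta}}_t$ coincide with $\Theta_{ij}$. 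The orientation of each boundary pair then agrees with the true one, so the hypotheses of Lemma~\ref{lem:glr-compare}(b) hold for all such $t$.

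\textbf{Step 2: linear growth of the GLR.} On $\Omega_0$ and for $t\ge t_{\mathrm{stab}}$, Lemma~\ref{lem:glr-compare}(b) gives, for every $(i,j)\in B(\bm{\theta})$,
\[
\tfrac1t Z_{ij}(t)\ge \gamma_{ij}(\hat{\bm{w}}_t^{\mathrm{emp}};\bm{\theta})-2\Delta_t .
\]
Taking the minimum over the (now correct) boundary set yields
\[
\tfrac1t\min_{(i,j)\in\hat B_t}Z_{ij}(t)\ge \Gamma(\hat{\bm{w}}_t^{\mathrm{emp}};\bm{\theta})-2\Delta_t\to\Gamma^*(\bm{\theta}).
\]
It remains to check $\Gamma^*(\bm{\theta})>0$. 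Take the uniform design $\bm{u}$. Its support graph is complete, so by Lemma~\ref{lem:inner-geometry} each projection $\bm{\theta}^*_{ij}$ exists and lies in $\Theta_{ij}$. Since $\bm{\theta}\notin\Theta_{ij}$ (the top-$k$ gap is positive), $\bm{\theta}^*_{ij}\neq\bm{\theta}$. Both vectors lie in $\Theta_0$ and differ, so some pair difference $\eta_{ab}$ differs between them. Strict convexity of $A$ then makes the corresponding $d_{ab}$ strictly positive, giving $\gamma_{ij}(\bm{u};\bm{\theta})>0$. Minimizing over the finitely many boundary pairs gives $\Gamma^*(\bm{\theta})\ge\Gamma(\bm{u};\bm{\theta})>0$.

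\textbf{Step 3: sublinear threshold.} Lemma~\ref{lem:beta-gauss-envelope} bounds $\beta(t,\delta)\le\bar\beta(t,\delta)=O(\log t)$ deterministically, so $\beta(t,\delta)/t\to0$. Hence on $\Omega_0$ there is a finite $t$ with
\[
\tfrac1t\min_{(i,j)\in\hat B_t} Z_{ij}(t)>\tfrac12\Gamma^*(\bm{\theta})>\beta(t,\delta)/t .
\]
At that $t$ the stopping condition holds, so $\tau_\delta<\infty$ on $\Omega_0$, which has probability one.

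The step requiring the most care is Step 1: verifying that, after stabilization, the data-dependent alternative sets $\Theta_{ij}(t)$ coincide exactly with the true $\Theta_{ij}$. This is needed so that Lemma~\ref{lem:glr-compare}(b) applies, and it requires matching the "w.l.o.g." orientation convention used in the definition of $Z_{ij}(t)$. The remaining steps are direct assembly of the earlier results.
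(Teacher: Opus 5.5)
Your proof is correct and follows the paper's argument essentially step for step. Like the paper, you use boundary stabilization via Corollary~\ref{cor:mle-stabilize}, the comparison of Lemma~\ref{lem:glr-compare}(b) together with $\Delta_t\to0$ and $\Gamma(\hat{\bm{w}}_t^{\mathrm{emp}};\bm{\theta})\to\Gamma^*(\bm{\theta})$, positivity of $\Gamma^*(\bm{\theta})$, and the logarithmic threshold envelope of Lemma~\ref{lem:beta-gauss-envelope}. The only minor difference is how you show $\Gamma^*(\bm{\theta})>0$: the paper argues $\inf_{\bm{\theta}'\in\Theta_{uv}}d_{uv}(\bm{\theta},\bm{\theta}')>0$ by continuity and compactness, while you evaluate at the uniform design $\bm{u}$ and use strict convexity of $A$, which makes explicit the design that witnesses positivity (a step the paper leaves implicit).
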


\begin{proof}
By Corollary~\ref{cor:mle-stabilize}, almost surely there exists a finite time $t_{\mathrm{stab}}$ such that
$\hat S_t=S^*(\bm{\theta})$ and $\hat B_t=B(\bm{\theta})$ for all $t\ge t_{\mathrm{stab}}$; in particular, for all $t\ge t_{\mathrm{stab}}$
the correctness condition of Lemma~\ref{lem:glr-compare}\emph{(b)} holds.

On this almost sure event, for all $t\ge t_{\mathrm{stab}}$, Lemma~\ref{lem:glr-compare}\emph{(b)} gives
\[
\left|\frac{1}{t}Z_{\hat S_t}(t)-\Gamma(\hat{\bm{w}}_t^{\mathrm{emp}};\bm{\theta})\right|\ \le\ 2\Delta_t.
\]
By Corollary~\ref{cor:Delta-as}, $\Delta_t\to 0$ almost surely, and by Lemma~\ref{lem:Gamma-emp-as},
$\Gamma(\hat{\bm{w}}_t^{\mathrm{emp}};\bm{\theta})\to \Gamma^*(\bm{\theta})$ almost surely. Thus,
\begin{equation}\label{eq:Zrate-as}
\frac{1}{t}Z_{\hat S_t}(t)\ \longrightarrow\ \Gamma^*(\bm{\theta})
\qquad\text{almost surely.}
\end{equation}

We now compare $Z_{\hat S_t}(t)$ to the threshold. By Lemma~\ref{lem:beta-gauss-envelope},
for each fixed $\delta\in(0,1)$ we have $\beta(t,\delta)\le \bar\beta(t,\delta)$ for all $t$ and
$\bar\beta(t,\delta)/t\to 0$ as $t\to\infty$.

Finally, we show that $\Gamma^*(\bm{\theta})>0$.
For each boundary pair $(u,v)\in B(\bm{\theta})$ we have $\eta_{uv}(\bm{\theta})=\theta_u-\theta_v>0$,
and any $\bm{\theta}'\in\Theta_{uv}$ has $\eta_{uv}(\bm{\theta}')\le 0$. By continuity of
$d_{uv}(\bm{\theta},\bm{\theta}')$ and compactness of $\Theta_{uv}$, $\inf_{\bm{\theta}'\in\Theta_{uv}}d_{uv}(\bm{\theta},\bm{\theta}')>0$. Thus, $\Gamma^*(\bm{\theta})>0$.

So almost surely, for all sufficiently large $t$,
$Z_{\hat S_t}(t)\ge \beta(t,\delta)$. Thus, $\tau_\delta<\infty$ almost surely.
\end{proof}
\section{Proof of Theorem~\ref{thm:asymp-opt-exp}}\label{app:aopt-exp}

\begin{proof}
Fix $\varepsilon\in(0,1)$ and define
\[
t_\delta:=\left\lceil \frac{1+\varepsilon}{\Gamma^*(\bm{\theta})}\log\frac{1}{\delta}\right\rceil,
\qquad
\varepsilon':=\frac{\varepsilon}{2(1+\varepsilon)}\,\Gamma^*(\bm{\theta})>0.
\]
By Lemma~\ref{lem:beta-gauss-envelope}, $\beta(t,\delta)\le \bar\beta(t,\delta)$ for all $t$ and
$t\mapsto \bar\beta(t,\delta)/t$ is decreasing.
So for all $t\ge t_\delta$,
\[
\frac{\beta(t,\delta)}{t}\ \le\ \frac{\bar\beta(t,\delta)}{t}\ \le\ \frac{\bar\beta(t_\delta,\delta)}{t_\delta},
\]
where
\begin{align*}
\frac{\bar\beta(t_\delta,\delta)}{t_\delta}
&=
\frac{\log(1/\delta)}{t_\delta}
+\frac{1}{t_\delta}\left(\frac{\lambda}{2}nR^2\right)\\
&\qquad +\frac{(n-1)\log(1+C_\beta t_\delta)}{2t_\delta}.
\end{align*}
By definition of $t_\delta$,
\[
\frac{\log(1/\delta)}{t_\delta}\ \le\ \frac{\Gamma^*(\bm{\theta})}{1+\varepsilon}.
\]
Since $t_\delta\to\infty$ as $\delta\to 0$, there exists $\delta_0\in(0,1)$ such that for all $\delta\in(0,\delta_0)$,
\[
\frac{1}{t_\delta}\left(\frac{\lambda}{2}nR^2\right)
+\frac{n-1}{2}\cdot\frac{\log(1+C_\beta t_\delta)}{t_\delta}
\ \le\ \varepsilon'.
\]
Therefore, for all $\delta\in(0,\delta_0)$ and all $t\ge t_\delta$,
\begin{equation}\label{eq:beta-slack-gauss-exp}
\frac{\beta(t,\delta)}{t}
\ \le\
\frac{\Gamma^*(\bm{\theta})}{1+\varepsilon}+\varepsilon'
\ =\ \Gamma^*(\bm{\theta})-\varepsilon'.
\end{equation}

Next, fix $p>1$ and set $\delta_t:=t^{-(p+2)}$.
Take $\delta\in(0,\delta_0)$ and $t\ge t_\delta$.
Since $\tau_\delta>t$ implies the stopping condition fails at time $t$,
\begin{align*}
\Prob_{\bm{\theta}}(\tau_\delta>t)
&\le
\Prob_{\bm{\theta}}\!\left(Z_{\hat S_t}(t)<\beta(t,\delta)\right)\\
&=
\Prob_{\bm{\theta}}\!\left(\frac{1}{t}Z_{\hat S_t}(t)\le \frac{\beta(t,\delta)}{t}\right).
\end{align*}
By \eqref{eq:beta-slack-gauss-exp}, for all $t\ge t_\delta$ we have $\beta(t,\delta)/t\le \Gamma^*(\bm{\theta})-\varepsilon'$, hence
\begin{equation}\label{eq:tau-tail-reduce}
\Prob_{\bm{\theta}}(\tau_\delta>t)
\le
\Prob_{\bm{\theta}}\!\left(\frac{1}{t}Z_{\hat S_t}(t)\le \Gamma^*(\bm{\theta})-\varepsilon'\right).
\end{equation}

Let $\Delta_k:=\theta_{(k)}-\theta_{(k+1)}>0$.
Choose times $t_{\mathrm{bdry}},t_\Delta,t_\Gamma<\infty$ such that for all $t$ large enough the following hold:

\emph{(i) Boundary correctness.}
Apply Lemma~\ref{lem:mle-hp-simple} at time $t$ with confidence $\delta_t$.
Using Lemma~\ref{lem:ctrack}\emph{(e)}, the deviation bound tends to $0$ as $t\to\infty$, so there exists
$t_{\mathrm{bdry}}$ such that for all $t\ge t_{\mathrm{bdry}}$,
\begin{equation*}
\Prob_{\bm{\theta}}\!\big(\hat S_t\neq S^*(\bm{\theta})\big)\ \le\ \delta_t.
\end{equation*}

\emph{(ii) One-time control of $\Delta_t$.}
By Lemma~\ref{lem:Delta-hp} with confidence $\delta_t$, and since
$R\sigma n\sqrt{2\log(2n/\delta_t)/t}=o(1)$, there exists $t_\Delta$ such that for all $t\ge t_\Delta$,
\begin{equation*}
\Prob_{\bm{\theta}}\!\left(\Delta_t\ge \frac{\varepsilon'}{4}\right)\ \le\ \delta_t.
\end{equation*}

\emph{(iii) One-time lower bound for $\Gamma(\hat{\bm{w}}_t^{\mathrm{emp}};\bm{\theta})$.}
Apply Proposition~\ref{prop:Gamma-hp-new} with exponent $p+2$.
Choose $t_\Gamma$ such that the error
$\mathrm{err}_t\le \varepsilon'/2$ for all $t\ge t_\Gamma$. Then for all $t\ge t_\Gamma$,
\begin{equation*}
\Prob_{\bm{\theta}}\!\left(\Gamma(\hat{\bm{w}}_t^{\mathrm{emp}};\bm{\theta})\le \Gamma^*(\bm{\theta})-\frac{\varepsilon'}{2}\right)\ \le\ \delta_t.
\end{equation*}
Set $t_*:=\max\{t_{\mathrm{bdry}},t_\Delta,t_\Gamma\}$ and fix $t\ge t_*$.
On the event
\begin{align*}
E_t:={}&
\Big\{\hat S_t=S^*(\bm{\theta})\Big\}\\
&\cap
\Big\{\Delta_t\le \varepsilon'/4\Big\}
\cap
\Big\{\Gamma(\hat{\bm{w}}_t^{\mathrm{emp}};\bm{\theta})\ge \Gamma^*(\bm{\theta})-\varepsilon'/2\Big\},
\end{align*}
Lemma~\ref{lem:glr-compare}\emph{(b)} applies at time $t$, so
\begin{align*}
\frac{1}{t}Z_{\hat S_t}(t)
&\ge \Gamma(\hat{\bm{w}}_t^{\mathrm{emp}};\bm{\theta})-2\Delta_t\\
&\ge \Big(\Gamma^*(\bm{\theta})-\frac{\varepsilon'}{2}\Big)-2\cdot\frac{\varepsilon'}{4}\\
&=\Gamma^*(\bm{\theta})-\varepsilon'.
\end{align*}
Therefore, for all $t\ge t_*$,
\begin{align*}
&\Prob_{\bm{\theta}}\!\left(\frac{1}{t}Z_{\hat S_t}(t)\le \Gamma^*(\bm{\theta})-\varepsilon'\right)\\
&\le \Prob_{\bm{\theta}}(E_t^c)\\
&\le \Prob_{\bm{\theta}}\!\big(\hat S_t\neq S^*(\bm{\theta})\big)+\Prob_{\bm{\theta}}\!\left(\Delta_t\ge \frac{\varepsilon'}{4}\right)\\
&\quad+\Prob_{\bm{\theta}}\!\left(\Gamma(\hat{\bm{w}}_t^{\mathrm{emp}};\bm{\theta})\le \Gamma^*(\bm{\theta})-\frac{\varepsilon'}{2}\right)\\
&\le 3\delta_t=3t^{-(p+2)}.
\end{align*}
Combining with \eqref{eq:tau-tail-reduce}, we obtain that for all $\delta\in(0,\delta_0)$ and all
$t\ge \max\{t_\delta,t_*\}$,
\begin{equation*}
\Prob_{\bm{\theta}}(\tau_\delta>t)\ \le\ 3t^{-(p+2)}.
\end{equation*}
Define $T_\delta:=\max\{t_\delta,t_*\}.$

\[
\begin{aligned}
\E_{\bm{\theta}}[\tau_\delta]
=\sum_{t=0}^{\infty}\Prob_{\bm{\theta}}(\tau_\delta>t)
&\le
T_\delta+\sum_{t=T_\delta}^{\infty}\Prob_{\bm{\theta}}(\tau_\delta>t)\\
&\le
T_\delta+3\sum_{t=T_\delta}^{\infty} t^{-(p+2)}\\
&\le
T_\delta+\frac{3}{p+1}(T_\delta-1)^{-(p+1)}.
\end{aligned}
\]

Divide by $\log(1/\delta)$ and take $\limsup_{\delta\to 0}$.
Since $t_\delta\to\infty$ as $\delta\to 0$ and $t_*$ is fixed, $T_\delta=t_\delta$ for all sufficiently small $\delta$,
and the second term goes to 0 after division by $\log(1/\delta)$. Therefore,
\[
\limsup_{\delta\to 0}\frac{\E_{\bm{\theta}}[\tau_\delta]}{\log(1/\delta)}
\le
\limsup_{\delta\to 0}\frac{t_\delta}{\log(1/\delta)}
=
\frac{1+\varepsilon}{\Gamma^*(\bm{\theta})}.
\]
Finally, let $\varepsilon\downarrow 0$.

\end{proof}

\section{Auxiliary Lemmas}
\subsection{C-tracking}\label{app:tracking}
\paragraph{C-tracking lemma (Garivier--Kaufmann).}
Let $\bm{p}(1),\ldots,\bm{p}(T)\in\Delta_{\mathcal P}$.
Define $\Lambda(k):=\sum_{s=1}^k \bm{p}(s)$ and $N(0)=0$.
If for each $k\in\{0,\ldots,T-1\}$,
\begin{gather*}
I_{k+1}\in\argmax_{(i,j)\in\mathcal P}\bigl(\Lambda_{ij}(k+1)-N_{ij}(k)\bigr),\\
N(k+1):=N(k)+\delta_{I_{k+1}},
\end{gather*}
then
\[
  \max_{(i,j)\in\mathcal P}\bigl|N_{ij}(T)-\Lambda_{ij}(T)\bigr|\le |\mathcal P|-1.
\]\begin{lemma}\label{lem:ctrack}
Run C-tracking with targets $\bm{p}(t)=\tilde{\bm{w}}_t$.
For $t\ge 1$, define
\[
\bar{\tilde{\bm{w}}}_t:=\frac{1}{t}\sum_{s=1}^t \tilde{\bm{w}}_s,
\qquad
\bar{\bm{w}}_t:=\frac{1}{t}\sum_{s=1}^t \bm{w}_s.
\]
Then:
\begin{enumerate}
\item[(a)]
$\displaystyle \max_{(i,j)\in\mathcal P}\big|N_{ij}(t)-P_{ij}(t)\big|\le |\mathcal P|-1.$
\item[(b)] 
$\displaystyle \bigl\|\hat{\bm{w}}_t^{\mathrm{emp}}-\bar{\tilde{\bm{w}}}_t\bigr\|_\infty \le \frac{|\mathcal P|-1}{t}.$
\item[(c)]
$\displaystyle \bigl\|\hat{\bm{w}}_t^{\mathrm{emp}}-\bar{\bm{w}}_t\bigr\|_\infty
\le \frac{|\mathcal P|-1}{t}+\frac{1}{t}\sum_{s=1}^t \rho_s.$
\item[(d)] 
for every $(i,j)\in\mathcal P$,
\(
N_{ij}(t)\ \ge\ \frac{1}{|\mathcal P|}\sum_{s=1}^t \rho_s\ -\ (|\mathcal P|-1)
\)
\item[(e)] If $\rho_t=t^{-\gamma}$ for some $\gamma\in(0,1)$, then for all $t\ge 1$,
\[
N_{\min}(t)\ \ge\ \frac{1}{|\mathcal P|}\cdot \frac{(t+1)^{1-\gamma}-1}{1-\gamma}\ -\ (|\mathcal P|-1).
\]
In particular, $N_{\min}(t)/\log t \to \infty$.

\end{enumerate}
\end{lemma}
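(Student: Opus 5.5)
The plan is to derive everything from the deterministic C-tracking lemma of Garivier--Kaufmann stated just above, applied with targets $\bm{p}(s)=\tilde{\bm{w}}_s$. Then $\Lambda(t)=\sum_{s\le t}\tilde{\bm{w}}_s=P(t)$, and the selection rule of Algorithm~\ref{alg:main} is exactly $A_t\in\argmax_{(i,j)}\bigl(P_{ij}(t)-N_{ij}(t-1)\bigr)$, since $P_{ij}(t)=P_{ij}(t-1)+\tilde w_{t,ij}$. Part (a) is therefore the cited lemma with $T=t$. For part (b), divide (a) by $t$, using $\hat w^{\mathrm{emp}}_{t,ij}=N_{ij}(t)/t$ and $\bar{\tilde w}_{t,ij}=P_{ij}(t)/t$.

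For part (c), write $\tilde{\bm{w}}_s-\bm{w}_s=\rho_s(\bm{u}-\bm{w}_s)$. Since $\bm{u}$ and $\bm{w}_s$ both lie in the simplex, each coordinate satisfies $|u_{ij}-w_{s,ij}|\le 1$. Hence $\|\bar{\tilde{\bm{w}}}_t-\bar{\bm{w}}_t\|_\infty\le \frac1t\sum_{s\le t}\rho_s$, and the triangle inequality with (b) gives (c).

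For part (d), use $\tilde w_{s,ij}\ge\rho_s u_{ij}=\rho_s/|\mathcal P|$, because $(1-\rho_s)w_{s,ij}\ge 0$. Summing gives $P_{ij}(t)\ge \frac{1}{|\mathcal P|}\sum_{s\le t}\rho_s$. Combining this with the lower direction of (a), $N_{ij}(t)\ge P_{ij}(t)-(|\mathcal P|-1)$, yields (d).

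For part (e), substitute $\rho_s=s^{-\gamma}$ and use the integral comparison $\sum_{s=1}^t s^{-\gamma}\ge\int_1^{t+1}x^{-\gamma}\,dx=\frac{(t+1)^{1-\gamma}-1}{1-\gamma}$, which holds because $x\mapsto x^{-\gamma}$ is decreasing. Taking the minimum over pairs in (d) gives the bound on $N_{\min}(t)$. The bound grows like $t^{1-\gamma}/(|\mathcal P|(1-\gamma))$ minus a constant, and $t^{1-\gamma}/\log t\to\infty$, so $N_{\min}(t)/\log t\to\infty$.

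The only point needing care is the indexing match between the algorithm's selection rule and the cited C-tracking lemma, and in particular that the lemma's bound is two-sided, so that (d) can use $N\ge\Lambda-(|\mathcal P|-1)$. Everything else is routine.
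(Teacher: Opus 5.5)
Your proposal is correct and follows the paper's proof essentially step for step: (a) from the two-sided Garivier--Kaufmann C-tracking bound, (b) by dividing by $t$, (c) via $\tilde{\bm{w}}_s-\bm{w}_s=\rho_s(\bm{u}-\bm{w}_s)$ and the triangle inequality, (d) from $\tilde w_{s,ij}\ge\rho_s/|\mathcal P|$ combined with the lower side of (a), and (e) from the integral comparison in Fact~\ref{fact:powersum}. Your explicit check that the algorithm's selection rule matches the lemma's indexing, via $P_{ij}(t-1)+\tilde w_{t,ij}=P_{ij}(t)$, is a detail the paper leaves implicit.
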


\begin{proof}
(a) Apply the Garivier--Kaufmann C-tracking lemma stated above with $T=t$ and $\bm{p}(s)=\tilde{\bm{w}}_s$.
Then $P_{ij}(t)=\Lambda_{ij}(t)$ and $N_{ij}(t)=N_{ij}(t)$, so
$\max_{(i,j)}|N_{ij}(t)-P_{ij}(t)|\le |\mathcal P|-1$.

(b) Divide (a) by $t$ and use $\bar{\tilde w}_{t,ij}=P_{ij}(t)/t$.

(c)
\[
\bar{\tilde{\bm{w}}}_t-\bar{\bm{w}}_t
=\frac{1}{t}\sum_{s=1}^t \rho_s\,(\bm{u}-\bm{w}_s),
\]
and since $\bm{u},\bm{w}_s\in\Delta_{\mathcal P}$ we have $\|\bm{u}-\bm{w}_s\|_\infty\le 1$, hence
$\|\bar{\tilde{\bm{w}}}_t-\bar{\bm{w}}_t\|_\infty\le \frac{1}{t}\sum_{s=1}^t\rho_s$.
Combine with (b) and the triangle inequality.

(d) From (a), $N_{ij}(t)\ge P_{ij}(t)-(|\mathcal P|-1)$.
Also $\tilde w_{s,ij}=(1-\rho_s)w_{s,ij}+\rho_s u_{ij}\ge \rho_s u_{ij}=\rho_s/|\mathcal P|$, hence
$P_{ij}(t)=\sum_{s=1}^t \tilde w_{s,ij}\ge \frac{1}{|\mathcal P|}\sum_{s=1}^t\rho_s$.

(e) Follows from (d) and Fact~\ref{fact:powersum}.
\end{proof}

\begin{Fact}\label{fact:powersum}
Fix $r\in(0,1)$.
For every integer $t\ge 1$,
\[
\frac{(t+1)^{1-r}-1}{1-r}\ \le\ \sum_{s=1}^t s^{-r}\ \le\ 1+\frac{t^{1-r}-1}{1-r}.
\]
And, for any integers $1\le a\le b$,
\[
\sum_{s=a}^b s^{-r}\ \le\ 1+\frac{b^{1-r}-(a-1)^{1-r}}{1-r}\ \le\ 1+\frac{b^{1-r}}{1-r}.
\]
\end{Fact}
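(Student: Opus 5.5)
Fact~\ref{fact:powersum} is the standard integral comparison for the decreasing function $f(s)=s^{-r}$ with $r\in(0,1)$, and I would prove both displayed inequalities that way. The key monotonicity facts are the following. For $s\ge 1$ and $x\in[s,s+1]$ we have $s^{-r}\ge x^{-r}$. For $x\in[s-1,s]$ with $s\ge 2$ we have $s^{-r}\le x^{-r}$.

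For the lower bound, I would sum $s^{-r}\ge\int_s^{s+1}x^{-r}\,dx$ over $s=1,\ldots,t$. This gives $\sum_{s=1}^t s^{-r}\ge\int_1^{t+1}x^{-r}\,dx=\frac{(t+1)^{1-r}-1}{1-r}$.

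For the upper bound, I would keep the $s=1$ term, which equals $1$, and bound each $s\ge 2$ term by $\int_{s-1}^s x^{-r}\,dx$. Summing gives $1+\int_1^t x^{-r}\,dx=1+\frac{t^{1-r}-1}{1-r}$. When $t=1$ the integral is empty and the bound reads $1\le 1$.

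For the partial sum over $s=a,\ldots,b$, I would argue in the same way and split into two cases.

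\emph{Case $a\ge 2$.} Every term satisfies $s^{-r}\le\int_{s-1}^s x^{-r}\,dx$. Hence
\[
\sum_{s=a}^b s^{-r}\le\int_{a-1}^b x^{-r}\,dx=\frac{b^{1-r}-(a-1)^{1-r}}{1-r},
\]
which is even smaller than the claim, since we may add the nonnegative $1$.

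\emph{Case $a=1$.} This reduces to the upper bound already shown, because $(a-1)^{1-r}=0$ and $\frac{b^{1-r}-1}{1-r}\le\frac{b^{1-r}}{1-r}$.

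The final inequality follows by dropping the nonnegative term $(a-1)^{1-r}/(1-r)$, which is legitimate since $1-r>0$.

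There is no real obstacle here. The only points needing care are the boundary cases $t=1$ and $a=1$ (where $(a-1)^{1-r}=0$ is well defined because $1-r>0$), and checking that the sign of $1-r$ keeps every direction of the inequalities correct.
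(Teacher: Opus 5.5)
Your integral-comparison argument is correct in every part, including the boundary cases $t=1$ and $a=1$ and the sign bookkeeping with $1-r>0$. The paper states this Fact without proof, implicitly relying on the standard comparison of $\sum s^{-r}$ with $\int x^{-r}\,dx$ for the decreasing function $x^{-r}$, which is exactly the route you take.
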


\subsection{Supporting Lemmas for the Proof of Proposition~\ref{prop:Gamma-hp-new}}\label{app:aux:payoff-ftrl}

\begin{lemma}\label{lem:payoff-plug}
Define
\[
D_{\max}:=\max_{(i,j)\in\mathcal P}\ \sup_{\bm{\vartheta},\bm{\vartheta}'\in\Theta} d_{ij}(\bm{\vartheta},\bm{\vartheta}'),
\quad
L:=4R\,\overline\sigma^2\sqrt2.
\]
Then for any nonempty $B\subseteq \mathcal P$, any $\bm{w}\in\Delta_{\mathcal P}$, any $\bm{q}\in\Delta_{B}$, and any $\bm{\vartheta}\in\Theta$,
\[
0\le \gamma_{ij}(\bm{w};\bm{\vartheta})\le D_{\max},
\qquad
0\le F_{\bm{\vartheta}}(\bm{w},\bm{q})\le D_{\max}.
\]
Additionally, for all $\bm{\vartheta},\bm{\vartheta}'\in\Theta$,
\[
\sup_{\bm{w}\in\Delta_{\mathcal P},\ \bm{q}\in\Delta_{B}}\big|F_{\bm{\vartheta}}(\bm{w},\bm{q})-F_{\bm{\vartheta}'}(\bm{w},\bm{q})\big|
\le L\,\|\bm{\vartheta}-\bm{\vartheta}'\|_2.
\]
\end{lemma}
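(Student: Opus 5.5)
The plan is to prove the three claims in order: finiteness of $D_{\max}$ and the two range bounds, then the Lipschitz bound. Throughout I use the Bregman form of the exponential-family KL divergence,
\[
d_{ab}(\bm{\vartheta},\bm{\vartheta}')=A(\eta')-A(\eta)-A'(\eta)(\eta'-\eta),
\qquad \eta=\eta_{ab}(\bm{\vartheta}),\ \eta'=\eta_{ab}(\bm{\vartheta}').
\]

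\textbf{Finiteness of $D_{\max}$ and the range bounds.} For $\bm{\vartheta},\bm{\vartheta}'\in\Theta$ we have $|\eta|,|\eta'|\le 2R$. Taylor's theorem with $A''\le\overline\sigma^2$ (Assumption~\ref{ass:expfam-global}) then gives
\[
d_{ab}\le \tfrac{\overline\sigma^2}{2}(\eta'-\eta)^2\le 8R^2\overline\sigma^2,
\]
so $D_{\max}<\infty$.

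Next, $D_{\bm{w}}(\bm{\vartheta}\|\bm{\vartheta}')$ is a convex combination of the $d_{ab}$, each lying in $[0,D_{\max}]$. Hence $D_{\bm{w}}\in[0,D_{\max}]$. Taking the infimum over the nonempty set $\Theta_{ij}$ keeps $\gamma_{ij}$ in $[0,D_{\max}]$. Finally, $F_{\bm{\vartheta}}(\bm{w},\bm{q})$ is a $\bm{q}$-convex combination of the $\gamma_{ij}$, so it also lies in $[0,D_{\max}]$.

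\textbf{The Lipschitz bound.} I treat the constraint sets $\Theta_{ij}$, $(i,j)\in B$, as fixed sets that do not depend on $\bm{\vartheta}$. This is the regime in which the lemma is used: $B=B(\bm{\theta})$ with the orientations $u_{ij},v_{ij}$ taken from the true $\bm{\theta}$. The argument has four steps.

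First, for a fixed set $C$ and any two functions, $|\inf_C f-\inf_C g|\le\sup_C|f-g|$. Applied to $D_{\bm{w}}(\bm{\vartheta}\|\cdot)$ and $D_{\bm{w}}(\bm{\vartheta}'\|\cdot)$ on $C=\Theta_{ij}$, this gives
\[
|\gamma_{ij}(\bm{w};\bm{\vartheta})-\gamma_{ij}(\bm{w};\bm{\vartheta}')|\le \sup_{\bm{\theta}''\in\Theta}\bigl|D_{\bm{w}}(\bm{\vartheta}\|\bm{\theta}'')-D_{\bm{w}}(\bm{\vartheta}'\|\bm{\theta}'')\bigr|.
\]

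Second, I bound the sensitivity of a single $d_{ab}$ in its first argument. With $\eta''=\eta_{ab}(\bm{\theta}'')$ fixed,
\[
\partial_\eta d_{ab}=-A''(\eta)(\eta''-\eta),
\]
and since $|\eta''-\eta|\le 4R$ this has magnitude at most $4R\overline\sigma^2$. By the chain rule, $\nabla_{\bm{\vartheta}}d_{ab}=\partial_\eta d_{ab}\,\bm{x}_{ab}$. Because $\|\bm{x}_{ab}\|_2=\sqrt2$, the gradient has norm at most $4R\overline\sigma^2\sqrt2=L$.

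Third, $\Theta$ is convex, so the segment between $\bm{\vartheta}$ and $\bm{\vartheta}'$ stays in $\Theta$. The mean value theorem along that segment then gives $|d_{ab}(\bm{\vartheta},\bm{\theta}'')-d_{ab}(\bm{\vartheta}',\bm{\theta}'')|\le L\|\bm{\vartheta}-\bm{\vartheta}'\|_2$.

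Fourth, this bound is uniform in $\bm{\theta}''$ and in $(a,b)$. Convex combinations over $\bm{w}$ and then over $\bm{q}$ preserve it. This yields the stated bound uniformly over $\bm{w}\in\Delta_{\mathcal P}$ and $\bm{q}\in\Delta_B$.

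\textbf{Main obstacle.} The only delicate point is that $\Theta_{ij}$, through the orientation $u_{ij},v_{ij}$, is defined relative to a reference parameter. If $\bm{\vartheta}$ and $\bm{\vartheta}'$ induced different orientations, the feasible sets would change and the inf-difference argument in the first step would fail. I would therefore state explicitly that the orientation is fixed by the reference $\bm{\theta}$. This matches its use in Proposition~\ref{prop:Gamma-hp-new}, where on $\mathcal M_t$ we have $\hat B_s=B(\bm{\theta})$ with the correct orientation.
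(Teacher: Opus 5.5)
Your proposal is correct and follows the paper's proof essentially step for step. You bound the first-argument derivative of $d_{ab}$ by $4R\overline\sigma^2$, pick up the factor $\|\bm{x}_{ab}\|_2=\sqrt2$, pass through $|\inf f-\inf g|\le\sup|f-g|$, average over $\bm{w}$ and then $\bm{q}$, and get the range bounds from convex combinations; your explicit bound $D_{\max}\le 8R^2\overline\sigma^2$ and your remark that the sets $\Theta_{ij}$ must be held fixed (oriented by the true $\bm{\theta}$) spell out points the paper leaves implicit, and the latter matches how the lemma is used on $\mathcal M_t$.
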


\begin{proof}
Fix $\bm{\vartheta},\bm{\vartheta}'\in\Theta$, $(a,b)\in\mathcal P$, and $\bm{\theta}''\in\Theta$, and write
$\eta=\eta_{ab}(\bm{\vartheta})$, $\eta'=\eta_{ab}(\bm{\vartheta}')$, and $\nu=\eta_{ab}(\bm{\theta}'')$.
Define $g(u):=d(u,\nu)$, where $d(u,\nu)=A(\nu)-A(u)-A'(u)(\nu-u)$.
Then $g'(u)=A''(u)(u-\nu)$.
Since $|u|,|\nu|\le 2R$ for $u=\eta_{ab}(\cdot)$ on $\Theta$, we have $|u-\nu|\le 4R$ and thus
$|g'(u)|\le 4R\,\overline\sigma^2$ for all $|u|\le 2R$.
By the mean value theorem,
\begin{align*}
|d_{ab}(\bm{\vartheta},\bm{\theta}'')-d_{ab}(\bm{\vartheta}',\bm{\theta}'')|
&=|d(\eta,\nu)-d(\eta',\nu)|\\
&\le 4R\,\overline\sigma^2\,|\eta-\eta'|\\
&\le 4R\,\overline\sigma^2\,\|\bm{x}_{ab}\|_2\,\|\bm{\vartheta}-\bm{\vartheta}'\|_2\\
&=4R\,\overline\sigma^2\sqrt2\,\|\bm{\vartheta}-\bm{\vartheta}'\|_2.
\end{align*}
Averaging over $\bm{w}$ and using $|\inf f-\inf g|\le \sup|f-g|$ yields, for each $(i,j)\in B$,
\[
\sup_{\bm{w}\in\Delta_{\mathcal P}}|\gamma_{ij}(\bm{w};\bm{\vartheta})-\gamma_{ij}(\bm{w};\bm{\vartheta}')|
\le 4R\,\overline\sigma^2\sqrt2\,\|\bm{\vartheta}-\bm{\vartheta}'\|_2.
\]
Since $F_{\bm{\vartheta}}(\bm{w},\bm{q})=\sum_{(i,j)\in B}q_{ij}\gamma_{ij}(\bm{w};\bm{\vartheta})$ with $\bm{q}\in\Delta_B$, we obtain
\[
\sup_{\bm{w},\bm{q}}\big|F_{\bm{\vartheta}}(\bm{w},\bm{q})-F_{\bm{\vartheta}'}(\bm{w},\bm{q})\big|
\le 4R\,\overline\sigma^2\sqrt2\,\|\bm{\vartheta}-\bm{\vartheta}'\|_2.
\]
Finally, $0\le d_{ij}(\bm{\vartheta},\bm{\vartheta}')\le D_{\max}$ for all $(i,j),\bm{\vartheta},\bm{\vartheta}'$ by definition, hence
$0\le D_{\bm{w}}(\bm{\vartheta}\|\bm{\vartheta}')\le D_{\max}$ and therefore $0\le \gamma_{ij}(\bm{w};\bm{\vartheta})\le D_{\max}$ and
$0\le F_{\bm{\vartheta}}(\bm{w},\bm{q})\le D_{\max}$.
\end{proof}
Lemma~\ref{lem:det-ftrl} is a known result with slight modification for our setting; see, e.g., \citet{shalev2012online,hazan2016oco,arora2012mw}. We provide the full proof here. 
\begin{lemma}[entropic-FTRL bound]\label{lem:det-ftrl}
Fix $T\ge 1$ and define $b_T:=\lceil T^{1/4}\rceil$.
Let $(\hat{\bm{g}}_t)_{t=1}^T$ be any sequence in $\R^{\mathcal P}$ with $\|\hat{\bm{g}}_t\|_\infty\le G$ for all $t$.
Let $(\mu_t)_{t\ge 1}$ be nonincreasing.
Define cumulative scores $\bm{\Psi}_0=\mathbf 0$ and $\bm{\Psi}_t=\bm{\Psi}_{t-1}+\hat{\bm{g}}_t$.
Define the exponential-weights / FTRL updates
\[
\bm{w}_t=\frac{\bm{w}_1\odot \exp(\mu_t \bm{\Psi}_{t-1})}{\langle \bm{w}_1,\exp(\mu_t \bm{\Psi}_{t-1})\rangle},
\qquad t\ge 1,
\]
where $\bm{w}_1\in\Delta_{\mathcal P}$ has full support.
Then
\[
\sup_{\bm{w}\in\Delta_{\mathcal P}}\sum_{t=b_T}^T\langle \hat{\bm{g}}_t,\bm{w}-\bm{w}_t\rangle
\le
\frac{D_{\bm{w}}(1)}{\mu_T}+\frac{G^2}{2}\sum_{t=1}^T\mu_t+2G(b_T-1),
\]
\end{lemma}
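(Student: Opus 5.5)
The plan is to prove the bound for the full sum $\sum_{t=1}^T\langle \hat{\bm{g}}_t,\bm{w}-\bm{w}_t\rangle$ by the standard entropic-FTRL argument with time-varying learning rate. Then I will pass to the truncated sum $\sum_{t=b_T}^T$ by discarding the first $b_T-1$ rounds at a cost of $2G(b_T-1)$. The truncation step is easy: for each fixed $\bm{w}$,
\[
\sum_{t=b_T}^T\langle \hat{\bm{g}}_t,\bm{w}-\bm{w}_t\rangle=\sum_{t=1}^T\langle \hat{\bm{g}}_t,\bm{w}-\bm{w}_t\rangle-\sum_{t=1}^{b_T-1}\langle \hat{\bm{g}}_t,\bm{w}-\bm{w}_t\rangle .
\]
Each removed term satisfies $|\langle \hat{\bm{g}}_t,\bm{w}-\bm{w}_t\rangle|\le \|\hat{\bm{g}}_t\|_\infty\|\bm{w}-\bm{w}_t\|_1\le 2G$. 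Taking the supremum over $\bm{w}$ then gives the additive $2G(b_T-1)$.

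For the full-horizon bound I would use the potential $\Phi_t:=\frac{1}{\mu_t}\log\langle \bm{w}_1,\exp(\mu_t\bm{\Psi}_{t-1})\rangle$, which makes $\bm{w}_t$ the gradient of the log-partition at $\mu_t\bm{\Psi}_{t-1}$. The proof has three steps.

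\emph{Step 1 (lower bound on the final potential).} By Gibbs' variational principle, for any $\bm{w}\in\Delta_{\mathcal P}$,
\[
\tfrac{1}{\mu}\log\langle\bm{w}_1,e^{\mu\bm{\Psi}}\rangle\ge\langle\bm{w},\bm{\Psi}\rangle-\tfrac1\mu\KL(\bm{w}\|\bm{w}_1).
\]
Hence $\Phi_{T+1}\ge\langle\bm{w},\bm{\Psi}_T\rangle-D_{\bm{w}}(1)/\mu_{T+1}$. To get $\mu_T$ rather than $\mu_{T+1}$, I would instead define the final potential with rate $\mu_T$, i.e.\ stop the telescoping at $\frac1{\mu_T}\log\langle \bm{w}_1,e^{\mu_T\bm{\Psi}_T}\rangle$.

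\emph{Step 2 (one-step increment at a fixed rate).} For each $t$, Hoeffding's lemma applied to the distribution $\bm{w}_t$ gives
\[
\tfrac{1}{\mu_t}\log\langle\bm{w}_1,e^{\mu_t\bm{\Psi}_t}\rangle-\tfrac{1}{\mu_t}\log\langle\bm{w}_1,e^{\mu_t\bm{\Psi}_{t-1}}\rangle=\tfrac1{\mu_t}\log\E_{\bm{w}_t}e^{\mu_t\hat g_t}\le\langle\hat{\bm{g}}_t,\bm{w}_t\rangle+\tfrac{\mu_t}{2}G^2 .
\]
The range here is $2G$, so Hoeffding gives $\mu_t(2G)^2/8=\mu_tG^2/2$.

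\emph{Step 3 (changing the rate).} This is the delicate step. I need the map $\mu\mapsto\frac1\mu\log\langle\bm{w}_1,e^{\mu\bm{\Psi}}\rangle$ to be nondecreasing in $\mu$, so that decreasing $\mu_t$ to $\mu_{t+1}$ does not increase the potential. This holds because $\frac1\mu\log\E e^{\mu X}$ is increasing in $\mu$ by Jensen/Lyapunov (it is a normalized power-mean of $e^X$). Since $\mu_t$ is nonincreasing, the telescoping chain only loses in the right direction. Summing Steps 2 and 3 from $t=1$ to $T$, starting from $\Phi_1=0$, yields
\[
\tfrac1{\mu_T}\log\langle\bm{w}_1,e^{\mu_T\bm{\Psi}_T}\rangle\le\sum_{t=1}^T\langle\hat{\bm{g}}_t,\bm{w}_t\rangle+\tfrac{G^2}{2}\sum_{t=1}^T\mu_t .
\]

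Combining this with Step 1 at rate $\mu_T$ gives $\sum_{t=1}^T\langle\hat{\bm{g}}_t,\bm{w}-\bm{w}_t\rangle\le D_{\bm{w}}(1)/\mu_T+\frac{G^2}{2}\sum_{t\le T}\mu_t$ for every $\bm{w}$. Adding the truncation cost completes the bound. The main thing to check carefully is the monotonicity in Step 3 together with the indexing convention $\bm{w}_t\propto\exp(\mu_t\bm{\Psi}_{t-1})$. I would also verify that $D_{\bm{w}}(1)=\sup_{\bm{w}}\KL(\bm{w}\|\bm{w}_1)=\max\log(1/w_{1,ab})$, which follows since KL is convex and maximized at a vertex.
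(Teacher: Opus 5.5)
Your proposal is correct and is essentially the paper's argument. It uses the same log-partition potential, with the rate change handled by Jensen; your power-mean monotonicity is exactly the paper's concavity of $x\mapsto x^{\mu_t/\mu_{t-1}}$. It also shares the one-step bound $\langle \bm{w}_t,\hat{\bm{g}}_t\rangle+\mu_t G^2/2$ (you via Hoeffding's lemma, the paper via $\psi''\le G^2$, with the same constant), the Gibbs/KL inequality at rate $\mu_T$, and the $2G(b_T-1)$ truncation cost; indexing your potential by $\bm{\Psi}_{t-1}$ instead of $\bm{\Psi}_t$ only shifts the telescoping by one step.
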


\begin{proof}
For $t\ge 0$ and $\eta>0$ define
\[
Z_t(\eta):=\sum_{(a,b)\in\mathcal P} w_{1,ab}\exp\!\big(\eta \Psi_{t,ab}\big), \quad \Phi_t:=\frac{1}{\mu_t}\log Z_t(\mu_t).
\]
We first show that for each $t\in\{1,\ldots,T\}$,
\[
\Phi_t-\Phi_{t-1}\le \langle \bm{w}_t,\hat{\bm{g}}_t\rangle+\frac{\mu_t G^2}{2}.
\]
Since $(\mu_t)$ is nonincreasing, $\mu_t\le \mu_{t-1}$ and $x\mapsto x^{\mu_t/\mu_{t-1}}$ is concave on $\R_+$, so Jensen gives
\begin{align*}
Z_{t-1}(\mu_t)
&=\sum_{(a,b)} w_{1,ab}\Big(\exp(\mu_{t-1}\Psi_{t-1,ab})\Big)^{\mu_t/\mu_{t-1}}\\
&\le Z_{t-1}(\mu_{t-1})^{\mu_t/\mu_{t-1}}.
\end{align*}
Thus
\[
\frac{1}{\mu_t}\log Z_{t-1}(\mu_t)\le \frac{1}{\mu_{t-1}}\log Z_{t-1}(\mu_{t-1})=\Phi_{t-1}.
\]
Also, since $\bm{\Psi}_t=\bm{\Psi}_{t-1}+\hat{\bm{g}}_t$,
\[
\frac{Z_t(\mu_t)}{Z_{t-1}(\mu_t)}
=\E_{X\sim \bm{w}_t}\Big[\exp(\mu_t\hat g_{t,X})\Big],
\]
so
\[
\Phi_t-\frac{1}{\mu_t}\log Z_{t-1}(\mu_t)
=\frac{1}{\mu_t}\log \E_{X\sim \bm{w}_t}\Big[\exp(\mu_t\hat g_{t,X})\Big].
\]
Let $Y:=\hat g_{t,X}\in[-G,G]$ a.s.  Define $\psi(\lambda):=\log \E[\exp(\lambda Y)]$.
Then $\psi'(0)=\E[Y]$ and $\psi''(\lambda)=\Var_{\lambda}(Y)\le G^2$ for all $\lambda$, hence
\[
\frac{1}{\mu_t}\log \E[\exp(\mu_t Y)]\le \E[Y]+\frac{\mu_tG^2}{2}
=\langle \bm{w}_t,\hat{\bm{g}}_t\rangle+\frac{\mu_tG^2}{2}.
\]
Combining gives the bound. Summing from $t=1$ to $T$ gives
\[
\Phi_T\le \sum_{t=1}^T\langle \bm{w}_t,\hat{\bm{g}}_t\rangle+\frac{G^2}{2}\sum_{t=1}^T\mu_t.
\]
Now take any $\bm{w}\in\Delta_{\mathcal P}$. Nonnegativity of KL divergence gives
\[
\langle \bm{w},\bm{\Psi}_T\rangle\le \Phi_T+\frac{1}{\mu_T}\KL(\bm{w}\|\bm{w}_1),
\]
so
\begin{align*}
\sum_{t=1}^T\langle \hat{\bm{g}}_t,\bm{w}-\bm{w}_t\rangle
&=\langle \bm{w},\bm{\Psi}_T\rangle-\sum_{t=1}^T\langle \bm{w}_t,\hat{\bm{g}}_t\rangle\\
&\le \frac{1}{\mu_T}\KL(\bm{w}\|\bm{w}_1)+\frac{G^2}{2}\sum_{t=1}^T\mu_t.
\end{align*}
Taking $\sup_{\bm{w}}$ yields
\[
\sup_{\bm{w}\in\Delta_{\mathcal P}}\sum_{t=1}^T\langle \hat{\bm{g}}_t,\bm{w}-\bm{w}_t\rangle
\le \frac{D_{\bm{w}}(1)}{\mu_T}+\frac{G^2}{2}\sum_{t=1}^T\mu_t.
\]
Finally, for each $t$, $|\langle \hat{\bm{g}}_t,\bm{w}-\bm{w}_t\rangle|\le \|\hat{\bm{g}}_t\|_\infty\|\bm{w}-\bm{w}_t\|_1\le 2G$, so removing the first $(b_T-1)$ terms gives
\[
\sup_{\bm{w}\in\Delta_{\mathcal P}}\sum_{t=b_T}^T\langle \hat{\bm{g}}_t,\bm{w}-\bm{w}_t\rangle
\le \frac{D_{\bm{w}}(1)}{\mu_T}+\frac{G^2}{2}\sum_{t=1}^T\mu_t+2G(b_T-1).
\]
\end{proof}

\section{Additional Discussion}
This section contains three supplemental discussions. Subsection~\ref{subset:rateconv} derives how quickly the stopping time converges to the lower bound as $\delta \to 0$. Subsection~\ref{subsec:boundspace} explains where the boundedness of the parameter space is used in the analysis. Subsection~\ref{subsec:alphatune} discusses the choice of the learning-rate exponent $\alpha$.

\subsection{Rate of Convergence}\label{subset:rateconv}
Here, we derive an upper bound on the convergence error of $\frac{\E_{\bm{\theta}}[\tau_\delta]}{\log(1/\delta)}$ to $\frac{1}{\Gamma^*(\bm{\theta})}$ as $\delta\to0$.

\begin{proposition}[Refinement of Theorem~\ref{thm:asymp-opt-exp}]
Let
\begin{equation*}
\rho
:=
\min\left\{
\alpha,\,
1-\alpha,\,
\gamma,\,
\frac{1-\gamma}{2}
\right\}.
\end{equation*}
Fix $\bm{\theta}\in\Theta^{\mathrm{gap}}$ and let $\tau_\delta$ be the stopping time of Algorithm~\ref{alg:main}. Then there exist constants
$C_{\bm{\theta}}<\infty$ and $\delta_0(\bm{\theta})\in(0,1)$ such that, for all
$\delta<\delta_0(\bm{\theta})$,
\begin{equation*}
\E_{\bm{\theta}}[\tau_\delta]
\le
\frac{\log(1/\delta)}{\Gamma^*(\bm{\theta})}
+
C_{\bm{\theta}}
(\log(1/\delta))^{1-\rho}
\sqrt{\log\log(1/\delta)}.
\end{equation*}
Hence,
\begin{equation*}
\frac{\E_{\bm{\theta}}[\tau_\delta]}{\log(1/\delta)}
\le
\frac{1}{\Gamma^*(\bm{\theta})}
+
\widetilde O\!\left((\log(1/\delta))^{-\rho}\right).
\end{equation*}
\end{proposition}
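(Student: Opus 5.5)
I will refine the proof of Theorem~\ref{thm:asymp-opt-exp} by replacing the fixed slack $\varepsilon$ with a $\delta$-dependent slack that balances the threshold overshoot against the finite-time error of Proposition~\ref{prop:Gamma-hp-new}.

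Write $L:=\log(1/\delta)$ and $\Gamma^*:=\Gamma^*(\bm{\theta})$. Collecting all error terms of Proposition~\ref{prop:Gamma-hp-new} (applied with exponent $p+2$), the boundedness of $\gamma$ and $\rho$ in $(0,1)$ gives a single envelope
\[
\mathrm{err}_t\le C_1\, t^{-\rho}\sqrt{\log t}\qquad (t\ge t_p).
\]
The contributing rates are $t^{-(1-\alpha)}$, $t^{-\alpha}$, $\sqrt{\log t/t}$, $t^{-(1-\gamma)/2}\sqrt{\log t}$, $t^{-\gamma}$ and $t^{-3/4}$. Each is $O(t^{-\rho}\sqrt{\log t})$, because $\rho\le 1/2<3/4$. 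Lemma~\ref{lem:Delta-hp} with $\delta_t=t^{-(p+2)}$ gives $\Delta_t\le C_2\sqrt{\log t/t}$ outside probability $\delta_t$, and this is also dominated. On the intersection $E_t$ of these events with $\{\hat S_t=S^*(\bm{\theta})\}$, Lemma~\ref{lem:glr-compare}(b) yields
\[
Z_{\hat S_t}(t)\ge t\Gamma^*-C_3 t^{1-\rho}\sqrt{\log t}.
\]

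On the threshold side, Lemma~\ref{lem:beta-gauss-envelope} gives $\beta(t,\delta)\le L+C_4+\tfrac{n-1}{2}\log(1+C_\beta t)$. Stopping is therefore guaranteed at time $t$ on $E_t$ as soon as
\[
t\Gamma^*\ge L+C_4+C_5\log t+C_3t^{1-\rho}\sqrt{\log t}.
\]
Take
\[
t_\delta:=\Big\lceil L/\Gamma^*+K L^{1-\rho}\sqrt{\log L}\Big\rceil
\]
for a large constant $K$. For $t\ge t_\delta$ with $t\le 2L/\Gamma^*$, monotonicity of $t^{1-\rho}\sqrt{\log t}$ bounds $C_3t^{1-\rho}\sqrt{\log t}$ by $C_6L^{1-\rho}\sqrt{\log L}$. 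For larger $t$, the linear term $t\Gamma^*/2$ already dominates. So with $K$ chosen so that $K\Gamma^*$ exceeds $C_6$ plus the $\log t$ contribution, the inequality holds for every $t\ge t_\delta$ once $\delta<\delta_0$. The main obstacle is exactly this step: checking that the condition holds uniformly for all $t\ge t_\delta$, and not just at $t=t_\delta$. I will handle it by splitting into $t\le 2L/\Gamma^*$ and $t>2L/\Gamma^*$, since the map $t\mapsto t\Gamma^*-C_3t^{1-\rho}\sqrt{\log t}-C_5\log t$ is eventually increasing.

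Finally, as in the proof of Theorem~\ref{thm:asymp-opt-exp}, for $t\ge T_\delta:=\max\{t_\delta,t_*\}$ we have
\[
\Prob_{\bm{\theta}}(\tau_\delta>t)\le\Prob_{\bm{\theta}}(E_t^c)\le 3t^{-(p+2)},
\]
where $t_*$ is fixed and independent of $\delta$ (it is the time after which Proposition~\ref{prop:Gamma-hp-new} and boundary stabilization hold). Summing the tail gives
\[
\E_{\bm{\theta}}[\tau_\delta]\le T_\delta+O(1)=L/\Gamma^*+O\!\big(L^{1-\rho}\sqrt{\log L}\big),
\]
which is the claimed bound with $C_{\bm{\theta}}$ absorbing $K$ and the constants. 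Dividing by $L$ gives the $\widetilde O(L^{-\rho})$ form.
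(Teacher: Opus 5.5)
Your proposal is correct and is essentially the paper's argument. It uses a single envelope $C_1t^{-\rho}\sqrt{\log t}$ for the normalized GLR deficit (from Proposition~\ref{prop:Gamma-hp-new}, Lemma~\ref{lem:Delta-hp}, and boundary correctness via Lemma~\ref{lem:glr-compare}(b)), the cutoff $t_\delta=\lceil \log(1/\delta)/\Gamma^*(\bm{\theta})+K(\log(1/\delta))^{1-\rho}\sqrt{\log\log(1/\delta)}\rceil$, and the same polynomial tail sum. The only difference is how uniformity over $t\ge t_\delta$ is handled: the paper works with $\beta(t,\delta)/t$ and uses that $\bar\beta(t,\delta)/t$ is decreasing and $t^{-\rho}\sqrt{\log t}$ is eventually decreasing, so one check at $t=t_\delta$ suffices, whereas your split at $t=2\log(1/\delta)/\Gamma^*(\bm{\theta})$ does the same job in unnormalized form.
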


\begin{proof}
We follow the proof of
Theorem~\ref{thm:asymp-opt-exp} in Appendix~\ref{app:aopt-exp}. The only
change is that the fixed slack $\varepsilon'$ used there is replaced by
the explicit rate obtained from Proposition~\ref{prop:Gamma-hp-new}.

Apply Proposition~\ref{prop:Gamma-hp-new} with $p=4$,
Lemma~\ref{lem:Delta-hp} with confidence level $t^{-4}$, and
Lemma~\ref{lem:mle-hp-simple} with confidence level $t^{-4}$. As in the
proof of Theorem~\ref{thm:asymp-opt-exp}, Lemma~\ref{lem:mle-hp-simple}
together with Lemma~\ref{lem:ctrack}\emph{(e)} gives correctness of the
estimated boundary constraints for all sufficiently large $t$ with
probability at least $1-t^{-4}$. Additionally, all error terms in
Proposition~\ref{prop:Gamma-hp-new} and Lemma~\ref{lem:Delta-hp} are
bounded, for all sufficiently large $t$, by a constant multiple of
$t^{-\rho}\sqrt{\log t}$. Therefore, by the same union-bound argument over
the three events used in Appendix~\ref{app:aopt-exp}, and by the same
application of Lemma~\ref{lem:glr-compare}\emph{(b)}, there exist
constants $C_1<\infty$ and $t^*(\bm{\theta})<\infty$ such that, for all
$t\ge t^*(\bm{\theta})$,
\begin{equation}\label{eq:stopping}
\Prob_{\bm{\theta}}\left(
\frac{1}{t}Z_{\hat S_t}(t)
\le
\Gamma^*(\bm{\theta})
-
C_1t^{-\rho}\sqrt{\log t}
\right)
\le
3t^{-4}.
\end{equation}

We now define the deterministic cutoff $t_\delta$, the candidate upper bound for the stopping time. We show that after this time, the probability that the algorithm has not stopped is at most $3t^{-4}$. First, choose $M>0$ large enough so that
\begin{equation*}
\frac{(\Gamma^*(\bm{\theta}))^2M}{4}
>
2C_1(\Gamma^*(\bm{\theta}))^\rho.
\end{equation*}
Define
\begin{equation*}
t_\delta
=
\left\lceil
\frac{\log(1/\delta)}{\Gamma^*(\bm{\theta})}
+
M(\log(1/\delta))^{1-\rho}
\sqrt{\log\log(1/\delta)}
\right\rceil.
\end{equation*}

By Lemma~\ref{lem:beta-gauss-envelope},
$\beta(t,\delta)\le \bar\beta(t,\delta)$, where
\begin{equation*}
\bar\beta(t,\delta)
=
\log(1/\delta)
+
\frac{\lambda}{2}nR^2
+
\frac{n-1}{2}\log(1+C_\beta t),
\end{equation*}
and, as used in Appendix~\ref{app:aopt-exp},
$t\mapsto \bar\beta(t,\delta)/t$ is decreasing.

Since $t_\delta\asymp \log(1/\delta)$,
\begin{equation*}
\bar\beta(t_\delta,\delta)
=
\log(1/\delta)
+
O(\log\log(1/\delta)).
\end{equation*}
Also,
\begin{equation*}
\log\log(1/\delta)
=
o\!\left(
(\log(1/\delta))^{1-\rho}
\sqrt{\log\log(1/\delta)}
\right).
\end{equation*}
Thus, for all sufficiently small $\delta$,
\[
\Gamma^*(\bm{\theta})t_\delta-\bar\beta(t_\delta,\delta)\ge
\frac{\Gamma^*(\bm{\theta})M}{2}
(\log(1/\delta))^{1-\rho}
\sqrt{\log\log(1/\delta)}.
\]
Moreover, for all sufficiently small $\delta$,
\begin{equation*}
t_\delta
\le
\frac{2\log(1/\delta)}{\Gamma^*(\bm{\theta})}.
\end{equation*}
Thus dividing by $t_\delta$ gives
\begin{equation*}
\Gamma^*(\bm{\theta})
-
\frac{\bar\beta(t_\delta,\delta)}{t_\delta}
\ge
\frac{(\Gamma^*(\bm{\theta}))^2M}{4}
(\log(1/\delta))^{-\rho}
\sqrt{\log\log(1/\delta)}.
\end{equation*}
Set
\begin{equation*}
c
:=
\frac{(\Gamma^*(\bm{\theta}))^2M}{4}.
\end{equation*}
Since $\beta(t,\delta)\le\bar\beta(t,\delta)$ and
$\bar\beta(t,\delta)/t$ is decreasing, for every $t\ge t_\delta$,
\begin{equation}\label{eq:thresstop}
\frac{\beta(t,\delta)}{t}
\le
\Gamma^*(\bm{\theta})
-
c(\log(1/\delta))^{-\rho}
\sqrt{\log\log(1/\delta)}.
\end{equation}
Since $t\mapsto t^{-\rho}\sqrt{\log t}$ is eventually decreasing and
$t_\delta\to\infty$ as $\delta\to0$, we choose $\delta_0(\bm{\theta})$ small enough so that this monotonicity holds on $[t_\delta,\infty)$ and
$t_\delta\ge t^*(\bm{\theta})$. Hence, for all $\delta<\delta_0(\bm{\theta})$ and
all $t\ge t_\delta$,
\begin{align*}
C_1t^{-\rho}\sqrt{\log t}
&\le
C_1t_\delta^{-\rho}\sqrt{\log t_\delta}\\
&\le
2C_1(\Gamma^*(\bm{\theta}))^\rho\,
(\log(1/\delta))^{-\rho}
\sqrt{\log\log(1/\delta)}.
\end{align*}
Define $c'
:=
2C_1(\Gamma^*(\bm{\theta}))^\rho.$ By the choice of $M$, $c>c'$. Therefore, combining the high-probability
GLR lower bound \eqref{eq:stopping} with the threshold bound \eqref{eq:thresstop}, and using the same argument as in Appendix~\ref{app:aopt-exp}, we get
\begin{equation*}
\Prob_{\bm{\theta}}(\tau_\delta>t)
\le
3t^{-4},
\qquad
t\ge t_\delta.
\end{equation*}
Finally, by the same tail-sum argument as in the proof of
Theorem~\ref{thm:asymp-opt-exp},
\begin{equation*}
\E_{\bm{\theta}}[\tau_\delta]
\le
t_\delta
+
3\sum_{t=t_\delta}^{\infty}t^{-4}
=
t_\delta+O(t_\delta^{-3}).
\end{equation*}
Substituting the definition of $t_\delta$ and absorbing the ceiling, the
tail term, and $M$ into $C_{\bm{\theta}}<\infty$ gives
\begin{equation*}
\E_{\bm{\theta}}[\tau_\delta]
\le
\frac{\log(1/\delta)}{\Gamma^*(\bm{\theta})}
+
C_{\bm{\theta}}
(\log(1/\delta))^{1-\rho}
\sqrt{\log\log(1/\delta)}.
\end{equation*}
Thus,
\begin{equation*}
\frac{\E_{\bm{\theta}}[\tau_\delta]}{\log(1/\delta)}
\le
\frac{1}{\Gamma^*(\bm{\theta})}
+
C_{\bm{\theta}}
(\log(1/\delta))^{-\rho}
\sqrt{\log\log(1/\delta)}.
\end{equation*}
Equivalently,
\begin{equation*}
\frac{\E_{\bm{\theta}}[\tau_\delta]}{\log(1/\delta)}
\le
\frac{1}{\Gamma^*(\bm{\theta})}
+
\widetilde O\!\left((\log(1/\delta))^{-\rho}\right).
\end{equation*}
\end{proof}

\subsection{Bounded Parameter Space Assumption}\label{subsec:boundspace}

Here, we discuss where the boundedness assumption is used. First, since every pairwise difference $\theta_i-\theta_j$ lies in $[-2R,2R]$, we get the bound
\[
a=\inf_{|\eta|\le 2R} A''(\eta)>0,
\]
which is used for the MLE concentration argument. Second, bounded $\Theta$ makes $D_{\max}$ finite, which is the Lipschitz constant that controls how much $\Gamma(\bm{w};\bm{\theta})$ can change with changes to the allocation $\bm{w}$. Third, it makes $L$ finite, which appears in
\[
\Delta F_s:=\sup_{\bm{w},\bm{q}}\bigl|F_{\hat{\bm{\theta}}_s}(\bm{w},\bm{q})-F_{\bm{\theta}}(\bm{w},\bm{q})\bigr| \le L\|\hat{\bm{\theta}}_s-\bm{\theta}\|_2,
\]
used to map estimation error to the error in the achieved information rate. Finally, in the stopping proof, boundedness gives us
\[
\Delta_t=\sup_{\bm{\theta}'\in\Theta}\bigl|\bar L_t(\bm{\theta}')-L_t(\bm{\theta}')\bigr|
\]
finite and tending to zero, and
\[
\frac{\lambda}{2}\|\hat{\bm{\theta}}_t\|_2^2 \le \frac{\lambda}{2}nR^2,
\]
which allows us to conclude that the stopping threshold grows logarithmically.

Thus, the boundedness assumption is primarily used as a tool in the proofs (also a common assumption in linear bandits), but we do not view it as particularly restrictive in practice. For example, under the Bradley--Terry model, if $R=5$, then $P(i \succ j)$ can be as high as $0.99995$. Thus in most settings, it is likely that any practically relevant utilities lie within such a bounded range.

\subsection{Hyperparameter $\alpha$ Tuning}\label{subsec:alphatune}
Here we discuss what is meant by ``the early gradient directions can be noisy'' and the tuning of $\alpha$.

There are two mechanisms contributing to this ``noise''. The first is the stochastic gradient approximation itself. The full $\bm{w}$-gradient averages over all boundary pairs in $B(\bm{\theta})$, whereas the update in Algorithm~\ref{alg:main} uses only the sampled pair $I_t$ and the corresponding alternative $\bm{\theta}_t^*$. Thus, early on, depending on which pair is sampled, the update can put too much weight on comparisons that are especially informative for ruling out that one sampled inversion, rather than on those that matter most for the full gradient. For example, if $n=4$ and $k=2$, the most challenging boundary pair is to differentiate ranks $2$ and $3$, but we may draw ranks $(1,4)$ several times early on. The resulting updates overweight comparisons useful for ruling out item $4$ overtaking item $1$, which causes the cumulative scores $\bm{\Psi}_t^{(w)}$ and $\bm{\Psi}_t^{(q)}$ to be biased early. Hence, if $\mu_t = t^{-\alpha}$ decays too quickly, later updates will take a long time to undo this bias; if it decays too slowly, then the iterates remain overly sensitive to one-step noise. This is only an issue in early rounds, since later, non-bottleneck boundary pairs receive very little mass under $\bm{q}_t$, so even if such a pair is sampled, its effect on the $\bm{w}$-update is small. This is the mechanism that leads to improved performance of the oracle problem, where $\bm{\theta}$ is known, when $\alpha$ is moderately small (smaller than $.5$).

When $\bm{\theta}$ is not known, there is a second mechanism that arises since the gradients are evaluated at $\hat{\bm{\theta}}_t$ rather than at $\bm{\theta}$. Even if we were to compute the full gradient, it could still be misaligned early, because the estimated boundary set and the associated $\bm{\theta}_{ij}^*$ may be off. When tuning $\alpha$, we therefore consider both its effect on the oracle problem and on the full problem with estimation error. Empirically, these choices are quite similar. The slower decay that helps mitigate the issue of using a stochastic gradient also gives the MLE and the estimated boundary set time to stabilize.

Figures~\ref{fig:alpha-sweep-k} and~\ref{fig:alpha-sweep-n-gap} show the mean stopping time over 100 simulations of the online oracle\footnote{This is different from the oracle of the plots in Section~\ref{sec:experiments}. Here the oracle knows $\bm{\theta}$, but still learns $\bm{w}^*$ online, so its only advantage is the lack of noise in the MLE.} and Algorithm~\ref{alg:main}, on the Equally Spaced instance with $\delta=0.01$. Figure~\ref{fig:alpha-sweep-k} fixes $n=50$, $\mathrm{gap}=0.25$ and varies $k$; Figure~\ref{fig:alpha-sweep-n-gap} fixes $k=5$ and varies $n$ and $\mathrm{gap}$. The plots demonstrate that the choice of $\alpha$ is important; values that are too large or too small can result in significantly worse performance. However, as illustrated in both figures, the algorithm generally performs well for $\alpha \in [0.15,0.3]$, and this holds robustly for other configurations as well.
\begin{figure*}[t]
\centering
\includegraphics[width=0.95\textwidth]{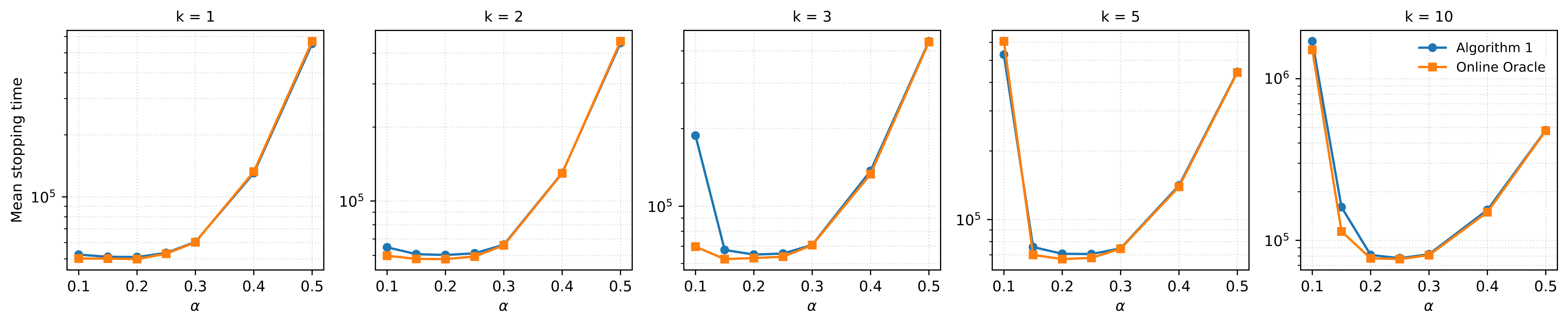}
\caption{Mean stopping time over 100 simulations as a function of the learning-rate exponent $\alpha$ for varying $k$, on the Equally Spaced instance with $n=50$, $\mathrm{gap}=0.25$, and $\delta=0.01$. Each panel compares Algorithm~\ref{alg:main} (blue) to the Online Oracle (orange) at $k\in\{1,2,3,5,10\}$.}
\label{fig:alpha-sweep-k}
\end{figure*}

\begin{figure*}[t]
\centering
\includegraphics[width=0.7\textwidth]{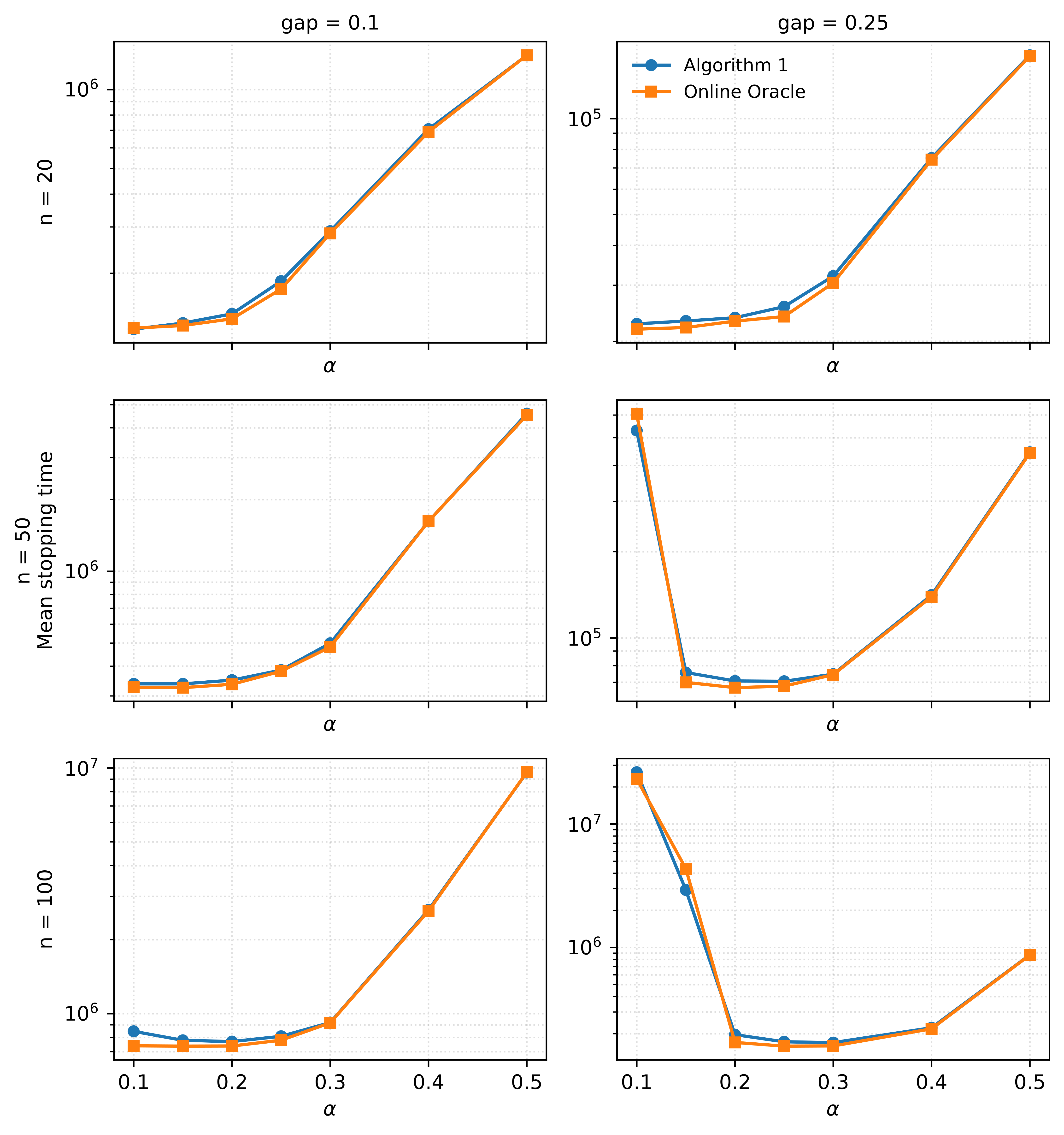}
\caption{Mean stopping time over 100 simulations as a function of the learning-rate exponent $\alpha$, on the Equally Spaced instance with $k=5$ and $\delta=0.01$. Rows vary $n\in\{20,50,100\}$; columns vary $\mathrm{gap}\in\{0.1,0.25\}$. Algorithm~\ref{alg:main} (blue) versus Online Oracle (orange). }
\label{fig:alpha-sweep-n-gap}
\end{figure*}
\end{document}